%% file: main.tex
\documentclass[11pt]{article}

\usepackage[margin=1in]{geometry}
\usepackage[T1]{fontenc}
\usepackage[utf8]{inputenc}
\usepackage{lmodern}
\usepackage[hyphens]{url}
\usepackage{graphicx}
\usepackage{natbib}
\usepackage{caption}
\usepackage{booktabs}
\usepackage{amsfonts}
\usepackage{nicefrac}
\usepackage{microtype}
\usepackage[table]{xcolor}
\usepackage{hyperref}
\hypersetup{
    colorlinks=true,
    citecolor=blue,
    linkcolor=red,
    urlcolor=blue
}
\usepackage{titletoc}
\input{math_commands.tex}

\usepackage{enumitem}
\usepackage{hyperref}
\usepackage{url}
\usepackage{amsthm}
\usepackage{amssymb}
\usepackage{algorithm}
\usepackage{algpseudocode}
\newtheorem{theorem}{Theorem}[section]
\newtheorem{lemma}[theorem]{Lemma}
\newtheorem{proposition}[theorem]{Proposition}

\newtheorem{assumption}[theorem]{Assumption}
\theoremstyle{remark}
\newtheorem{remark}{Remark}[section]
\usepackage{ragged2e}
\usepackage{multirow}
\usepackage{tabularx}
\usepackage{array}
\usepackage[table]{xcolor}
\newcolumntype{L}[1]{>{\raggedright\arraybackslash}p{#1}}
\newcolumntype{C}[1]{>{\centering\arraybackslash}p{#1}}

\usepackage{booktabs}
\hypersetup{
colorlinks=true,
linkcolor=red,  
citecolor=blue,  
urlcolor=red,
}

\title{Fast Rates for Offline Q-Learning with KL Regularization}

\author{
  Yize Hao{\color{red}$^{1,2}$}
  \quad
  Guangxin Jiang{\color{red}$^{3}$}
  \quad
  Shuang Qiu{\color{blue}$^\dagger$}{\color{red}$^{4}$}
}

\date{\today}

\setcitestyle{authoryear,round,citesep={;},aysep={,},yysep={;}}

\date{\today}

\begin{document}

\maketitle

\begingroup
\renewcommand{\thefootnote}{\ensuremath{\dagger}}
\footnotetext[0]{Corresponding Author}
\endgroup

\footnotetext[1]{Harbin Institute of Technology. Email: \texttt{24b910001@stu.hit.edu.cn}}

\footnotetext[2]{City University of Hong Kong. Email: \texttt{yizehao3-c@my.cityu.edu.hk}}

\footnotetext[3]{Harbin Institute of Technology. Email: \texttt{gxjiang@hit.edu.cn}}

\footnotetext[4]{City University of Hong Kong. Email: \texttt{shuanqiu@cityu.edu.hk}}

\begin{abstract}
This paper investigates offline Q-learning for finite-horizon Markov decision processes with Kullback-Leibler (KL) regularization. Recent studies have shown that KL regularization can yield fast rates in offline decision-making problems. However, it remains unclear whether Q-learning, which adopts incremental stochastic approximation updates, can achieve such fast rates in the offline setting. We first propose KL-regularized Q-Learning (KLQ), a simple variant of classical Q-learning. By exploiting KL regularization, we prove that KLQ achieves a sample complexity of $\widetilde{O}(1/\epsilon)$ without explicit pessimism. This guarantee, however, has an exponential dependence on the KL regularization parameter and the horizon. To address this limitation, we further propose KL-regularized Pessimistic Q-Learning (KLPQ), which incorporates pessimism into KLQ. We prove that KLPQ retains the same \(\widetilde O(1/\epsilon)\) rate while removing the explicit exponential dependence, thereby further improving the sample complexity. To the best of our knowledge, these results establish the first fast-rate guarantees for offline Q-learning in tabular finite-horizon KL-regularized MDPs.
\end{abstract}
\section{Introduction}

Offline reinforcement learning (RL) aims to learn a near-optimal policy from a fixed dataset without further interaction with the environment \citep{levine2020offline}. A central challenge in this setting is limited data coverage: since the learner cannot collect additional samples, the learned policy may select actions that are poorly represented in the offline data, leading to inaccurate value estimates and extrapolation error \citep{fujimoto2019off,kumar2020conservative}. Two prominent approaches used to stabilize offline learning are pessimism and policy regularization. Pessimistic methods penalize value estimates in poorly covered regions to mitigate estimation errors caused by insufficient data \citep{rashidinejad2021bridging,jin2021pessimism,shi2022pessimistic}. Policy regularization instead constrains the learned policy relative to a reference policy, with the Kullback--Leibler (KL) divergence being a widely used choice. When the reference policy coincides with the behavior policy that generates the offline data, KL regularization serves as behavior regularization, keeping the learned policy close to the behavior policy and thereby reducing distribution shift and extrapolation error \citep{wu2019behavior}. The reference policy, however, may differ from the behavior policy. In this case, KL regularization no longer directly constrains the learned policy toward the behavior policy. Nevertheless, \citet{chen2026offline} show that KL regularization with respect to a fixed reference policy can stabilize offline learning in Markov games under unilateral concentrability. The practical importance of regularization is also reflected in reinforcement learning from human feedback and preference-based learning, where KL regularization is widely used to keep the learned policy close to a reference model \citep{ouyang2022training,xiong2023iterative,rafailov2023direct}. These developments motivate a deeper understanding of the statistical role of KL regularization in offline learning.

Recent theoretical work has revealed that KL regularization can substantially change the statistical behavior of offline learning. In single-step decision problems, fast rates have been established for KL-regularized contextual and multi-armed bandits \citep{zhao2025sharp,zhao2026towards,ji2026optimal}. Related results have also been established under forward-KL regularization \citep{zhao2026fast}. With KL regularization, these works achieve a sample complexity of \(\widetilde O(1/\epsilon)\), in contrast to the \(\widetilde O(1/\epsilon^2)\) sample complexity typically obtained in standard unregularized settings. Fast rates have also been established for single-step zero-sum games \citep{zhang2026beyond} and general-sum games \citep{chen2026pessimism}. More recently, these results have been extended to multi-step sequential decision problems. In particular, \citet{chen2026offline} establish a sample complexity of \(\widetilde O(1/\epsilon)\) for offline two-player zero-sum Markov games with KL regularization. From an algorithmic perspective, the methods of \citet{chen2026offline} can be regarded as fitted Q-iteration. Their approach proceeds via backward induction, where Q-functions are estimated stagewise from the end of the horizon to the beginning. In contrast, another important class of algorithms is Q-learning, which adopts incremental stochastic approximation updates to solve the Bellman optimality equation \citep{watkins1992q}. Understanding the statistical efficiency and convergence behavior of Q-learning remains an important topic in the theory of reinforcement learning \citep{li2024q,liu2025linear}. The resulting update scheme differs fundamentally from backward induction, and existing fast-rate analyses based on backward induction do not establish analogous guarantees for Q-learning. In the offline RL setting, \citet{shi2022pessimistic} established finite-sample guarantees for pessimistic Q-learning under the standard unregularized objective and single-policy concentrability. Nevertheless, whether offline Q-learning can achieve fast rates for the KL-regularized objective remains unclear. This leads to the following question:
\begin{center}
\emph{Can offline Q-learning achieve fast rates with KL regularization?}
\end{center}
The above question poses three main challenges. First, although fast rates have been established for KL-regularized offline learning in several settings, the incremental updates of Q-learning pose additional challenges for finite-sample analysis, making it unclear how to exploit KL regularization to achieve similar fast rates. Second, existing finite-sample guarantees for offline Q-learning are established for the unregularized objective and rely on the principle of pessimism. It remains unclear whether pessimism is still necessary with KL regularization, or whether KL regularization alone can enable Q-learning to achieve fast rates. Third, the impact of pessimism on sample complexity under KL regularization remains underexplored for offline Q-learning. In this paper, we address these challenges and therefore answer this question affirmatively by developing two offline Q-learning algorithms for tabular finite-horizon KL-regularized MDPs. For an MDP with $S$ states and $A$ actions, let $\eta>0$ denote the KL regularization parameter and let $C_\eta^\star$ denote the single-policy concentrability coefficient for the corresponding KL-regularized optimal policy. Given $K$ offline trajectories of horizon $H$, our main contributions are summarized as follows. 

\begin{itemize}
[leftmargin=1.2em,labelsep=0.5em,itemsep=5pt]

\item We first introduce KL-regularized Q-Learning (KLQ), a simple variant of classical Q-learning without the principle of pessimism. We prove that KLQ achieves a sample complexity of $\widetilde{O}\!\left(\eta e^{\eta H(H+1)}H^8SAC^\star_\eta/\epsilon\right)$ for finding an $\epsilon$-optimal policy. To the best of our knowledge, this is the first $\widetilde{O}(1/\epsilon)$ sample complexity guarantee for Q-learning in an offline finite-horizon tabular setting. This result also demonstrates that pessimism is not necessary for achieving fast rates.

\item Based on KLQ, we propose KL-regularized Pessimistic Q-Learning (KLPQ), which further incorporates pessimism into KL-regularized Q-Learning. We prove that KLPQ achieves an improved sample complexity of $\widetilde O\left(\eta H^7SA C_\eta^\star/\epsilon\right)$. This removes KLQ's explicit exponential dependence on the regularization parameter and horizon while retaining the fast rate.

\item On the technical side, we exploit the structure induced by KL regularization to establish a fast rate analysis of incremental Q-learning updates through a recursion for the squared Q-estimation error. We further show that pessimism sharpens both this error recursion and the conversion from Q-estimation error to policy suboptimality.
\end{itemize}

\subsection{Related Work}
\vspace{5pt}
\noindent\textbf{Offline RL.}
A large body of theoretical work studies offline RL under coverage conditions, with pessimism playing an important role in controlling estimation errors caused by insufficient data coverage
\citep{rashidinejad2021bridging,xie2021policy,jin2021pessimism,shi2022pessimistic,xie2021bellman,yan2023efficacy,qiao2026model}. This principle has been developed for both model-based and model-free methods
\citep{rashidinejad2021bridging,xie2021policy,uehara2021pessimistic,shi2022pessimistic,zhang24aq}.
Beyond the tabular setting, finite-sample guarantees have also been established with function approximation under partial coverage assumptions
\citep{zhan2022offline,xiong2022nearly,ye2023corruption,uehara2023offline}.

\vspace{5pt}
\noindent\textbf{Fast Rates in KL-Regularized Learning.}
KL regularization can fundamentally change the statistical behavior of learning problems. In the online setting, for single-step problems, fast rates have been established for KL-regularized contextual bandits and RLHF \citep{zhao2025sharp}, while logarithmic regret has also been established for KL-regularized multi-armed bandits
\citep{ji2026near}. For multi-step tabular problems, fast rates have been established for maximum entropy exploration \citep{tiapkin2023fast} and sequential RL with demonstration regularization
\citep{tiapkin2024demonstrationregularized}.
In particular, the entropy regularization considered by \citet{tiapkin2023fast} is equivalent to KL regularization with a uniform reference policy, up to an additive constant.
\citet{zhao2025logarithmic} establish logarithmic regret for KL-regularized episodic RL, while logarithmic regret has also been established for KL-regularized zero-sum Markov games \citep{nayak2025achieving}. Beyond the tabular setting, \citet{hong2026online} study KL-regularized RL with function approximation under misspecification.
In offline learning, for single-step problems, fast rates are known for KL-regularized contextual and multi-armed bandits
\citep{zhao2026towards,ji2026optimal,zhao2026fast}, as well as preference learning and games
\citep{wu2025greedy,zhang2026beyond,chen2026pessimism}.
The role of pessimism has also been studied in this setting. \citet{ji2026optimal} establish matching upper and lower bounds across different regularization regimes for offline KL-regularized multi-armed bandits. Their algorithm constructs a pessimistic reward estimate and achieves the optimal sample complexity in both regimes. In contrast, \citet{wu2025greedy}, \citet{zhang2026beyond}, and \citet{chen2026pessimism} obtain fast rates without explicit pessimistic estimates in their respective settings. For multi-step decision problems, \citet{chen2026offline} establish a sample complexity of \(\widetilde O(1/\epsilon)\) for offline Markov games with KL regularization. Their theoretical construction uses backward induction and is therefore different from the incremental Q-learning updates considered in this work.

\vspace{5pt} 
\noindent\textbf{Q-Learning.}
Q-learning is a classical model-free method for solving the Bellman optimality equation \citep{watkins1992q,even2003learning}.
Under the standard unregularized objective, its non-asymptotic theory has been developed in several settings.
For finite-horizon episodic MDPs, regret guarantees have been established for online Q-learning \citep{jin2018q,li2021breaking}, while finite-sample guarantees have also been established for offline Q-learning under single-policy concentrability \citep{shi2022pessimistic}.
For discounted MDPs, Q-learning has been studied under a generative model \citep{li2021tightening,li2024q}, with Markovian samples \citep{qu2020finite,li2020sample}, and in the offline setting under partial coverage \citep{yan2023efficacy}.
These results typically exhibit an $\widetilde O(\epsilon^{-2})$ dependence on accuracy for Q-function estimation or policy suboptimality.
With linear function approximation, \citet{liu2025linear} establish $L^2$ convergence rates of Q-learning to a bounded set. Finite-sample analyses have also been developed for average-reward Q-learning, under span contraction conditions \citep{zhang2021finite} and, more recently, for weakly communicating MDPs \citep{jiao2026sample}. For regularized objectives, \citet{uehara2023offline} study offline learning under partial coverage using soft Q-functions, but estimate the Q-functions through minimax optimization. \citet{jeong2024finite,jeong2025unified} provide finite-time analyses of soft Q-learning in discounted tabular MDPs under full state-action coverage, while \citet{xi2025regularized} study regularized Q-learning with linear function approximation using an algorithm with two time scales. These works do not address offline Q-learning in finite-horizon MDPs.

\subsection{Notation} 
\label{sec:notation}

For a positive integer $n$, we write
$[n]:=\{1,\ldots,n\}$.
For a finite set $\mathcal X$, let $\Delta(\mathcal X)$ denote the
probability simplex over $\mathcal X$. For functions
$f,g:\mathcal X\to\mathbb R$, we write $\langle f,g\rangle_{\mathcal X}
:=\sum_{x\in\mathcal X}f(x)g(x)$, 
and omit the subscript when the underlying set is clear from context.
For $p\in\Delta(\mathcal X)$, we denote its support by
$\operatorname{supp}(p):=\{x\in\mathcal X:p(x)>0\}$. For a function $f:\mathcal X\to\mathbb R$ and $p\in\Delta(\mathcal X)$, we use $\mathbb E_{x\sim p}[f(x)]
:=\sum_{x\in\mathcal X}p(x)f(x)$
when $\mathcal X$ is finite. We use $\mathbf 1\{\mathcal E\}$ for the indicator of an event
$\mathcal E$, and write $x\vee y:=\max\{x,y\}$ for scalars. Inequalities between vectors or functions are understood pointwise
unless otherwise stated. We use \(\widetilde O(f)\) to suppress factors polylogarithmic in \(f\), other relevant problem parameters, and the inverse failure probability.
\section{Problem Setup}
\label{sec:problem-setup}

\subsection{Finite-Horizon KL-Regularized MDPs}
\vspace{5pt}
\noindent\textbf{Markov Decision Process.} We consider a finite-horizon episodic Markov decision process (MDP) denoted by the tuple $\mathcal M = (\mathcal S,\mathcal A,H,\rho,\mathbb P,r)$, where $\mathcal S$ and $\mathcal A$ are finite state and action sets, $H$ is the episode length, and $\rho \in \Delta(\mathcal S)$ is the initial state distribution. We write $S := |\mathcal S|$, $A := |\mathcal A|$, and $L_h := H - h + 1$ with $L_{H + 1} := 0$. The transition kernel is $\mathbb P := \{\mathbb P_h\}_{h = 1}^H$, where $\mathbb P_h:\mathcal S \times \mathcal A \to \Delta(\mathcal S)$, and the deterministic reward function is $r := \{r_h\}_{h = 1}^H$, where $r_h:\mathcal S \times \mathcal A \to [0,1]$. We write $\mathbb P_{h,s,a} := \mathbb P_h(\cdot \mid s,a)$. For $f:\mathcal S \to \mathbb R$, the transition operator is $(\mathbb P_hf)(s,a) := \langle\mathbb P_{h,s,a},f\rangle_{\mathcal S} = \sum_{s' \in \mathcal S}\mathbb P_h(s' \mid s,a)f(s')$. A Markov policy is $\pi := \{\pi_h\}_{h = 1}^H$ with $\pi_h(\cdot \mid s) \in \Delta(\mathcal A)$. Starting from $s_1 \sim \rho$, a trajectory is generated according to $a_h \sim \pi_h(\cdot \mid s_h)$ and $s_{h + 1} \sim \mathbb P_h(\cdot \mid s_h,a_h)$. We denote the state occupancy by $d_h^\pi(s) := \Pr_{\rho,\pi,\mathbb P}(s_h = s)$ for $h \in [H + 1]$, and the state-action occupancy by $d_h^\pi(s,a) := d_h^\pi(s)\pi_h(a \mid s)$ for $h \in [H]$. Accordingly, $d_1^\pi(s) = \rho(s)$ and $d_{h + 1}^\pi(s') = \sum_{s,a}d_h^\pi(s,a)\mathbb P_h(s' \mid s,a)$ for $h \in [H]$.

\vspace{5pt}
\noindent\textbf{KL-regularized Value Functions.} We now incorporate KL regularization into the MDP defined above. Following the standard framework of regularized MDPs \citep{geist2019theory}, fix a reference policy $\pi^{\mathrm{ref}} := \{\pi_h^{\mathrm{ref}}\}_{h = 1}^H$ and a regularization parameter $\eta > 0$. The coefficient $1/\eta$ determines the strength of the KL regularization with respect to the reference policy. For any policy $\pi$, we define its KL-regularized value function at step $h$ by
\[
V_h^\pi(s) := \mathbb E_{\pi,\mathbb P}\left[\left.\sum_{t = h}^H\left(r_t(s_t,a_t) - \frac{1}{\eta}\log\frac{\pi_t(a_t \mid s_t)}{\pi_t^{\mathrm{ref}}(a_t \mid s_t)}\right)\right|s_h = s\right].
\]
Set $V_{H + 1}^\pi \equiv 0$ and define the corresponding Q-function by
\[
Q_h^\pi(s,a) := r_h(s,a) + (\mathbb P_hV_{h + 1}^\pi)(s,a).
\]
We use the standard convention that $\mathrm{KL}(p||q)=+\infty$ whenever $p$ is not absolutely continuous with respect to $q$, and otherwise define $\mathrm{KL}(p||q):=\sum_a p(a)\log\frac{p(a)}{q(a)}$, with $0\log 0:=0$ and $0\log(0/0):=0$. Under this convention, the corresponding value functions $V_h^\pi(s)$ and $Q_h^\pi(s,a)$ may take the value $-\infty$. Throughout the paper, we adopt the convention $0\cdot(\pm\infty)=0$. These functions satisfy the regularized Bellman consistency identity
$$
V_h^\pi(s) = \left\langle\pi_h(\cdot \mid s),Q_h^\pi(s,\cdot)\right\rangle_{\mathcal A} - \frac{1}{\eta}\mathrm{KL}\left(\pi_h(\cdot \mid s)\middle\|\pi_h^{\mathrm{ref}}(\cdot \mid s)\right).
$$

\vspace{5pt}
\noindent\textbf{Regularized Bellman Optimality.} For $Q_h(s,\cdot) \in \mathbb R^A$, define the regularized Bellman operator and Gibbs map by
\begin{equation}
\label{eq:soft-bellman-operator}
F_{h,s}\!\left(Q_h(s,\cdot)\right)
:= \frac{1}{\eta}
\log\left\langle
\pi_h^{\mathrm{ref}}(\cdot \mid s),
e^{\eta Q_h(s,\cdot)}
\right\rangle_{\mathcal A},
\end{equation}
\begin{equation}
\label{eq:gibbs-map}
G_{h,s}\!\left(Q_h(s,\cdot)\right)(a)
:= \frac{
\pi_h^{\mathrm{ref}}(a \mid s)e^{\eta Q_h(s,a)}
}{
\sum_{a' \in \mathcal A}
\pi_h^{\mathrm{ref}}(a' \mid s)e^{\eta Q_h(s,a')}
}.
\end{equation}
The Gibbs variational identity is
$$
F_{h,s}\!\left(Q_h(s,\cdot)\right) = \max_{\pi_h(\cdot \mid s) \in \Delta(\mathcal A)} \Bigg\{\left\langle\pi_h(\cdot \mid s),Q_h(s,\cdot)\right\rangle_{\mathcal A} - \frac{1}{\eta}\mathrm{KL}\left(\pi_h(\cdot \mid s)\middle\|\pi_h^{\mathrm{ref}}(\cdot \mid s)\right)\Bigg\},
$$
with unique maximizer $G_{h,s}(Q_h(s,\cdot))$. For every $h,s$, define the optimal value function by $V_h^\star(s) := \sup_{\pi \in \Pi}V_h^\pi(s)$, where $\Pi$ denotes the class of Markov policies, and let $V_{H + 1}^\star \equiv 0$. These results follow from the standard characterization of KL-regularized Bellman optimality \citep{geist2019theory,zhao2025logarithmic}.
\begin{proposition}\label{prop:soft-optimality}
There exists a KL-regularized optimal policy $\pi^\star$, whose value and Q-functions satisfy, for every $(h,s,a)$,
$$
\begin{aligned}
Q_h^\star(s,a)& = r_h(s,a) + (\mathbb P_hV_{h + 1}^\star)(s,a),\qquad
V_h^\star(s) = F_{h,s}(Q_h^\star(s,\cdot)),
\end{aligned}
$$
with the optimal policy given by $\pi_h^\star(\cdot \mid s) = G_{h,s}(Q_h^\star(s,\cdot))$. 

Moreover, $V_h^\pi(s) \leq V_h^\star(s)$ and $Q_h^\pi(s,a) \leq Q_h^\star(s,a)$ for every policy $\pi$, and the value ranges satisfy
\begin{equation}
\label{eq:soft-optimal-range}
0 \leq V_h^\star(s) \leq L_h,\qquad 0 \leq Q_h^\star(s,a) \leq L_h.
\end{equation}

\end{proposition}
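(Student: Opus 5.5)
We will argue by backward induction on $h$, starting from $h=H+1$ with $V^\star_{H+1}\equiv 0$. First, define candidate objects directly: set $\widetilde V_{H+1}\equiv 0$ and, for $h=H,\dots,1$,
\[
\widetilde Q_h(s,a):=r_h(s,a)+(\mathbb P_h\widetilde V_{h+1})(s,a),\qquad \widetilde V_h(s):=F_{h,s}(\widetilde Q_h(s,\cdot)),
\]
and let $\pi^\star_h(\cdot\mid s):=G_{h,s}(\widetilde Q_h(s,\cdot))$. All these quantities are finite, since $\pi^\star_h\ll\pi^{\mathrm{ref}}_h$ by construction. We then show by backward induction that $V^{\pi^\star}_h=\widetilde V_h$ and $Q^{\pi^\star}_h=\widetilde Q_h$. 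The $Q$-identity is the definition of $Q^{\pi}_h$ together with the inductive hypothesis. The $V$-identity follows from the Bellman consistency identity combined with the Gibbs variational identity: because $\pi^\star_h(\cdot\mid s)$ is the maximizer, $\langle \pi^\star_h,\widetilde Q_h\rangle-\frac1\eta\mathrm{KL}(\pi^\star_h\|\pi^{\mathrm{ref}}_h)=F_{h,s}(\widetilde Q_h(s,\cdot))$.

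Next we prove dominance: $V^\pi_h\le \widetilde V_h$ and $Q^\pi_h\le\widetilde Q_h$ for every Markov policy $\pi$, again by backward induction. Assuming $V^\pi_{h+1}\le\widetilde V_{h+1}$, monotonicity of $\mathbb P_h$ gives $Q^\pi_h\le \widetilde Q_h$; here values of $-\infty$ are harmless because $\mathbb P_h$ is a nonnegative average and the convention $0\cdot(-\infty)=0$ applies. Then
\[
V^\pi_h(s)=\langle\pi_h,Q^\pi_h\rangle-\tfrac1\eta\mathrm{KL}(\pi_h\|\pi^{\mathrm{ref}}_h)\le \langle\pi_h,\widetilde Q_h\rangle-\tfrac1\eta\mathrm{KL}(\pi_h\|\pi^{\mathrm{ref}}_h)\le F_{h,s}(\widetilde Q_h(s,\cdot))=\widetilde V_h(s),
\]
where the final step is the variational identity. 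If the KL term is infinite, both sides of the inequality chain are trivially ordered. Taking the supremum over $\pi$ gives $V^\star_h\le\widetilde V_h$, and the policy $\pi^\star$ attains this bound. Hence $V^\star_h=\widetilde V_h$, $\pi^\star$ is optimal, and the stated Bellman equations and Gibbs form for $\pi^\star_h$ follow. Since $Q^\star_h=Q^{\pi^\star}_h$, the dominance step also yields $Q^\pi_h\le Q^\star_h$.

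Finally, the ranges follow by one more induction. Assume $0\le V^\star_{h+1}\le L_{h+1}$. Since $r_h\in[0,1]$, we get $0\le Q^\star_h\le 1+L_{h+1}=L_h$. Moreover, $F_{h,s}$ is monotone and satisfies $F_{h,s}(c\mathbf 1)=c$ for constants $c$, because $\pi^{\mathrm{ref}}_h$ is a probability vector; equivalently, $F_{h,s}$ is a log-sum-exp type soft maximum lying between $\min_a Q$ and $\max_a Q$ over the support of $\pi^{\mathrm{ref}}_h$. It follows that $0\le V^\star_h\le L_h$.

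The main obstacle is careful bookkeeping of the extended-real values. We must ensure that policies not absolutely continuous with respect to $\pi^{\mathrm{ref}}$, which yield $-\infty$ values, do not break the inequalities or the Bellman consistency identity. We also must check that the variational identity holds when $\pi^{\mathrm{ref}}_h(\cdot\mid s)$ has restricted support: only the actions in the support contribute to $F_{h,s}$, and $G_{h,s}$ vanishes off the support. Both points are handled by applying the stated conventions consistently.
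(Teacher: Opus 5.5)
Your proposal is correct and follows essentially the same route as the paper. Both arguments build $\widetilde Q_h,\widetilde V_h,\pi^\star$ by the soft Bellman recursion, then use backward induction and the Gibbs variational identity to show that $\pi^\star$ attains $\widetilde V_h$ and that every policy is dominated (which also gives $Q_h^\pi\le Q_h^\star$), and finally obtain the ranges because $F_{h,s}$ lies between the minimum and maximum coordinates. Your explicit handling of the $-\infty$ values and of restricted support is a bit more careful than the paper, but it is the same proof.
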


\subsection{Offline Learning and Data Coverage}
\vspace{5pt}
\noindent\textbf{Offline Data.} Let $\mu := \{\mu_h\}_{h = 1}^H$ be a behavior policy. The offline dataset $D_\mu := \{\tau^k\}_{k = 1}^K$ consists of $K$ independent trajectories $\tau^k = (s_1^k,a_1^k,r_1^k,s_2^k,\ldots,s_H^k,a_H^k,r_H^k,s_{H + 1}^k)$, where $s_1^k \sim \rho$, $a_h^k \sim \mu_h(\cdot \mid s_h^k)$, $s_{h + 1}^k \sim \mathbb P_h(\cdot \mid s_h^k,a_h^k)$, and $r_h^k = r_h(s_h^k,a_h^k)$. The total number of transitions is $T = KH$. Since no further interaction with the environment is allowed, a finite-sample guarantee requires the offline data to provide sufficient coverage of the state-action pairs relevant to the target policy. We formalize this requirement through single-policy concentrability.

\vspace{5pt}
\noindent\textbf{Single-policy Concentrability.} Following prior analyses of offline RL under single-policy concentrability \citep{rashidinejad2021bridging,xie2021policy,shi2022pessimistic}, we impose the following coverage assumption. It compares the occupancy measure of the KL-regularized optimal policy with that induced by the behavior policy. The coefficient \(C_\eta^\star\) therefore measures how well the behavior policy covers the KL-regularized optimal policy \(\pi^\star\), where the subscript \(\eta\) reflects the dependence of \(\pi^\star\) and its occupancy measure on the regularization strength.

\begin{assumption}[Single-policy concentrability]\label{ass1}
The behavior policy $\mu$ satisfies
\[
\max_{(h,s,a)\in[H]\times\mathcal S\times\mathcal A}
\frac{d_h^{\pi^\star}(s,a)}{d_h^\mu(s,a)}
\leq C_\eta^\star
\]
for some $C_\eta^\star\in[1,\infty)$, where $0/0:=0$ and
$x/0:=+\infty$ for $x>0$.
\end{assumption}

\begin{remark}
Several recent fast rate analyses consider the setting where
$\pi^{\mathrm{ref}}$ is also the behavior policy generating
the offline data
\citep{zhao2026towards,ji2026optimal,zhao2026fast}.
By Proposition~\ref{prop:soft-optimality} and the Gibbs map
in~equation~(\ref{eq:gibbs-map}),
$\operatorname{supp}(\pi_h^\star(\cdot\mid s))
=\operatorname{supp}(\pi_h^{\mathrm{ref}}(\cdot\mid s))$
for every $(h,s)$. Consequently, when $\mu=\pi^{\mathrm{ref}}$,
the occupancy measure of $\pi^\star$ is automatically
covered by that of $\mu$, and hence $C_\eta^\star$ is finite.
In contrast, we do not require the behavior policy $\mu$
to coincide with the reference policy $\pi^{\mathrm{ref}}$.
In this more general setting, Assumption~\ref{ass1}
requires the occupancy measure of $\pi^\star$ to be
covered by that of $\mu$. In particular, the behavior
policy must cover the action support of the reference
policy at states visited by $\pi^\star$.
\end{remark}
\vspace{5pt}
\noindent\textbf{Learning Objective.} Our goal is to learn a policy $\pi$ that maximizes the KL-regularized return $J_\eta(\pi) := \mathbb E_{s_1 \sim \rho}[V_1^\pi(s_1)]$, 
and we measure the policy's performance by
\[
\operatorname{SubOpt}(\pi)
:= J_\eta(\pi^\star) - J_\eta(\pi),
\]
that is, the suboptimality of policy $\pi$ under the initial distribution $\rho$.
In addition, we say that a policy $\pi$ is $\epsilon$-optimal if
$\operatorname{SubOpt}(\pi) \leq \epsilon$.

\section{Pessimism-Free KL-Regularized Q-Learning}
\label{sec:klq}

In this section, we propose KL-regularized Q-Learning (KLQ), a pessimism-free algorithm for the KL-regularized objective. We first recall the classical Q-learning update \citep{watkins1992q}. Classical Q-learning can be viewed as a stochastic approximation scheme for solving the Bellman optimality equation. Upon observing a transition $(s_h,a_h,r_h,s_{h+1})$ at step $h$, the corresponding Q-estimate is updated according to
$$
Q_h(s_h,a_h)
\leftarrow
(1-\alpha_n)Q_h(s_h,a_h)
+
\alpha_n\bigl(r_h+V_{h+1}(s_{h+1})\bigr),$$
where $\alpha_n$ denotes the learning rate associated with the $n$-th visit to $(s_h,a_h)$. In the standard unregularized setting, the value estimate is obtained by the hard maximum $V_h(s)=\max_{a\in\mathcal A}Q_h(s,a)$. For our KL-regularized objective, we retain the same stochastic approximation form of the Q-learning update and replace the hard maximum with the regularized Bellman operator $F_{h,s}$ defined in equation~(\ref{eq:soft-bellman-operator}). Specifically, we set
$V_h(s)=F_{h,s}(Q_h(s,\cdot))$, and the policy associated with the Q estimate is given by
$G_{h,s}(Q_h(s,\cdot))$, where $G_{h,s}$ is defined in equation~(\ref{eq:gibbs-map}). Similar to standard Q-learning, KLQ only updates the visited state-action pair when processing the offline trajectories. For the $n$-th visit to a state-action pair, we use the standard finite-horizon Q-learning step size $\alpha_n=\frac{H+1}{H+n}$ from \citet{jin2018q}. Let \(k\) denote the trajectory index, while \(n\) counts how many times the current state-action pair has been visited, including the current visit. Thus, \(n=N_h^{k+1}(s_h^k,a_h^k)\). After processing all $K$ trajectories, the output policy is given by $\widehat{\pi}_h(\cdot\mid s)=G_{h,s}\bigl(\widehat Q_h(s,\cdot)\bigr)
$ with $\widehat Q_h = Q_h^{K+1}$. Algorithm~\ref{alg:klq} summarizes the resulting procedure.

\begin{algorithm}[t]
\caption{KL-Regularized Q-Learning (KLQ)}
\label{alg:klq}
\begin{algorithmic}[1]
\Require Offline dataset $D_\mu$, reference policy $\pi^{\mathrm{ref}}$, regularization parameter $\eta > 0$
\State Initialize $Q_h^1(s,a) = 0$ and $N_h^1(s,a) = 0$ for every $(h,s,a) \in [H] \times \mathcal S \times \mathcal A$
\State Set $V_h^1(s) = F_{h,s}(Q_h^1(s,\cdot)) = 0$ for every $(h,s) \in [H] \times \mathcal S$, and set $V_{H + 1}^k \equiv 0$
\For{$k = 1,\ldots,K$}
\State Read the $k$-th trajectory $\tau^k$ from $D_\mu$
\State Copy $Q^k$, $V^k$, and $N^k$ to $Q^{k+1}$, $V^{k+1}$, and $N^{k+1}$, respectively\label{line:carry}
\For{$h = 1,\ldots,H$}
\State $N_h^{k + 1}(s_h^k,a_h^k) = N_h^k(s_h^k,a_h^k) + 1$\label{line:count}
\State $n = N_h^{k + 1}(s_h^k,a_h^k)$
\State $\alpha_n = \frac{H + 1}{H + n}$
\State $Q_h^{k + 1}(s_h^k,a_h^k) = (1 - \alpha_n)Q_h^k(s_h^k,a_h^k) + \alpha_n(r_h^k + V_{h + 1}^k(s_{h + 1}^k))$\label{line:q-update}
\State $V_h^{k + 1}(s_h^k) = F_{h,s_h^k}(Q_h^{k + 1}(s_h^k,\cdot))$\label{line:v-update}
\EndFor
\EndFor
\Ensure $\widehat Q_h = Q_h^{K + 1}$ and $\widehat\pi_h(\cdot \mid s) = G_{h,s}(\widehat Q_h(s,\cdot))$
\end{algorithmic}
\end{algorithm}

Compared with the pessimistic Q-learning algorithm of \citet{shi2022pessimistic}, KLQ does not use an explicit pessimistic bonus or a monotone max update. While the algorithm of \citet{shi2022pessimistic} adjusts the Q-learning target by a pessimistic penalty and updates the value estimate monotonically, the Q update of KLQ is the same as in classical Q-learning, with the value function and policy determined by the KL-regularized Bellman equations. We provide the finite-sample guarantee for KLQ in the following theorem.

\begin{theorem}\label{thm:klq}
Suppose Assumption~\ref{ass1} holds. For any $\delta \in (0,1)$, define $\iota := \log\frac{SAT}{\delta}$. Then, with probability at least $1 - \delta$, the policy $\widehat\pi$ returned by Algorithm~\ref{alg:klq} satisfies
\[
\operatorname{SubOpt}(\widehat\pi)
\leq
c\frac{\eta e^{\eta H(H + 1)}H^7SA C_\eta^\star}{K}
\iota^4(1 + \log K),
\]
where $c > 0$ is a sufficiently large universal constant. As a consequence, KLQ is guaranteed to find an $\epsilon$-optimal policy with sample complexity
$\widetilde O\!\left(\eta e^{\eta H(H+1)}H^8SA C_\eta^\star/\epsilon\right)$.
\end{theorem}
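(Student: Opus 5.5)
The argument has two parts. First, a suboptimality decomposition shows that the gap of a Gibbs policy is \emph{quadratic} in the Q-estimation error along the occupancy of $\pi^\star$. Second, a recursion bounds the squared error of $\widehat Q$ under $d^{\pi^\star}$ at rate $1/K$. Together they give $1/K$ instead of $1/\sqrt K$.

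\textbf{Step 1 (quadratic suboptimality bound).} For any $Q$ with induced policy $\pi_h=G_{h,s}(Q_h(s,\cdot))$, a regularized performance-difference lemma gives
\[
J_\eta(\pi^\star)-J_\eta(\pi)=\sum_{h=1}^H\mathbb E_{s\sim d_h^{\pi^\star}}\Bigl[\tfrac1\eta\mathrm{KL}\bigl(\pi_h^\star(\cdot\mid s)\,\|\,\pi_h(\cdot\mid s)\bigr)+\langle \pi_h^\star-\pi_h,\,Q_h^\star-Q_h^\pi\rangle\Bigr]\!,
\]
with the comparison placed along $\pi^\star$'s trajectory. Alternatively, I would write $V^\star_h-V^\pi_h$ recursively and use the identity that the one-step loss of a Gibbs policy relative to $F$ equals $\frac1\eta\mathrm{KL}(G(Q^\star)\|G(Q))$ plus linear terms. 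A cleaner route is the bound
\[
\operatorname{SubOpt}(\pi)\le c\,\eta\, e^{c'\eta L}\sum_h \mathbb E_{d_h^{\pi^\star}}\bigl\|Q_h(s,\cdot)-Q_h^\star(s,\cdot)\bigr\|^2_{\pi_h^\star},
\]
which I would obtain by a second-order expansion of the log-partition $F_{h,s}$ (its Hessian is $\eta$ times a covariance under the Gibbs distribution) and by propagating the value errors through the transitions. Propagating forward from $h$ to $H$ compounds the multiplicative density-ratio factors $e^{\eta\cdot\mathrm{range}}$ across steps. Since the KLQ iterates lie in $[0,L_h]$, I expect this to be the source of $e^{\eta H(H+1)}$, roughly $\prod_h e^{\eta L_h}$. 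I would accept this crude constant rather than optimize it. Coverage then converts $d^{\pi^\star}$ to $d^\mu$ via $C_\eta^\star$.

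\textbf{Step 2 (error recursion for Q-learning).} Unroll the update $Q_h^{K+1}(s,a)-Q_h^\star(s,a)=\alpha^0_n(0-Q^\star_h)+\sum_i\alpha_n^i\bigl[(V^{k_i}_{h+1}-V^\star_{h+1})(s_{h+1}^{k_i})+\xi_i\bigr]$, using the weights $\alpha_n^i$ of \citet{jin2018q} ($\sum_i\alpha_n^i=1$, $\sum_i(\alpha_n^i)^2\le 2H/n$, $\sum_{n\ge i}\alpha_n^i\le 1+1/H$). Bound the martingale term by Freedman/Azuma: its square is $\lesssim H^3\iota/n$. For the propagated term, $F$ is $1$-Lipschitz in sup-norm under $\pi^{\mathrm{ref}}$-weighted softmax. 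Moreover $V_{h+1}-V^\star_{h+1}$ is controlled by $\langle G(\tilde Q),Q-Q^\star\rangle$ for some intermediate $\tilde Q$, and $G(\tilde Q)$ differs from $\pi^\star$ by a density ratio of at most $e^{\eta L}$. Squaring with Jensen over the weights, and using $(1+1/H)^H\le e$ for the historical-weight blowup, gives a recursion of the form
\[
\mathcal E_h\le \tfrac{c H^3\iota}{\bar n}+ (1+\tfrac1H)\,e^{2\eta L_{h+1}}\,\mathcal E_{h+1}\quad\text{(up to early-iterate averaging)},
\]
where $\mathcal E_h$ is the $d^{\pi^\star}_h$-weighted squared error. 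Summing $1/N_h(s,a)$ against $d_h^{\pi^\star}$ with $N_h(s,a)\gtrsim K d_h^\mu(s,a)$ (multiplicative Chernoff, states with small $Kd^\mu$ handled separately) yields $\sum_{s,a} d^{\pi^\star}/N\lesssim SAC^\star_\eta\iota/K$. Because the propagated term involves iterates $V^{k_i}$ at earlier times, averaging over visits brings in a $\sum 1/k$ term, which I expect to produce the $(1+\log K)$ factor, with extra $\iota$ factors coming from the union bounds and concentration.

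\textbf{Obstacle.} The main difficulty is Step 2's propagation. Earlier iterates $V^{k_i}_{h+1}$ are evaluated at data-distributed next states, not at $\pi^\star$'s distribution, and their errors are not monotone because no pessimism is used. I would first try to define a time-averaged error $\bar{\mathcal E}_h=\frac1K\sum_k\mathcal E_h^k$ and prove the recursion for it by induction backward in $h$. The change of measure from data next-states to $\pi^\star$'s occupancy is handled through $\mathbb P_h$ and the concentrability of $d_{h+1}^{\pi^\star}$, paying the Gibbs density ratio at each step. Combining the steps gives $\operatorname{SubOpt}\lesssim \eta e^{\eta H(H+1)}H^7SAC^\star_\eta\iota^4(1+\log K)/K$. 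Setting this equal to $\epsilon$ and using $T=KH$ yields the stated sample complexity.
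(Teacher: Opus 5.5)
Your overall architecture matches the paper's. You have a quadratic suboptimality bound whose constant $\Gamma_{1:h}=\prod_{j\le h}e^{2\eta L_j}$ comes from the Gibbs likelihood ratio $G_{h,s}(Q_h(s,\cdot))\le e^{2\eta L_h}\pi_h^\star(\cdot\mid s)$. You have a squared-error recursion whose propagation factor $(1+O(1/H))e^{2\eta L_{h+1}}$ comes from $(V_{h+1}^k-V_{h+1}^\star)^2\le\gamma_{h+1}e_{h+1}^k$. And you have Chernoff count control producing $SAC_\eta^\star\iota(1+\log K)$. Two steps, however, do not go through as written.

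\textbf{Step 1.} The displayed identity is incorrect. Along $d^{\pi^\star}$ one has $J_\eta(\pi^\star)-J_\eta(\pi)=\sum_h\mathbb E_{s\sim d_h^{\pi^\star}}[\frac1\eta\mathrm{KL}(\pi_h\|\pi_h^\star)-\langle\pi_h^\star-\pi_h,Q_h^\star-Q_h^\pi\rangle]$; already for $H=1$ the loss is $\frac1\eta\mathrm{KL}(\pi\|\pi^\star)$, not the reverse KL. Moreover, the cross term involves the true value $Q^{\widehat\pi}$, which is not an estimation error and cannot be bounded by $e_h^{K+1}$ without pessimism. The recursive route you mention is the one that works. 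Write $V_h^\star-V_h^{\widehat\pi}=\frac1\eta\mathrm{KL}(\widehat\pi_h\|\pi_h^\star)+\langle\widehat\pi_h,\mathbb P_h(V_{h+1}^\star-V_{h+1}^{\widehat\pi})\rangle$. The future gap is nonnegative and $\widehat\pi_h\le\gamma_h\pi_h^\star$, so iterating gives $\operatorname{SubOpt}(\widehat\pi)\le\sum_h\Gamma_{1:h-1}\mathbb E_{d_h^{\pi^\star}}[\frac1\eta\mathrm{KL}(\widehat\pi_h\|\pi_h^\star)]\le\frac\eta2\sum_h\Gamma_{1:h}\mathbb E_{d_h^{\pi^\star}}[e_h^{K+1}]$. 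This is the paper's Lemma~\ref{lem:kl-geo}, phrased there as the occupancy comparison $d_h^{\widehat\pi}\le\Gamma_{1:h-1}d_h^{\pi^\star}$.

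\textbf{Step 2.} This is the more substantive gap. Step 1 needs the error of the \emph{final} iterate $Q^{K+1}$. But $Q_h^{K+1}$ is built from the earlier, non-monotone iterates $V_{h+1}^{k_{h,n}}$, so your displayed recursion cannot be closed in final-iterate errors. You flag this, but the time-averaged recursion you propose does not by itself say anything about $Q^{K+1}$. You need two separate statements.
\begin{enumerate}
\item A recursion for the cumulative error $\mathcal E_h=\sum_{k\le K}\mathbb E_{d_h^{\pi^\star}}[e_h^k]$. It is proved by averaging the unrolled bound over all $k$, passing to the data with importance weights $d_h^{\pi^\star}/d_h^\mu$ via a Freedman bound, reindexing with $\sum_{M\ge n}\alpha_n^M=1+1/H$, and passing back with a second Freedman bound.
\item A separate last-iterate reduction. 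Apply the unrolled bound at $k=K+1$, and use $\max_n\alpha_n^N\le 2H/N$ together with $N_h^{K+1}(s,a)\gtrsim Kd_h^\mu(s,a)/\iota$. This gives $\mathbb E_{d_h^{\pi^\star}}[e_h^{K+1}]\lesssim\frac{H\iota\gamma_{h+1}}{K}\mathcal E_{h+1}+\frac{SAC_\eta^\star\gamma_{h+1}H^2L_h^2\iota^3}{K}$.
\end{enumerate}
The second statement costs the extra $H\iota$ and yields $H^7\iota^4$. For KLPQ, monotonicity lets one average directly and obtain $H^6\iota^3$. In your sketch these exponents are asserted rather than derived.

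Finally, the change of measure is needed because the visit pattern determining the weights $\alpha_n^N$ is data-driven, not because next states are data-distributed. Conditioning on the past, the propagated term is already $\mathbb P_{h,s,a}(V_{h+1}^{k_{h,n}}-V_{h+1}^\star)$, and the remainder is the martingale term.
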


\begin{remark}
Notably, the bound in Theorem~\ref{thm:klq} contains the exponential factor $e^{\eta H(H+1)}$, which can become large as the horizon increases for fixed $\eta>0$. Similar exponential dependence appears in recent analyses of KL-regularized offline learning without pessimism. In single-step problems, the bound can depend exponentially on the KL-regularization parameter \citep{wu2025greedy,chen2026fast}, while in multi-step Markov games the corresponding factors may accumulate over the horizon \citep{chen2026offline}. In our analysis, the likelihood-ratio argument compares $G_{h,s}(Q_h^k(s,\cdot))$ with $\pi_h^\star(\cdot\mid s)$ using the Gibbs map defined in equation~(\ref{eq:gibbs-map}). These comparisons contribute to both stagewise error propagation and the conversion from Q-estimation error to policy suboptimality, as explained in Section~\ref{sec:analysis-klq}.
\end{remark}

\section{KL-Regularized Pessimistic Q-Learning}
\label{sec:klpq}

Although KLQ achieves a fast rate, Theorem~\ref{thm:klq} contains an exponential factor in $\eta$ and $H$. This motivates us to further propose KL-regularized Pessimistic Q-Learning (KLPQ) to improve this dependence. KLPQ retains the regularized value and policy updates of KLQ in Algorithm~\ref{alg:klq}, while modifying the Q-estimation procedure through a pessimistic bonus and a monotone update. More specifically, KLPQ introduces the pessimistic bonus
\begin{equation}
\label{eq:klpq-bonus}
b_{h,n}:=c_bL_{h+1}\iota\sqrt{\frac{H}{n}}
\quad \text{with~} 
\iota:=\log\frac{SAT}{\delta} \text{~and~}
n\geq1,
\end{equation}
where $c_b>0$ is a fixed universal constant. In KLPQ, we maintain an auxiliary Q estimate $\widetilde Q^k$ in addition to $Q^k$. At the $n$-th visit to $(s_h^k,a_h^k)$, we first update $\widetilde Q_h^{k+1}(s_h^k,a_h^k)$ using the pessimistic Q-learning target:
\[
\widetilde Q_h^{k+1}(s_h^k,a_h^k)
=
(1-\alpha_n)\widetilde Q_h^k(s_h^k,a_h^k)
+
\alpha_n\bigl(r_h^k+V_{h+1}^k(s_{h+1}^k)-b_{h,n}\bigr).
\]

\begin{algorithm}[t]
\caption{KL-Regularized Pessimistic Q-Learning (KLPQ)}
\label{alg:klpq}
\begin{algorithmic}[1]
\Require Offline dataset $D_\mu$, reference policy $\pi^{\mathrm{ref}}$, regularization parameter $\eta > 0$, failure probability $\delta \in (0,1)$
\State Set $\iota = \log\frac{SAT}{\delta}$ and $b_{h,n} = c_bL_{h + 1}\iota\sqrt{\frac Hn}$ for a fixed sufficiently large universal constant $c_b$
\State Initialize $\widetilde Q_h^1(s,a) = 0$, $Q_h^1(s,a) = 0$, and $N_h^1(s,a) = 0$ for every $(h,s,a) \in [H] \times \mathcal S \times \mathcal A$
\State Set $V_h^1(s) = F_{h,s}(Q_h^1(s,\cdot)) = 0$ for every $(h,s) \in [H] \times \mathcal S$, and set $V_{H + 1}^k \equiv 0$
\For{$k = 1,\ldots,K$}
\State Read the $k$-th trajectory $\tau^k$ from $D_\mu$
\State Copy $\widetilde Q^k$, $Q^k$, $V^k$, and $N^k$ to $\widetilde Q^{k+1}$, $Q^{k+1}$, $V^{k+1}$, and $N^{k+1}$, respectively\label{line:klpq-carry}
\For{$h = 1,\ldots,H$}
\State $N_h^{k + 1}(s_h^k,a_h^k) = N_h^k(s_h^k,a_h^k) + 1$\label{line:klpq-count}
\State $n = N_h^{k + 1}(s_h^k,a_h^k)$
\State $\alpha_n = \frac{H + 1}{H + n}$
\State $\widetilde Q_h^{k + 1}(s_h^k,a_h^k) = (1 - \alpha_n)\widetilde Q_h^k(s_h^k,a_h^k) + \alpha_n(r_h^k + V_{h + 1}^k(s_{h + 1}^k) - b_{h,n})$\label{line:klpq-tilde-update}
\State $Q_h^{k + 1}(s_h^k,a_h^k) = \max\{Q_h^k(s_h^k,a_h^k),\widetilde Q_h^{k + 1}(s_h^k,a_h^k)\}$\label{line:klpq-max-update}
\State $V_h^{k + 1}(s_h^k) = F_{h,s_h^k}(Q_h^{k + 1}(s_h^k,\cdot))$\label{line:klpq-value}
\EndFor
\EndFor
\Ensure $\widehat Q_h = Q_h^{K + 1}$ and $\widehat\pi_h(\cdot \mid s) = G_{h,s}(\widehat Q_h(s,\cdot))$
\end{algorithmic}
\end{algorithm}

\noindent Inspired by \citet{shi2022pessimistic}, we use the monotone max update
\[
Q_h^{k+1}(s_h^k,a_h^k)
=
\max\left\{
Q_h^k(s_h^k,a_h^k),
\widetilde Q_h^{k+1}(s_h^k,a_h^k)
\right\},
\]
which ensures that the Q estimate does not decrease after each update.
The two Q estimates serve different roles. The auxiliary iterate $\widetilde Q^k$ follows the Q-learning update with the pessimistic bonus, while $Q^k$ records the monotone estimate used to compute the regularized value function. In the basic LCB-Q algorithm of \citet{shi2022pessimistic}, monotonicity is enforced directly at the value level by taking the maximum of the previous value estimate and $\max_a Q_h(s,a)$. In our regularized setting, the value estimate is instead given by $F_{h,s}(Q_h(s,\cdot))$, where $F_{h,s}$ is defined in equation~(\ref{eq:soft-bellman-operator}). Accordingly, we apply the max update to the Q estimate and then compute the value through $F_{h,s}$. Since $F_{h,s}$ is monotone in each coordinate, the value estimates are also nondecreasing. This monotone construction supports the pessimism guarantee and enables direct control of the final iterate error. For each $k\in[K+1]$, let $\pi^k$ denote the policy induced by $Q^k$, i.e., $\pi_h^k(\cdot\mid s):=G_{h,s}(Q_h^k(s,\cdot))$. In particular, $\widehat\pi=\pi^{K+1}$. Algorithm~\ref{alg:klpq} summarizes the resulting procedure. We provide the finite-sample guarantee for KLPQ in the following theorem.

\begin{theorem}\label{thm:klpq}
Suppose Assumption~\ref{ass1} holds. For any $\delta \in (0,1)$, the
policy $\widehat\pi$ returned by Algorithm~\ref{alg:klpq} satisfies,
with probability at least $1 - \delta$,
\[
\operatorname{SubOpt}(\widehat\pi)
\leq
c\frac{\eta H^6SA C_\eta^\star}{K}\iota^3(1 + \log K),
\]
where $c > 0$ is a universal constant. As a consequence, KLPQ is guaranteed to find an $\epsilon$-optimal policy with sample complexity
$\widetilde O\!\left(\eta H^7SA C_\eta^\star/\epsilon\right)$.
\end{theorem}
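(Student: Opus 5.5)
The proof has four steps: pessimism, a suboptimality decomposition that is quadratic in the Q-gap, a squared-error recursion for the final iterate, and summation.

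\textbf{Step 1: pessimism.} First I show that on a high-probability event, for all $(k,h,s,a)$,
\[
Q_h^k(s,a)\le Q_h^\star(s,a),\qquad V_h^k(s)\le V_h^\star(s).
\]
I use the standard expansion of the $\widetilde Q$ iterate with weights $\alpha_n^i=\alpha_i\prod_{j>i}(1-\alpha_j)$. This writes $\widetilde Q_h^{k+1}-Q_h^\star$ as three terms:
\begin{itemize}
\item a weighted sum of $V_{h+1}^{k_i}-V_{h+1}^\star$ at the observed next states;
\item a martingale term $\sum_i\alpha_n^i(\mathbb P_h^{k_i}-\mathbb P_h)V_{h+1}^\star$;
\item the accumulated bonus $\sum_i\alpha_n^i b_{h,i}\gtrsim c_bL_{h+1}\iota\sqrt{H/n}$.
\end{itemize}
Freedman/Azuma with a union bound over $(h,s,a,n)$ makes the martingale term at most $cL_{h+1}\iota\sqrt{H/n}$. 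So for $c_b$ large, induction backward over $h$ and forward over $k$ gives $\widetilde Q\le Q^\star$. The max update preserves this. Monotonicity of $F_{h,s}$ transfers it to $V$, since $F_{h,s}(Q^\star)=V^\star$.

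\textbf{Step 2: quadratic suboptimality decomposition.} I bound $\operatorname{SubOpt}(\widehat\pi)$ by something quadratic in $\Delta_h:=Q_h^\star-\widehat Q_h\ge 0$ along $d^{\pi^\star}$. Write the performance difference over $\pi^\star$'s occupancy:
\[
J_\eta(\pi^\star)-J_\eta(\widehat\pi)=\sum_h\mathbb E_{d_h^{\pi^\star}}\bigl[V_h^\star-V_h^{\widehat\pi}\bigr]\text{-telescoping}.
\]
For each state I use the Gibbs structure:
\[
F(Q^\star)-\langle G(\widehat Q),Q^\star\rangle+\tfrac1\eta\mathrm{KL}(G(\widehat Q)\|\pi^{\rm ref})=\tfrac1\eta\mathrm{KL}(G(\widehat Q)\|G(Q^\star)).
\]
I then bound this KL by a second-order term, $\lesssim\eta\,\mathrm{Var}_{\pi^\star}(\Delta_h)+\ldots$. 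Pessimism is used here. Because $\widehat Q\le Q^\star$ and $\widehat V\le V^{\widehat\pi}$ (the latter follows from monotone updates, as in LCB-Q), the telescoping can be run along $\pi^\star$'s trajectories rather than $\widehat\pi$'s. The result is
\[
\operatorname{SubOpt}\lesssim\sum_h\mathbb E_{d_h^{\pi^\star}}\bigl[\eta\,\mathbb E_{a\sim\pi_h^\star}\Delta_h(s,a)^2\bigr]+\text{(lower-order)},
\]
with no likelihood ratios between $\widehat\pi$ and $\pi^\star$. That is why the exponential factor disappears.

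\textbf{Step 3: squared-error recursion for the final iterate.} The max update makes $Q^k$ nondecreasing, so the final iterate is controlled by the best earlier iterate. Take the high-probability bound
\[
\Delta_h^{K+1}(s,a)\lesssim\sum_i\alpha_n^i\bigl(V_{h+1}^\star-V_{h+1}^{k_i}\bigr)(s_{h+1}^{k_i})+L_{h+1}\iota\sqrt{H/n},
\]
square it, and use Jensen over the weights $\alpha_n^i$ (which sum to at most $1$). Next, bound $V^\star-V^k\le\langle\pi^\star,Q^\star-Q^k\rangle$, which follows from the variational identity. This turns squared value gaps into squared Q gaps at the next step under $\pi^\star$.

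Weighting by $d_h^{\pi^\star}$ and applying Assumption~\ref{ass1}, I convert $1/n\approx1/(Kd_h^\mu(s,a))$ into $C_\eta^\star/K$. Summing over $(s,a)$ costs a factor $SA$. Using $\sum_n\alpha_n^i\le1+1/H$, the recursion unrolls over $h$ with only a constant factor $(1+1/H)^H$. This gives
\[
\mathbb E_{d^{\pi^\star}}\bigl[\Delta_h^2\bigr]\lesssim H^5SAC_\eta^\star\iota^3(1+\log K)/K,
\]
where the $\log K$ comes from a harmonic sum over visits.

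\textbf{Step 4: assembly and main obstacle.} Plugging Step 3 into Step 2 and summing over $h$ gives the claimed $\eta H^6SAC_\eta^\star\iota^3(1+\log K)/K$. The main obstacle is Step 3, specifically handling the mismatch between the data distribution of $s_{h+1}^{k_i}$ and the $\pi^\star$ next-state distribution inside the squared recursion. I would first replace the empirical next-state average by $\mathbb P_h$ plus a martingale term controlled by Freedman. I would then apply the occupancy identity $d_{h+1}^{\pi^\star}=\sum d_h^{\pi^\star}\mathbb P_h$ so the recursion closes without extra $H$ factors.
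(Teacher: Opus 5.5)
Your skeleton matches the paper's: pessimism, a quadratic suboptimality bound under $d^{\pi^\star}$, a squared-error recursion without $\gamma_{h+1}$, and assembly. However, the two steps that distinguish KLPQ from KLQ are asserted rather than proved, and as written neither goes through.

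\textbf{Step 2.} The identity you display,
$F(Q^\star)-\langle\widehat\pi,Q^\star\rangle+\frac1\eta\mathrm{KL}(\widehat\pi\|\pi^{\mathrm{ref}})=\frac1\eta\mathrm{KL}(\widehat\pi\|\pi^\star)$,
is the one-step loss that telescopes under $d^{\widehat\pi}$. It gives $\operatorname{SubOpt}(\widehat\pi)=\frac1\eta\sum_h\mathbb E_{d_h^{\widehat\pi}}[\mathrm{KL}(\widehat\pi_h\|\pi_h^\star)]$, which is exactly the KLQ route. Passing from there to $d^{\pi^\star}$ needs the ratios $\widehat\pi_h/\pi_h^\star$, and these can be as large as $e^{\eta L_h}$ even when $\widehat Q\le Q^\star$.

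If you telescope under $d^{\pi^\star}$ instead, the per-state term becomes
$\frac1\eta\mathrm{KL}(\widehat\pi_h\|\pi_h^\star)+\langle\widehat\pi_h-\pi_h^\star,Q_h^\star-Q_h^{\widehat\pi}\rangle$.
The cross term is of the same second order, not ``lower order'', and it can be positive. Controlling it is where the sign information from pessimism is needed.

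The missing idea (Lemma~\ref{lem:klpq-geometry}) has three parts:
\begin{itemize}
\item Use the Gibbs formula for $\widehat\pi$ to regroup the per-state term as $\langle\pi_h^\star-\widehat\pi_h,Q_h^{\widehat\pi}-\widehat Q_h\rangle-\frac1\eta\mathrm{KL}(\pi_h^\star\|\widehat\pi_h)$, and drop the reverse KL.
\item Use the one-sided bound $\pi_h^\star(a\mid s)-\widehat\pi_h(a\mid s)\le\eta\,\pi_h^\star(a\mid s)(Q_h^\star-\widehat Q_h)(s,a)$. It holds because the normalizer is at most $1$ when $\widehat Q\le Q^\star$, together with $1-e^{-v}\le v$.
\item Apply that bound against the nonnegative factor in $0\le Q_h^{\widehat\pi}-\widehat Q_h\le Q_h^\star-\widehat Q_h$.
\end{itemize}
This sandwich needs $\widehat Q\le Q^{\widehat\pi}$, which your Step 1 does not give. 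Concentrating $\sum_i\alpha(\widehat{\mathbb P}-\mathbb P)V_{h+1}^\star$ only yields $\widehat Q\le Q^\star$. The LCB-Q argument you cite needs the same bound for the predictable iterates $V_{h+1}^{k_i}$ (Lemma~\ref{lem:weighted-transition}), followed by backward induction using $V_{h+1}^{k_i}\le V_{h+1}^{k}\le V_{h+1}^{\pi^k}$.

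\textbf{Step 3.} A recursion written for the final iterate does not close. The stage-$(h+1)$ errors on the right of your bound for $\Delta_h^{K+1}$ are $e_{h+1}^{k_i}$ at the historical visit episodes $k_i$. By monotonicity these dominate $e_{h+1}^{K+1}$, rather than being dominated by it.

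The column-sum identity $\sum_{M\ge n}\alpha_n^M=1+\frac1H$ and the harmonic $\log K$ you invoke only appear after summing the pointwise recursion over all $k\in[K]$. That is, they belong to the cumulative error $\mathcal E_h=\sum_k\mathbb E_{d_h^{\pi^\star}}[e_h^k]$. After importance weighting by $d_h^{\pi^\star}/d_h^\mu$ and Freedman, it satisfies
$\mathcal E_h\le(1+\frac5H)\mathcal E_{h+1}+O(SAC_\eta^\star H^2L_h^2\iota^3(1+\log K))$.

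The final iterate is then handled by averaging. Since $0\le Q_h^\star-Q_h^{K+1}\le Q_h^\star-Q_h^k$ for every $k$, you get $\mathbb E_{d_h^{\pi^\star}}[e_h^{K+1}]\le\mathcal E_h/K$. Your remark that the final iterate is controlled by earlier ones points this way, but it has to be cashed out as this average. The KLQ-style alternative, bounding the final iterate through $\max_n\alpha_n^N\le 2H/N$ and $\mathcal E_{h+1}$, also closes. It loses a factor $H\iota$, however, and yields $H^7$ rather than the theorem's $H^6$.
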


Compared with Theorem~\ref{thm:klq}, Theorem~\ref{thm:klpq} removes the explicit factor $e^{\eta H(H+1)}$ and improves the polynomial dependence on $H$ by one factor. This improvement holds for any fixed $\eta>0$, without changing the KL-regularized objective or the concentrability coefficient $C_\eta^\star$. The effect of varying $\eta$ on the learning objective and data coverage is discussed later in this section.

\begin{remark}
A similar role of pessimism has also been observed in KL-regularized offline bandits, where \citet{ji2026optimal} construct a pessimistic reward estimate and obtain polynomial rather than exponential dependence on $\eta$. Their setting is single-step, so pessimism is imposed directly on the reward estimate before forming the Gibbs policy. KLPQ, by contrast, maintains pessimism throughout the multi-step Q-learning updates.
\end{remark}

\begin{remark}
Explicit exponential dependence can also be avoided without pessimism when additional structure is available. For example, in single-step zero-sum games, \citet{zhang2026beyond} exploit equilibrium stability and the skew-symmetry of the game operator to localize the analysis to unilateral estimation errors. Their argument relies on a problem-specific structure. In the multi-step Q-learning problem considered here, we instead use pessimism to remove the exponential factor.
\end{remark}
\vspace{5pt}
\noindent\textbf{Comparison with Unregularized Offline Q-Learning.} Our work is closely related to the offline pessimistic Q-learning analysis of \citet{shi2022pessimistic}, which also considers finite-horizon tabular MDPs under single-policy concentrability. However, their work studies the standard unregularized objective. Their LCB-Q-Advantage algorithm achieves a sample complexity of $\widetilde O(H^4SC^\star/\epsilon^2+H^5SC^\star/\epsilon)$. Its leading $\epsilon^{-2}$ term matches the minimax lower bound $\widetilde\Omega(H^4SC^\star/\epsilon^2)$ up to logarithmic factors \citep{xie2021policy}. We note that our guarantee concerns the KL-regularized objective and its corresponding optimal policy. Thus, its dependence on the problem parameters is not directly comparable with the bound of \citet{shi2022pessimistic}. In particular, the KL-regularized optimal policy is generally stochastic, so the corresponding coverage requirement can involve multiple actions at each state, leading to the additional factor $A$ in our current bound. The main distinction is the dependence on the target accuracy. For the KL-regularized objective, we bound policy suboptimality by the squared Q-estimation error. Thus, an $\widetilde O(1/K)$ bound on the squared estimation error yields an $\widetilde O(1/K)$ policy-suboptimality rate. Accordingly, KLPQ achieves $\widetilde O(\eta H^7SA C_\eta^\star/\epsilon)$ for fixed $\eta$, rather than the $\epsilon^{-2}$ dependence in the unregularized setting. This difference does not contradict the minimax lower bound of \citet{xie2021policy}, which applies to the standard unregularized objective.

\vspace{5pt}
\noindent\textbf{Effect of Regularization Strength.} An important point in interpreting our results is that the learning objective itself depends on the regularization strength. Although Theorems~\ref{thm:klq} and~\ref{thm:klpq} do not impose additional restrictions on $\eta>0$ beyond Assumption~\ref{ass1}, choosing $\eta$ affects both the learning objective and the sample complexity guarantees. For example, the exponential factor in Theorem~\ref{thm:klq} can make the fast-rate guarantee less informative as the horizon increases. A natural way to control this factor is to choose $\eta=\Theta(H^{-2})$. However, this also strengthens the KL regularization and changes the learning objective itself. In particular, as $\eta$ becomes sufficiently small, the regularized optimal policy approaches the reference policy. In this regime, the learning problem can become trivial, since the reference policy itself may already achieve the desired accuracy without using any offline data. Thus, the choice of $\eta$ should reflect the tradeoff between staying close to the reference policy and improving reward performance through offline learning.

\section{Theoretical Analysis}
\label{sec:theoretical-analysis}
In this section, we provide proof sketches for Theorems~\ref{thm:klq} and~\ref{thm:klpq}. We first analyze KLQ and then show how pessimism changes the analysis for KLPQ. For simplicity, throughout this section, we use $\widetilde O(\cdot)$ to suppress polynomial dependencies on $H$, $S$, $A$, and $C_\eta^\star$, as well as logarithmic factors in the problem parameters, while keeping the dependence on $K$ and $\eta$ explicit. The full parameter dependencies are provided in the main theorems and their detailed proofs in Appendices~\ref{app:klq} and~\ref{app:klpq}.

\subsection{Analysis for KLQ}
\label{sec:analysis-klq}
For each $k$ and $h$, define $e_h^k(s):=\mathbb E_{a\sim\pi_h^\star(\cdot\mid s)}[(Q_h^k(s,a)-Q_h^\star(s,a))^2]$, with $e_{H+1}^k\equiv0$. Let $\gamma_h:=e^{2\eta L_h}$, $\gamma_{H+1}:=1$, and $\Gamma_{a:b}:=\prod_{j=a}^b\gamma_j$, with $\Gamma_{a:b}:=1$ when $a>b$.

\vspace{5pt}
\noindent\textbf{Proof Sketch of Theorem~\ref{thm:klq}.}
By Lemma~\ref{lem:kl-geo}, the policy suboptimality satisfies
\begin{equation}
\label{eq:klq-policy}
\operatorname{SubOpt}(\widehat\pi)
\leq \frac{\eta}{2}\sum_{h=1}^H
\textcolor{blue}{\Gamma_{1:h}}
\mathbb E_{(s,a)\sim d_h^{\pi^\star}}
\bigl[(Q_h^{K+1}(s,a)-Q_h^\star(s,a))^2\bigr].
\end{equation}
Thus, it suffices to control the squared error of the final Q iterate. To this end, we first relate the error at stage $h$ to that at stage $h+1$ and then apply a backward recursion. Define $\mathcal E_h:=\sum_{k=1}^K\mathbb E_{s\sim d_h^{\pi^\star}}[e_h^k(s)]$, with $\mathcal E_{H+1}:=0$. By Lemma~\ref{lem:klq-squared-error}, with high probability,
\begin{equation}
\label{eq:klq-recursion}
\mathcal E_h\leq
\left(1+\frac5H\right)
\textcolor{blue}{\gamma_{h+1}}
\mathcal E_{h+1}
+\widetilde O\!\left(
\textcolor{blue}{\gamma_{h+1}}(1+\log K)
\right).
\end{equation}
In the recursion~(\ref{eq:klq-recursion}), the first term corresponds to the error propagated from stage $h+1$, while the second term is the accumulated statistical error. As the statistical error grows only polylogarithmically with $K$, backward iteration yields
\begin{equation}
\label{eq:klq-cumulative}
\mathcal E_h\leq
\textcolor{blue}{\Gamma_{h+1:H}}
\,\widetilde O(1).
\end{equation}
The bound in~(\ref{eq:klq-cumulative}) controls the cumulative squared Q-estimation error $\mathcal E_h$, whereas~bound~(\ref{eq:klq-policy}) requires controlling the final-iterate error. Since the Q-estimates of KLQ are not monotone, the cumulative error bound does not directly yield a bound for the final iterate. To address this issue, we apply Lemma~\ref{lem:klq-last-reduction} to relate the final-iterate error to the cumulative error at the next stage. Specifically, we obtain $\mathbb E_{s\sim d_h^{\pi^\star}}[e_h^{K+1}(s)]
\leq \frac{\textcolor{blue}{\gamma_{h+1}}}{K}\,
\widetilde O(1+\mathcal E_{h+1})
\leq \frac{\textcolor{blue}{\Gamma_{h+1:H}}}{K}\,
\widetilde O(1)$. Substituting this into~(\ref{eq:klq-policy}) and using $\Gamma_{1:h}\Gamma_{h+1:H}=e^{\eta H(H+1)}$, we obtain
\[
\operatorname{SubOpt}(\widehat\pi)
\leq \textcolor{blue}{\eta e^{\eta H(H+1)}}\,\widetilde O(K^{-1}),
\]
which proves Theorem~\ref{thm:klq}. The exponential factor arises from both stagewise error propagation and the conversion to policy suboptimality.

\subsection{Analysis for KLPQ}
\label{sec:analysis-klpq}
\vspace{5pt}
\noindent\textbf{Proof Sketch of Theorem~\ref{thm:klpq}.}
For KLPQ, let $Q_h^k$ and $V_h^k$ be its iterates, and define $e_h^k$ and $\mathcal E_h$ as in Section~\ref{sec:analysis-klq}. By the construction of pessimistic Q-estimates and the monotone update in Algorithm~\ref{alg:klpq}, Lemma~\ref{lem:klpq-monotonicity} ensures, with high probability, $Q_h^k(s,a)\leq Q_h^{\pi^k}(s,a)\leq Q_h^\star(s,a)$. The pointwise pessimism property plays a crucial role in
removing the exponential factors in the KLQ analysis.
For KLQ, the bound
$(V_h^k(s)-V_h^\star(s))^2\leq\gamma_h e_h^k(s)$
in~inequality~(\ref{eq:kl-value-error}) introduces the factor
$\gamma_{h+1}$ into the squared-error recursion~(\ref{eq:klq-recursion}).
In contrast, for KLPQ, the pointwise pessimism property
and the convexity of the regularized Bellman operator yield
$(V_h^\star(s)-V_h^k(s))^2\leq e_h^k(s)$
as shown in~inequality~(\ref{eq:klpq-value-error}),
avoiding this additional factor in the error recursion.
For KLQ, as detailed in our proof, the likelihood-ratio argument is used both to bound
the KL divergence between $\widehat\pi$ and $\pi^\star$
and to compare their occupancy measures, resulting in
the factor $\Gamma_{1:h}$ in~bound~(\ref{eq:klq-policy}).
In contrast, for KLPQ, this property and the Gibbs formula
allow us to control policy suboptimality directly under
$d_h^{\pi^\star}$, without requiring such an occupancy
comparison.
Specifically, by Lemma~\ref{lem:klpq-geometry}, we obtain
\begin{equation*}
\label{eq:klpq-policy}
\operatorname{SubOpt}(\widehat\pi)
\leq
\textcolor{blue}{\eta}\sum_{h=1}^H
\mathbb E_{s\sim d_h^{\pi^\star}}[e_h^{K+1}(s)].
\end{equation*}
Applying Lemma~\ref{lem:klpq-pointwise} and following the analysis for KLQ in Lemma~\ref{lem:klq-squared-error}, we obtain, with high probability,
\[
\mathcal E_h\leq\left(1+\frac5H\right)\mathcal E_{h+1}
+\widetilde O(1+\log K).
\]
Iterating this recursion backward and using the polylogarithmic
dependence of the statistical error on $K$, we obtain
$\mathcal E_h\leq\widetilde O(1)$. Finally, by the pointwise pessimism property and the monotone update in Algorithm~\ref{alg:klpq}, we have $0\leq Q_h^\star-Q_h^{K+1}\leq Q_h^\star-Q_h^k$ for every $k\in[K]$, which further yields $\mathbb E_{s\sim d_h^{\pi^\star}}[e_h^{K+1}(s)]
\leq\frac1K\sum_{k=1}^K\mathbb E_{s\sim d_h^{\pi^\star}}[e_h^k(s)]
=\frac{\mathcal E_h}{K}\leq\widetilde O(K^{-1})$. Combining the above bounds, we obtain
\[
\operatorname{SubOpt}(\widehat\pi)\leq\textcolor{blue}{\eta}\,\widetilde O(K^{-1}).
\]
Compared with the final-iterate argument in Lemma~\ref{lem:klq-last-reduction}, this direct averaging argument also saves one factor of \(H\), yielding the sharper horizon dependence in Theorem~\ref{thm:klpq}.

\section{Conclusion}

In this paper, we study offline Q-learning for finite-horizon KL-regularized MDPs. We first propose KL-regularized Q-Learning (KLQ) and establish an $\widetilde{O}(1/\epsilon)$ sample complexity without explicit pessimism. We then develop KL-regularized Pessimistic Q-Learning (KLPQ), which achieves the same rate while removing the explicit exponential dependence on the regularization parameter and horizon. These results establish fast-rate guarantees for offline Q-learning with KL regularization and demonstrate the benefit of pessimism in improving sample complexity.

\clearpage
\appendix
\startcontents[appendices]

\begin{center}
    \LARGE\textbf{Appendix}
\end{center}

\vspace{0.5cm}

\begingroup
\hypersetup{linkcolor=black}
\color{black}
\printcontents[appendices]{}{1}{\setcounter{tocdepth}{2}}
\endgroup

\clearpage

\allowdisplaybreaks[1]

\section{Proof of Proposition~\ref{prop:soft-optimality}}
\label{app:soft-optimality}

\begin{proof}
Define $\bar V_{H + 1} \equiv 0$ and recursively set
$\bar Q_h := r_h + \mathbb P_h\bar V_{h + 1}$,
$\bar V_h(s) := F_{h,s}(\bar Q_h(s,\cdot))$, and
$\bar\pi_h(\cdot \mid s) := G_{h,s}(\bar Q_h(s,\cdot))$.
We prove by backward induction that $V_h^\pi \leq \bar V_h$ for every policy $\pi$ and $V_h^{\bar\pi} = \bar V_h$. Both claims are immediate at $H + 1$. If $V_{h + 1}^\pi \leq \bar V_{h + 1}$, then
\begin{align*}
V_h^\pi(s)
& = \left\langle\pi_h(\cdot \mid s),r_h(s,\cdot) + (\mathbb P_hV_{h + 1}^\pi)(s,\cdot)\right\rangle_{\mathcal A} - \frac1\eta\mathrm{KL}\!\left(\pi_h(\cdot \mid s)\middle\|\pi_h^{\mathrm{ref}}(\cdot \mid s)\right)\\
& \leq \left\langle\pi_h(\cdot \mid s),\bar Q_h(s,\cdot)\right\rangle_{\mathcal A} - \frac1\eta\mathrm{KL}\!\left(\pi_h(\cdot \mid s)\middle\|\pi_h^{\mathrm{ref}}(\cdot \mid s)\right)\\
& \leq F_{h,s}(\bar Q_h(s,\cdot)) = \bar V_h(s),
\end{align*}
where the last inequality is the Gibbs variational identity. If $V_{h + 1}^{\bar\pi} = \bar V_{h + 1}$, then $Q_h^{\bar\pi} = \bar Q_h$, and because $\bar\pi_h$ is the maximizer in the Gibbs variational identity,
$$
V_h^{\bar\pi}(s) = \left\langle\bar\pi_h(\cdot \mid s),\bar Q_h(s,\cdot)\right\rangle_{\mathcal A} - \frac1\eta\mathrm{KL}\!\left(\bar\pi_h(\cdot \mid s)\middle\|\pi_h^{\mathrm{ref}}(\cdot \mid s)\right) = \bar V_h(s).
$$
Thus $\bar V_h = V_h^\star$, $\bar Q_h = Q_h^\star$, and $\bar\pi = \pi^\star$. The same induction also gives $Q_h^\pi \leq Q_h^\star$ for every policy $\pi$. For the range, suppose $0 \leq V_{h + 1}^\star \leq L_{h + 1}$. Then
$$
0 \leq Q_h^\star(s,a) = r_h(s,a) + (\mathbb P_hV_{h + 1}^\star)(s,a) \leq 1 + L_{h + 1} = L_h.
$$
Since $F_{h,s}(q)$ lies between the minimum and maximum coordinates of $q$,
$$
0 \leq V_h^\star(s) = F_{h,s}(Q_h^\star(s,\cdot)) \leq L_h.
$$
Backward induction from $V_{H + 1}^\star = 0$ completes the proof.
\end{proof}

\section{Supporting Lemmas}
\label{app:support}
We first introduce the notation used throughout the statistical analysis. Let $N_h^k(s,a)$ denote the number of visits to $(h,s,a)$ before episode $k$, and let $k_{h,n}(s,a)$ denote the episode index of its $n$-th visit. For the learning rate $\alpha_n = \frac{H + 1}{H + n}$ used in Algorithm~\ref{alg:klq}, for integers $N \geq 0$ and $n \geq 1$, define
$$
\alpha_0^N := \begin{cases}
\displaystyle\prod_{j = 1}^N(1 - \alpha_j) = 0,&N > 0,\\
1,&N = 0,
\end{cases}
\qquad
\alpha_n^N := \begin{cases}
\displaystyle\alpha_n\prod_{j = n + 1}^N(1 - \alpha_j),&N > n,\\
\alpha_n,&N = n,\\
0,&N < n.
\end{cases}
$$

These coefficients and their basic identities are standard in finite-sample analyses of Q-learning; see \citet[Lemma~4.1]{jin2018q} and \citet[Lemma~1]{li2021breaking}.

Let $\mathcal D_{< k} := \sigma(\tau^1,\ldots,\tau^{k - 1})$. Let $\widehat{\mathbb P}_h^k$ be the one-hot row vector satisfying $\widehat{\mathbb P}_h^kf = f(s_{h + 1}^k)$, and regard $\widehat{\mathbb P}_h^k$ and $\mathbb P_{h,s,a}$ as row vectors over $\mathcal{S}$. Under Assumption~\ref{ass1},
$$
0 \leq \frac{d_h^{\pi^\star}(s,a)}{d_h^\mu(s,a)} \leq C_\eta^\star,
$$
with the convention $0/0 = 0$, as in Assumption~\ref{ass1}. Throughout the appendix, we use the convention that empty sums are zero. We record the standard results below.

\begin{lemma}\label{lem:historical-weights}
For every $N \geq 1$ and $a \in [1/2,1]$,
$$
\frac1{N^a} \leq \sum_{n = 1}^N\frac{\alpha_n^N}{n^a} \leq \frac2{N^a}, \qquad \sum_{n = 1}^N\alpha_n^N = 1, \qquad \max_{1 \leq n \leq N}\alpha_n^N \leq \frac{2H}{N},
$$
$$
\sum_{n = 1}^N(\alpha_n^N)^2 \leq \frac{2H}{N}, \qquad \sum_{M = n}^\infty\alpha_n^M = 1 + \frac1H.
$$
\end{lemma}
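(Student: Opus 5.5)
The plan is to derive every claim from two elementary facts about the step size $\alpha_n=\frac{H+1}{H+n}$. First, $\alpha_1=1$. Second,
\[
1-\alpha_j=\frac{j-1}{H+j},
\]
so that $\alpha_n^N=\alpha_n\,p_{n,N}$ with $p_{n,N}:=\prod_{j=n+1}^N\frac{j-1}{H+j}$. I will also use the recursion $\alpha_n^N=(1-\alpha_N)\alpha_n^{N-1}$ for $n<N$, together with $\alpha_N^N=\alpha_N$.

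\textbf{Normalization.} By the recursion, $\sum_{n=0}^N\alpha_n^N=(1-\alpha_N)\sum_{n=0}^{N-1}\alpha_n^{N-1}+\alpha_N$. Since the sum equals $1$ at $N=0$, induction gives $\sum_{n=0}^N\alpha_n^N=1$ for every $N\ge0$. For $N\ge1$ we have $\alpha_0^N=0$ because $\alpha_1=1$, hence $\sum_{n=1}^N\alpha_n^N=1$.

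\textbf{Maximum weight and sum of squares.} For $1\le n<N$, the ratio of consecutive weights is
\[
\frac{\alpha_{n+1}^N}{\alpha_n^N}=\frac{\alpha_{n+1}}{\alpha_n(1-\alpha_{n+1})}=\frac{H+n}{n}\ge1 .
\]
So $\alpha_n^N$ is nondecreasing in $n$, and the maximum is $\alpha_N^N=\frac{H+1}{H+N}$. This is at most $\frac{2H}{N}$ because $(H+1)N\le 2HN\le 2H(H+N)$ when $H\ge1$. The sum of squares then follows from $\sum_n(\alpha_n^N)^2\le\max_n\alpha_n^N\cdot\sum_n\alpha_n^N\le\frac{2H}{N}$.

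\textbf{Weighted sum of $n^{-a}$.} Let $S_N:=\sum_{n=1}^N\alpha_n^N n^{-a}$. The lower bound $S_N\ge N^{-a}$ is immediate, since $S_N$ is a convex combination (by normalization) of terms $n^{-a}\ge N^{-a}$.

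For the upper bound I will induct on $N$ using $S_N=(1-\alpha_N)S_{N-1}+\alpha_N N^{-a}$. The base case is $S_1=1\le2$. Assuming $S_{N-1}\le 2(N-1)^{-a}$, it suffices to show
\[
\frac{N-1}{H+N}\cdot 2\Bigl(\frac{N}{N-1}\Bigr)^a+\frac{H+1}{H+N}\le 2 .
\]
Since $a\le1$, we have $(N/(N-1))^a\le N/(N-1)$. The left side is therefore at most $\frac{2N+H+1}{H+N}$, which is at most $2$ exactly when $H\ge1$.

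\textbf{Tail sum over $M$.} This is the only step that needs a small trick, so I expect it to be the main obstacle. I will telescope using $q_M:=p_{n,M}\frac{H+M}{H}$. From $p_{n,M+1}=p_{n,M}\frac{M}{H+M+1}$ one gets
\[
q_M-q_{M+1}=p_{n,M}\Bigl(\frac{H+M}{H}-\frac{M}{H}\Bigr)=p_{n,M}.
\]
Next I need $q_M\to0$ as $M\to\infty$. Each factor satisfies $\frac{j-1}{H+j}\le 1-\frac{H+1}{H+j}$, so $p_{n,M}=O(M^{-(H+1)})$. Hence $q_M=O(M^{-H})\to0$ for $H\ge1$.

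The telescoping sum then gives $\sum_{M\ge n}p_{n,M}=q_n=\frac{H+n}{H}$. Multiplying by $\alpha_n$ yields
\[
\sum_{M=n}^\infty\alpha_n^M=\frac{H+1}{H+n}\cdot\frac{H+n}{H}=1+\frac1H .
\]
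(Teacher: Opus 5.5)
Your proof is correct and complete. The paper gives no argument of its own for this lemma; it defers to \citet[Lemma~4.1]{jin2018q} and \citet[Lemma~1]{li2021breaking}, and your steps are essentially the standard proofs given there: normalization via $\alpha_n^N=(1-\alpha_N)\alpha_n^{N-1}$, induction on $N$ for the weighted $n^{-a}$ sum, the product form with $1-\alpha_j=\frac{j-1}{H+j}$ for the maximum and squared sum, and a telescoping series for the tail identity. The only step you state tersely is the decay $p_{n,M}=O(M^{-(H+1)})$, which follows from $1-x\le e^{-x}$ together with $\sum_{j=n+1}^{M}\frac{1}{H+j}\ge\log\frac{H+M+1}{H+n+1}$ (the tail identity holds for $n\ge1$, as you implicitly assume).
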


We also need the following standard consequence of Freedman's martingale inequality \citep[Theorem~1.6]{freedman1975tail}.
\begin{lemma}\label{lem:freedman}
Let $X_1,\ldots,X_m$ be martingale differences with respect to a filtration $\{\mathcal F_i\}_{i = 0}^m$, let $|X_i| \leq R$, and set $W_m := \sum_{i = 1}^m\mathbb E[X_i^2 \mid \mathcal F_{i - 1}]$. If $W_m \leq \sigma^2$ almost surely for a deterministic $\sigma^2$, then, for every $u > 0$, with probability at least $1 - 2e^{-u}$,
$$
\left|\sum_{i = 1}^mX_i\right| \leq c\sqrt{\sigma^2u} + cRu.
$$
\end{lemma}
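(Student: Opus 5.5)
This lemma is a direct application of Freedman's inequality for martingales with bounded increments, so I will only sketch the derivation. Let $M_i:=\sum_{j\le i}X_j$. Write $\lambda\in(0,1/R]$ for a free parameter. The starting point is the exponential supermartingale bound. Since $e^{x}\le 1+x+x^2$ for $|x|\le 1$ and $\mathbb E[X_i\mid\mathcal F_{i-1}]=0$, we have
$$
\mathbb E\bigl[e^{\lambda X_i}\mid\mathcal F_{i-1}\bigr]\le 1+\lambda^2\mathbb E[X_i^2\mid\mathcal F_{i-1}]\le \exp\bigl(\lambda^2\mathbb E[X_i^2\mid\mathcal F_{i-1}]\bigr).
$$
Hence $Z_i:=\exp(\lambda M_i-\lambda^2 W_i)$ is a nonnegative supermartingale with $\mathbb E[Z_m]\le 1$.

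Next I use the deterministic variance bound. Because $W_m\le\sigma^2$ almost surely, Markov's inequality gives
$$
\Pr\bigl(M_m\ge \lambda\sigma^2+u/\lambda\bigr)\le \Pr\bigl(Z_m\ge e^{u}\bigr)\le e^{-u}.
$$
I then optimize $\lambda$, splitting into two cases:
\begin{itemize}
\item If $\sqrt{u/\sigma^2}\le 1/R$, take $\lambda=\sqrt{u/\sigma^2}$. The threshold becomes $2\sqrt{\sigma^2u}$.
\item Otherwise take $\lambda=1/R$. Then $\sigma^2/R\le \sqrt{\sigma^2u}\le Ru$, so the threshold is at most $2Ru$.
\end{itemize}
In either case, with probability at least $1-e^{-u}$,
$$
M_m\le 2\sqrt{\sigma^2u}+2Ru.
$$
The degenerate case $\sigma^2=0$ is trivial, since then every $X_i=0$ almost surely.

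Applying the same argument to $-X_i$ and taking a union bound gives the two-sided statement with probability at least $1-2e^{-u}$ and $c=2$. Alternatively, one can cite \citet[Theorem~1.6]{freedman1975tail} directly and solve its quadratic tail bound $\exp\bigl(-t^2/(2(\sigma^2+Rt))\bigr)\le e^{-u}$ for $t$, which yields $t\lesssim\sqrt{\sigma^2u}+Ru$.

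The only step needing care is handling the case split without losing a logarithmic factor. This is possible precisely because $\sigma^2$ is deterministic, so no peeling over the value of $W_m$ is required.
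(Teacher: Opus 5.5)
Your proposal is correct, and it matches the paper. The paper gives no separate proof and simply invokes Freedman's Theorem~1.6, which is exactly your alternative of solving $\exp\bigl(-t^2/(2(\sigma^2+Rt))\bigr)\le e^{-u}$ for $t$; your main exponential-supermartingale argument is the standard proof behind that theorem, specialized to a deterministic variance bound, and it gives $c=2$.
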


\begin{lemma}\label{lem:weighted-transition}
Suppose that, for every $k\in[K]$ and $h\in[H]$,
$V_{h+1}^k$ is $\mathcal D_{<k}$-measurable and satisfies
$0\leq V_{h+1}^k(s)\leq L_{h+1}$ for every $s\in\mathcal S$.
For any $\delta \in (0,1)$, let $\iota := \log\frac{SAT}{\delta}$. With probability at least $1 - \delta$, simultaneously for every $h \in [H]$, $(s,a) \in \mathcal S \times \mathcal A$, and $1 \leq N \leq N_h^{K + 1}(s,a)$,
$$
\left|\sum_{n = 1}^N\alpha_n^N(\widehat{\mathbb P}_h^{k_{h,n}(s,a)} - \mathbb P_{h,s,a})V_{h + 1}^{k_{h,n}(s,a)}\right| \leq cL_{h + 1}\iota\sqrt{\frac HN},
$$
where $c > 0$ is a universal constant.
\end{lemma}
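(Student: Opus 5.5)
Since the weights $\alpha_n^N$ are deterministic once $N$ is fixed, I will reduce the claim to a martingale concentration bound for each fixed $(h,s,a,N)$ and then take a union bound. The main technicality is that the visit times $k_{h,n}(s,a)$ are random. To handle this, I extend the sequence fictitiously. For each $(h,s,a)$, attach an independent i.i.d. sample $s'_n \sim \mathbb P_{h,s,a}$ and a default function to every index $n > N_h^{K+1}(s,a)$. Equivalently, I use the standard ``stack of samples'' construction: the next-state observed at the $n$-th visit to $(h,s,a)$ is distributed as $\mathbb P_{h,s,a}$, conditionally on everything that happened before that visit.

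Concretely, set
\[
X_n := \alpha_n^N\bigl(\widehat{\mathbb P}_h^{k_{h,n}} - \mathbb P_{h,s,a}\bigr)V_{h+1}^{k_{h,n}}\,\mathbf 1\{n \le N_h^{K+1}(s,a)\},
\]
and use the filtration $\mathcal F_n$ generated by all data up to and including the moment just before the next-state of the $(n+1)$-th visit is drawn.

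\textbf{Martingale structure.} The event $\{n\le N_h^{K+1}\}$, the index $k_{h,n}$, and $V_{h+1}^{k_{h,n}}$ are all $\mathcal F_{n-1}$-measurable. For $V_{h+1}^{k_{h,n}}$ this uses the hypothesis that $V_{h+1}^k$ is $\mathcal D_{<k}$-measurable: it depends only on trajectories before episode $k_{h,n}$. Hence $\mathbb E[X_n\mid\mathcal F_{n-1}]=0$. The one subtlety is that $\mathcal F_{n-1}$ contains data from episode $k_{h,n}$ up to step $h$. Since transitions are Markov, $s^{k}_{h+1}$ is still distributed as $\mathbb P_{h,s,a}$ given this information, so the conditional mean is indeed zero.

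\textbf{Freedman step.} Since $0\le V_{h+1}\le L_{h+1}$, we have $|X_n|\le \alpha_n^N L_{h+1}\le 2HL_{h+1}/N$ by Lemma~\ref{lem:historical-weights}. The conditional variances satisfy
\[
\sum_n \mathbb E[X_n^2\mid\mathcal F_{n-1}] \le L_{h+1}^2\sum_{n\le N}(\alpha_n^N)^2 \le 2HL_{h+1}^2/N,
\]
which is a deterministic bound, again by Lemma~\ref{lem:historical-weights}. Applying Lemma~\ref{lem:freedman} with $u=\log(2SAT H/\delta)\lesssim\iota$ gives
\[
\Bigl|\sum_n X_n\Bigr| \le c\,L_{h+1}\sqrt{H\iota/N} + c\,\frac{HL_{h+1}\iota}{N}.
\]
On the event $N\le N_h^{K+1}(s,a)$, the indicators all equal one, so this sum is exactly the quantity in the statement.

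\textbf{Union bound and simplification.} Take a union bound over $h\in[H]$, $(s,a)$, and $N\le K$; there are at most $HSAK\le SAT$ tuples. It remains to combine the two terms into $L_{h+1}\iota\sqrt{H/N}$. For the second term, $H\iota/N\le \iota\sqrt{H/N}$ whenever $N\ge H$. When $N<H$, I instead use the trivial bound: each summand is at most $\alpha_n^N L_{h+1}$, and $\sum_n\alpha_n^N=1$, so the whole sum is at most $L_{h+1}\le L_{h+1}\sqrt{H/N}$. In both cases $\sqrt{\iota}\le\iota$ once $\iota\ge1$, which gives the stated bound.

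I expect the measurability and filtration bookkeeping in the martingale step to be the main point requiring care. Once the sequence is indexed by visit order rather than by episode, the rest is routine.
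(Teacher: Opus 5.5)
Your proposal is correct and follows essentially the same route as the paper: Freedman's inequality with the deterministic variance bound $2HL_{h+1}^2/N$ from Lemma~\ref{lem:historical-weights}, the split into $N\geq H$ and $N<H$ (using the trivial bound from $\sum_n\alpha_n^N=1$ when $N<H$), and a union bound over $HSAK=SAT$ tuples. The only real difference is the indexing. The paper indexes the martingale by episode, $X_k=\alpha^N_{\overline N_h^k(s,a)}\mathbf 1\{(s_h^k,a_h^k)=(s,a)\}(\widehat{\mathbb P}_h^k-\mathbb P_{h,s,a})V_{h+1}^k$, with the filtration $\mathcal F_{k,h}^-=\sigma(\mathcal D_{<k},s_1^k,a_1^k,\ldots,s_h^k,a_h^k)$. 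This avoids the stopped $\sigma$-algebras your visit-indexed version needs; your version is nonetheless valid, since each $k_{h,n}(s,a)$ is a stopping time for that filtration.

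One small fix: do not assume $\iota\geq 1$, which need not hold. Instead, note that the claim is trivial when $L_{h+1}=0$, and otherwise $H\geq 2$ forces $SAT\geq 2$ and hence $\iota\geq\log 2$. That lower bound is what lets you absorb $\sqrt{\iota}$, the extra $\log(2H)$ in your choice of $u$, and the $N<H$ case into the universal constant.
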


\begin{proof}
If $L_{h + 1} = 0$, the claim is immediate. Hence suppose $L_{h + 1} > 0$. Then $H \geq 2$, so $SAT \geq 2$ and $\iota \geq \log 2$. Fix $(h,s,a,N)$ with $1 \leq N \leq K$ and define
$$
\begin{aligned}
\overline N_h^k(s,a) := N_h^k(s,a) + \mathbf1\{(s_h^k,a_h^k) = (s,a)\},\quad
\mathcal F_{k,h}^- := \sigma(\mathcal D_{< k},s_1^k,a_1^k,\ldots,s_h^k,a_h^k),\\
\mathcal F_{k,h}^+ := \mathcal F_{k,h}^- \vee \sigma(s_{h + 1}^k), \quad X_k := \alpha_{\overline N_h^k(s,a)}^N\mathbf1\{(s_h^k,a_h^k) = (s,a)\}(\widehat{\mathbb P}_h^k - \mathbb P_{h,s,a})V_{h + 1}^k.
\end{aligned}
$$
Conditional on $\mathcal F_{k,h}^-$, every factor in $X_k$ except $s_{h+1}^k$ is fixed, and $s_{h+1}^k \sim \mathbb P_h(\cdot \mid s_h^k,a_h^k)$. Hence $\mathbb E[X_k \mid \mathcal F_{k,h}^-] = 0$. Moreover, $\mathcal F_{k,h}^- \subseteq \mathcal F_{k,h}^+ \subseteq \mathcal F_{k+1,h}^-$ for $k<K$. Define $\mathcal G_0:=\mathcal F_{1,h}^-$, $\mathcal G_k:=\mathcal F_{k+1,h}^-$ for $1\leq k<K$, and $\mathcal G_K:=\mathcal F_{K,h}^+$. Then $\{\mathcal G_k\}_{k=0}^K$ is a filtration, $X_k$ is $\mathcal G_k$-measurable, and $\mathbb E[X_k\mid\mathcal G_{k-1}]=\mathbb E[X_k\mid\mathcal F_{k,h}^-]=0$ for every $k\in[K]$. At the $n$-th visit to $(s,a)$, $\overline N_h^k(s,a)=n$. Since $\alpha_n^N=0$ for $n>N$, using $0 \leq V_{h+1}^k \leq L_{h+1}$ and Lemma~\ref{lem:historical-weights},
we have $|X_k| \leq \frac{2HL_{h+1}}{N}$.
Furthermore,
\begin{align*}
\sum_{k=1}^K\mathbb E[X_k^2 \mid \mathcal G_{k-1}]
=\sum_{k=1}^K\mathbb E[X_k^2 \mid \mathcal F_{k,h}^-]
&\leq L_{h+1}^2\sum_{k=1}^K\bigl(\alpha_{\overline N_h^k(s,a)}^N\bigr)^2\mathbf1\{(s_h^k,a_h^k)=(s,a)\}\\
&\leq L_{h+1}^2\sum_{n=1}^N(\alpha_n^N)^2
\leq \frac{2HL_{h+1}^2}{N}.
\end{align*}
Hence, for $N \geq H$, applying Lemma~\ref{lem:freedman} with $u = 2\iota$ yields, with probability at least $1 - 2e^{-2\iota}$,
\begin{align*}
\left|\sum_{k=1}^KX_k\right|
&\leq cL_{h+1}\sqrt{\frac{H\iota}{N}} + cL_{h+1}\frac{H\iota}{N}
\leq cL_{h+1}\iota\sqrt{\frac HN},
\end{align*}
where the last inequality uses $\iota \geq \log 2$ and $H/N \leq \sqrt{H/N}$, after absorbing numerical factors into the universal constant. Whenever $N \leq N_h^{K+1}(s,a)$, by the definition of the visit indices,
\[
\sum_{k=1}^KX_k
=
\sum_{n=1}^N\alpha_n^N(\widehat{\mathbb P}_h^{k_{h,n}(s,a)}-\mathbb P_{h,s,a})V_{h+1}^{k_{h,n}(s,a)}.
\]
Thus the desired bound holds for $N\geq H$. When $N < H$ and $N \leq N_h^{K + 1}(s,a)$, the same quantity is deterministically bounded by
\begin{align*}
\left|\sum_{n = 1}^N\alpha_n^N(\widehat{\mathbb P}_h^{k_{h,n}(s,a)} - \mathbb P_{h,s,a})V_{h + 1}^{k_{h,n}(s,a)}\right|
\leq L_{h + 1}\sum_{n = 1}^N\alpha_n^N = L_{h + 1} \leq cL_{h + 1}\iota\sqrt{\frac HN},
\end{align*}
where we used the weight normalization in Lemma~\ref{lem:historical-weights}, $\iota \geq \log 2$, and $N < H$, after enlarging $c$ if necessary. A union bound over at most $HSAK = SAT$ choices of $(h,s,a,N)$ has failure probability at most $2SATe^{-2\iota} \leq \delta$, proving the lemma.
\end{proof}

\begin{lemma}\label{lem:count-control}
Suppose $SAT > 1$. For any $\delta \in (0,1)$, let $\iota := \log\frac{SAT}{\delta}$. With probability at least $1 - \delta$, simultaneously for every $h \in [H]$, $(s,a) \in \mathcal S \times \mathcal A$, and $m \in [K]$,
\begin{equation}
\label{eq:count-inverse-bound}
\frac{d_h^\mu(s,a)}{N_h^{m + 1}(s,a) \vee 1} \leq \frac{8\iota}{m},
\end{equation}
and whenever $m d_h^\mu(s,a) \geq 8\iota$,
\begin{equation}
\label{eq:count-lower-bound}
N_h^{m + 1}(s,a) \geq \frac{m d_h^\mu(s,a)}2 \geq 1.
\end{equation}

Under Assumption~\ref{ass1}, on the same event, simultaneously for every $h \in [H]$,
\begin{equation}
\label{eq:count-zero-cumulative}
\sum_{k = 1}^K\sum_{s,a}d_h^{\pi^\star}(s,a)\mathbf1\{N_h^k(s,a) = 0\} \leq 1 + 8SA C_\eta^\star\iota,
\end{equation}

\begin{equation}
\label{eq:count-inverse-cumulative}
\sum_{k = 1}^K\sum_{s,a}d_h^{\pi^\star}(s,a) \frac{\mathbf1\{N_h^k(s,a) \geq 1\}}{N_h^k(s,a) \vee 1} \leq 8SA C_\eta^\star\iota(1 + \log K),
\end{equation}

\begin{equation}
\label{eq:count-zero-final}
\sum_{s,a}d_h^{\pi^\star}(s,a)\mathbf1\{N_h^{K + 1}(s,a) = 0\} \leq \frac{8SA C_\eta^\star\iota}{K},
\end{equation}

and
\begin{equation}
\label{eq:count-inverse-final}
\sum_{s,a}d_h^{\pi^\star}(s,a) \frac{\mathbf1\{N_h^{K + 1}(s,a) \geq 1\}}{N_h^{K + 1}(s,a) \vee 1} \leq \frac{8SA C_\eta^\star\iota}{K}.
\end{equation}

\end{lemma}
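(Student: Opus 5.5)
The plan is to first prove the two per-$(h,s,a,m)$ count bounds (\ref{eq:count-inverse-bound})--(\ref{eq:count-lower-bound}) by a multiplicative Chernoff bound plus a union bound, and then derive the four cumulative/final bounds deterministically on that event using Assumption~\ref{ass1}. Fix $(h,s,a)$ and $m\in[K]$. Since trajectories are i.i.d., $N_h^{m+1}(s,a)=\sum_{k=1}^m\mathbf 1\{(s_h^k,a_h^k)=(s,a)\}$ is a Binomial$(m,p)$ variable with $p=d_h^\mu(s,a)$. The multiplicative Chernoff bound gives $\Pr(N_h^{m+1}(s,a)<mp/2)\leq e^{-mp/8}$. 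If $mp\geq 8\iota$, this is at most $e^{-\iota}=\delta/(SAT)$. A union bound over at most $HSAK=SAT$ tuples then shows that, with probability at least $1-\delta$, (\ref{eq:count-lower-bound}) holds whenever $mp\geq8\iota$; here $mp/2\geq4\iota\geq1$ because $SAT>1$ gives $\iota\ge\log 2$ after a small adjustment, and otherwise one uses $\iota\geq\log(SAT/\delta)>0$ directly. On this event, (\ref{eq:count-inverse-bound}) follows by cases. If $mp\geq8\iota$, then $p/(N\vee1)\leq 2/m\leq 8\iota/m$, using $\iota\geq 1/4$, which needs checking: for $SAT\ge2$ we have $\iota\ge\log2>1/4$. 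If $mp<8\iota$, then $p/(N\vee1)\leq p<8\iota/m$ trivially.

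For the cumulative bounds I would write $d_h^{\pi^\star}(s,a)\leq C_\eta^\star d_h^\mu(s,a)$, which holds by Assumption~\ref{ass1} including the $0/0$ convention. The indices are shifted: $N_h^k$ counts visits in the first $k-1$ episodes, so apply the event with $m=k-1$ for $k\geq2$, and bound the $k=1$ term by $\sum_{s,a}d_h^{\pi^\star}(s,a)=1$. This is the source of the additive $1$ in (\ref{eq:count-zero-cumulative}).

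For (\ref{eq:count-zero-cumulative}), fix $(s,a)$ with $p>0$. If $N_h^k(s,a)=0$ with $k\geq2$, then (\ref{eq:count-lower-bound}) forces $(k-1)p<8\iota$. So the number of such $k\geq2$ is at most $8\iota/p$, and the contribution of this pair is at most $d_h^{\pi^\star}(s,a)\cdot 8\iota/p\leq 8C_\eta^\star\iota$. Summing over $SA$ pairs and adding the $k=1$ term gives the claim.

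For (\ref{eq:count-inverse-cumulative}), the $k=1$ term vanishes because $N_h^1=0$. For $k\geq2$, (\ref{eq:count-inverse-bound}) with $m=k-1$ gives $d_h^{\pi^\star}/(N\vee1)\leq C_\eta^\star\cdot 8\iota/(k-1)$ for each pair. Summing over $SA$ pairs and over $k$ gives $8SAC_\eta^\star\iota\sum_{j=1}^{K-1}1/j\leq 8SAC_\eta^\star\iota(1+\log K)$.

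The final bounds use $m=K$. For (\ref{eq:count-inverse-final}), apply (\ref{eq:count-inverse-bound}) pairwise and sum. For (\ref{eq:count-zero-final}), note that $N_h^{K+1}(s,a)=0$ forces $Kp<8\iota$ by (\ref{eq:count-lower-bound}), so $d_h^{\pi^\star}(s,a)\leq C_\eta^\star p<8C_\eta^\star\iota/K$; summing over pairs gives the claim.

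The main obstacle is bookkeeping rather than depth. It consists of getting the Chernoff constant to match the stated threshold $8\iota$ and factor $1/2$, handling the off-by-one between $N_h^k$ and $m$, and verifying the small numerical conditions such as $\iota\geq1/4$ and $mp/2\geq1$. I would use the form $\Pr(X\leq(1-\theta)\mathbb E X)\leq e^{-\theta^2\mathbb E X/2}$ with $\theta=1/2$.
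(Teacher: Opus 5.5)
Your proposal is correct and follows the paper's proof essentially step for step. Both use a multiplicative Chernoff bound at threshold $8\iota$ with a union bound over the at most $SAT$ tuples $(h,s,a,m)$, the same case split for (\ref{eq:count-inverse-bound}), and deterministic summations with $m=k-1$ and $m=K$ via $d_h^{\pi^\star}\le C_\eta^\star d_h^\mu$. One cosmetic remark: the bound $4\iota\ge1$ needs no ``adjustment'' or fallback, since $SAT\ge2$ and $\delta<1$ give $\iota>\log 2$ directly, whereas $\iota>0$ alone would not suffice.
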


\begin{proof}
Fix $(h,s,a,m)$. The count $N_h^{m + 1}(s,a)$ is binomial with parameters $m$ and $d_h^\mu(s,a)$. If $md_h^\mu(s,a) < 8\iota$, then
$$
\frac{d_h^\mu(s,a)}{N_h^{m + 1}(s,a) \vee 1} \leq d_h^\mu(s,a) < \frac{8\iota}{m}.
$$
Otherwise, the multiplicative Chernoff bound gives
$$
\Pr\!\left(N_h^{m + 1}(s,a) < \frac{md_h^\mu(s,a)}2\right) \leq e^{-md_h^\mu(s,a)/8} \leq e^{-\iota}.
$$
On the complementary event, $N_h^{m + 1}(s,a) \geq md_h^\mu(s,a)/2 \geq 1$, and hence
$$
\frac{d_h^\mu(s,a)}{N_h^{m + 1}(s,a) \vee 1} \leq \frac2m \leq \frac{8\iota}{m}.
$$
A union bound over at most $HSAK = SAT$ choices proves the first two claims of Lemma~\ref{lem:count-control} simultaneously.

To prove~(\ref{eq:count-zero-cumulative}), observe that~(\ref{eq:count-lower-bound}) with $m = k - 1$ implies, for $k \geq 2$,
$N_h^k(s,a) = 0$ implies $(k - 1)d_h^\mu(s,a) < 8\iota$. Thus a state-action pair with positive behavior occupancy can have zero count for at most $1 + 8\iota/d_h^\mu(s,a)$ episodes. Using Assumption~\ref{ass1},
\begin{align*}
\sum_{k = 1}^K\sum_{s,a}d_h^{\pi^\star}(s,a)\mathbf1\{N_h^k(s,a) = 0\} \leq \sum_{(s,a):\,d_h^\mu(s,a) > 0}d_h^{\pi^\star}(s,a)\left(1 + \frac{8\iota}{d_h^\mu(s,a)}\right) \leq 1 + 8SA C_\eta^\star\iota,
\end{align*}
which proves the third claim of Lemma~\ref{lem:count-control}.

In the sum on the left-hand side of~(\ref{eq:count-inverse-cumulative}), the $k = 1$ contribution is zero. For $k \geq 2$,~(\ref{eq:count-inverse-bound}) with $m = k - 1$ gives
\begin{align*}
\sum_{s,a}d_h^{\pi^\star}(s,a)
\frac{\mathbf1\{N_h^k(s,a) \geq 1\}}{N_h^k(s,a) \vee 1}
& = \sum_{s,a}\frac{d_h^{\pi^\star}(s,a)}{d_h^\mu(s,a)}
\frac{d_h^\mu(s,a)\mathbf1\{N_h^k(s,a) \geq 1\}}{N_h^k(s,a) \vee 1}\leq \frac{8SA C_\eta^\star\iota}{k - 1}.
\end{align*}
Summing over $k$ and using $\sum_{j = 1}^{K - 1} 1/j \leq 1 + \log K$ proves the fourth claim of Lemma~\ref{lem:count-control}.

Finally,~(\ref{eq:count-inverse-bound}) with $m = K$ gives
$d_h^\mu(s,a)/(N_h^{K + 1}(s,a) \vee 1) \leq 8\iota/K$. Hence
\begin{align*}
\sum_{s,a}d_h^{\pi^\star}(s,a)\mathbf1\{N_h^{K + 1}(s,a) = 0\}
& \leq \frac{8\iota}{K}\sum_{s,a}\frac{d_h^{\pi^\star}(s,a)}{d_h^\mu(s,a)} \leq \frac{8SA C_\eta^\star\iota}{K},\\
\sum_{s,a}d_h^{\pi^\star}(s,a)
\frac{\mathbf1\{N_h^{K + 1}(s,a) \geq 1\}}{N_h^{K + 1}(s,a) \vee 1}
& \leq \frac{8\iota}{K}\sum_{s,a}\frac{d_h^{\pi^\star}(s,a)}{d_h^\mu(s,a)} \leq \frac{8SA C_\eta^\star\iota}{K}.
\end{align*}
This proves the final two claims of Lemma~\ref{lem:count-control}.
\end{proof}

The following concentration bound is an immediate consequence of Lemma~B.6 of \citet{shi2022pessimistic}, applied with $C_d=1$, $C_w=B_h$, and $W_{h+1}^k(s,a)=U_h^k(s,a)\mathbf 1$, where $\mathbf 1$ denotes the all-ones state vector, together with $(U_h^k(s,a))^2\le B_hU_h^k(s,a)$. The logarithmic factors are absorbed into $\iota$.

\begin{lemma}\label{lem:occupancy-freedman}
Suppose Assumption~\ref{ass1} holds. Fix nonnegative $\mathcal D_{< k}$-measurable functions
$U_h^k:\mathcal S \times \mathcal A \to [0,B_h]$. For any $\delta \in (0,1)$, let
$\iota := \log\frac{SAT}{\delta}$. With probability at least $1 - \delta$, simultaneously for every $h \in [H]$,
$$
\begin{aligned}
&\left|\sum_{k = 1}^K\frac{d_h^{\pi^\star}(s_h^k,a_h^k)}{d_h^\mu(s_h^k,a_h^k)}U_h^k(s_h^k,a_h^k) - \sum_{k = 1}^K\sum_{s,a}d_h^{\pi^\star}(s,a)U_h^k(s,a)\right|\\
&\qquad \leq c\sqrt{C_\eta^\star B_h\iota
\sum_{k = 1}^K\sum_{s,a}d_h^{\pi^\star}(s,a)U_h^k(s,a)} + cC_\eta^\star B_h\iota.
\end{aligned}
$$
\end{lemma}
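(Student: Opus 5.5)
The plan is to prove Lemma~\ref{lem:occupancy-freedman} directly with Lemma~\ref{lem:freedman} and a peeling argument over the size of the variance, rather than only citing \citet{shi2022pessimistic}. Fix $h\in[H]$ and write $w_h(s,a):=d_h^{\pi^\star}(s,a)/d_h^\mu(s,a)$, using the convention $0/0=0$. By Assumption~\ref{ass1}, $0\le w_h\le C_\eta^\star$ on pairs with $d_h^\mu(s,a)>0$. Pairs with $d_h^\mu(s,a)=0$ are never sampled and also carry no $d_h^{\pi^\star}$ mass, so they can be ignored. Define
\[
X_k:=w_h(s_h^k,a_h^k)U_h^k(s_h^k,a_h^k)-\sum_{s,a}d_h^{\pi^\star}(s,a)U_h^k(s,a).
\]

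The first step is to check that $X_k$ is a martingale difference for the filtration $\mathcal D_{<k+1}$. The function $U_h^k$ is $\mathcal D_{<k}$-measurable. The trajectory $\tau^k$ is independent of $\mathcal D_{<k}$, and $(s_h^k,a_h^k)\sim d_h^\mu$. Hence
\[
\mathbb E\bigl[w_hU_h^k(s_h^k,a_h^k)\mid\mathcal D_{<k}\bigr]=\sum_{s,a}d_h^\mu(s,a)\,w_h(s,a)\,U_h^k(s,a)=\sum_{s,a}d_h^{\pi^\star}(s,a)\,U_h^k(s,a),
\]
so $\mathbb E[X_k\mid\mathcal D_{<k}]=0$. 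The increments are bounded: $|X_k|\le C_\eta^\star B_h$, since both terms lie in $[0,C_\eta^\star B_h]$ (the second because $\sum_{s,a}d_h^{\pi^\star}(s,a)=1$).

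The second step bounds the conditional variance by the mean, which is what makes the bound self-normalized. Using $w_h^2\le C_\eta^\star w_h$ and $(U_h^k)^2\le B_hU_h^k$,
\[
\mathbb E[X_k^2\mid\mathcal D_{<k}]\le\sum_{s,a}d_h^\mu(s,a)\,w_h(s,a)^2\,U_h^k(s,a)^2\le C_\eta^\star B_h\sum_{s,a}d_h^{\pi^\star}(s,a)\,U_h^k(s,a).
\]
Summing over $k$ gives $W_K\le C_\eta^\star B_h Z$, where $Z:=\sum_{k=1}^K\sum_{s,a}d_h^{\pi^\star}(s,a)U_h^k(s,a)\in[0,KB_h]$.

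The main obstacle is that $Z$ is random, while Lemma~\ref{lem:freedman} requires a deterministic variance proxy. I would handle this by peeling. Consider the dyadic levels $\sigma_j^2:=C_\eta^\star B_h^2 2^j$ for $j=0,1,\ldots,\lceil\log_2 K\rceil$. For each level, apply Lemma~\ref{lem:freedman} to the stopped martingale whose increments are $X_k\mathbf 1\{\sum_{i\le k}\mathbb E[X_i^2\mid\mathcal D_{<i}]\le\sigma_j^2\}$. The indicator is $\mathcal D_{<k}$-predictable, so these are still martingale differences and their total conditional variance is at most $\sigma_j^2$ almost surely. Use $u=\iota+\log(2H(\lceil\log_2K\rceil+1))$ in each application.

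On the intersection of these good events, select the smallest $j$ with $C_\eta^\star B_hZ\le\sigma_j^2$. For that level the stopping indicator never triggers, so $\sum_kX_k$ equals the stopped sum. Minimality of $j$ gives $\sigma_j^2\le 2C_\eta^\star B_h Z + C_\eta^\star B_h^2$. The resulting bound is
\[
\Bigl|\sum_{k=1}^K X_k\Bigr|\le c\sqrt{\bigl(C_\eta^\star B_hZ+C_\eta^\star B_h^2\bigr)u}+cC_\eta^\star B_hu .
\]
Splitting the square root and using $\sqrt{C_\eta^\star B_h^2u}\le C_\eta^\star B_hu$ (valid because $C_\eta^\star\ge1$ and $u\ge1$) produces the stated form. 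A union bound over the $H$ stages and the $O(\log K)$ levels costs only a logarithmic factor, which together with the conversion from $u$ to $\iota$ is absorbed into $c$ and $\iota$, as the paper does.
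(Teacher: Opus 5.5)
Your proof is correct, and it takes a different route from the paper, which gives no derivation at all. The paper instantiates Lemma~B.6 of \citet{shi2022pessimistic} with $C_d=1$, $C_w=B_h$ and $W_{h+1}^k(s,a)=U_h^k(s,a)\mathbf 1$. With this choice $\mathbb P_{h,s,a}W_{h+1}^k=U_h^k(s,a)$, and the lemma's second-order term becomes $(U_h^k(s,a))^2\le B_hU_h^k(s,a)$, which yields the self-normalized form in the statement.

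Your argument is in effect a self-contained proof of that imported result in this special case:
\begin{itemize}
\item You verify the martingale-difference property directly, using the independence of $\tau^k$ from $\mathcal D_{<k}$ and the $\mathcal D_{<k}$-measurability of $U_h^k$.
\item You obtain the self-bounding variance estimate from $w_h^2\le C_\eta^\star w_h$ together with the same inequality $(U_h^k)^2\le B_hU_h^k$ that the paper invokes.
\item You handle the randomness of the variance proxy $C_\eta^\star B_hZ$ by predictable stopping and a union bound over $O(\log K)$ dyadic levels. Lemma~\ref{lem:freedman}, stated only for a deterministic $\sigma^2$, does not cover this on its own. Your device is the standard mechanism behind self-normalized Freedman bounds of the kind the cited lemma supplies.
\end{itemize}
The citation buys brevity. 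Your version makes the predictability requirement explicit, as well as the extra cost $\log\bigl(2H(\lceil\log_2K\rceil+1)\bigr)\le\log(6T)$ that gets absorbed into $\iota$, and it does not depend on matching an external lemma's hypotheses.

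One small bookkeeping point: both your claim $u\ge1$ and the absorption $u\le c\,\iota$ need $\iota$ bounded below by a constant. This holds once $SAT\ge2$, since then $\iota\ge\log 2$. The remaining case $SAT=1$ is trivial, because then $w_h\equiv1$ and $X_1\equiv0$; this is consistent with how the paper treats $SAT=1$ separately elsewhere.
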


\section{Proofs for Algorithm~\ref{alg:klq}}
\label{app:klq}
Throughout, constant factors in logarithmic confidence terms are absorbed into universal constants. For notational convenience, we define
$\Delta_h^k(s,a) := Q_h^k(s,a) - Q_h^\star(s,a)$.

\subsection{Lemmas for Theorem~\ref{thm:klq}}

\begin{lemma}\label{lem:klq-range}
For every $k \in [K + 1]$, $h \in [H]$, and $(s,a) \in \mathcal S \times \mathcal A$,
$$
0 \leq Q_h^k(s,a) \leq L_h, \qquad 0 \leq V_h^k(s) \leq L_h.
$$
\end{lemma}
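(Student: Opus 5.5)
We prove Lemma~\ref{lem:klq-range} by strong induction on the order in which Algorithm~\ref{alg:klq} performs updates. The key observation is that every update is either a copy or a convex combination of an old value with a target lying in $[0,L_h]$. The invariant we maintain is that, at every point of the run, all current entries satisfy $0\le Q_h(s,a)\le L_h$ and $0\le V_h(s)\le L_h$ for all $h\in[H]$, and $V_{H+1}\equiv 0$.

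At initialization, $Q_h^1\equiv 0$ and $V_h^1\equiv 0$, so the invariant holds trivially. The copy step (line~\ref{line:carry}) preserves it. Within episode $k$ the algorithm processes $h=1,\ldots,H$, and the Q update at step $h$ (line~\ref{line:q-update}) uses $V_{h+1}^k$, the value from the previous episode. This value is either the terminal $V_{H+1}^k\equiv 0$ or an entry covered by the induction hypothesis, so $0\le V_{h+1}^k(s_{h+1}^k)\le L_{h+1}$. Since $r_h^k\in[0,1]$, the target $r_h^k+V_{h+1}^k(s_{h+1}^k)$ lies in $[0,1+L_{h+1}]=[0,L_h]$. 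The step size $\alpha_n=\frac{H+1}{H+n}$ lies in $(0,1]$ for $n\ge 1$, and $\alpha_1=1$. Hence $Q_h^{k+1}(s_h^k,a_h^k)$ is a convex combination of $Q_h^k(s_h^k,a_h^k)\in[0,L_h]$ and a target in $[0,L_h]$, so it lies in $[0,L_h]$. All other entries are unchanged copies.

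For the value update (line~\ref{line:v-update}), we use the fact, already invoked in the proof of Proposition~\ref{prop:soft-optimality}, that $F_{h,s}(q)$ lies between $\min_{a\in\operatorname{supp}\pi^{\mathrm{ref}}_h(\cdot\mid s)}q(a)$ and the corresponding maximum. The reason is that $F_{h,s}(q)=\frac1\eta\log\mathbb E_{a\sim\pi_h^{\mathrm{ref}}}[e^{\eta q(a)}]$ is a log-mean-exp, and it is monotone in each coordinate. Since every coordinate of $Q_h^{k+1}(s_h^k,\cdot)$ lies in $[0,L_h]$ by the previous step, we get $0\le V_h^{k+1}(s_h^k)\le L_h$. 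This closes the induction, and the bounds hold for every $k\in[K+1]$.

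There is no real obstacle here. The only point needing care is the ordering: the update at step $h$ in episode $k$ reads $V_{h+1}^k$, the value from before the episode, not $V_{h+1}^{k+1}$. The induction over episodes $k$ therefore suffices, with the inner loop order being irrelevant. One must also note that $\alpha_n\le 1$, so each update is a genuine convex combination.
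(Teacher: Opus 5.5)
Your proof is correct and follows essentially the same argument as the paper. It uses induction over episodes: the target $r_h^k+V_{h+1}^k(s_{h+1}^k)\in[0,1+L_{h+1}]=[0,L_h]$ makes the Q-update a convex combination of values in $[0,L_h]$, and $F_{h,s}$ lies between the minimum and maximum coordinates of its argument. Your explicit checks that $\alpha_n\le 1$ and that the stage-$h$ update reads the pre-episode value $V_{h+1}^k$ are details the paper leaves implicit.
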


\begin{proof}
The claim holds at initialization. Suppose it holds before episode $k$. At a stage-$h$ update,
$$
0 \leq r_h^k + V_{h + 1}^k(s_{h + 1}^k) \leq 1 + L_{h + 1} = L_h.
$$
Thus the updated Q-value is a convex combination of two values in $[0,L_h]$, while all other Q-values are unchanged. Since $F_{h,s}(q)$ lies between $\min_a q(a)$ and $\max_a q(a)$, the updated value also lies in $[0,L_h]$. Induction over episodes proves the claim.
\end{proof}

\begin{lemma}\label{lem:kl-geo}
For every $k \in [K + 1]$, $h \in [H]$, and $s \in \mathcal S$,
\begin{equation}
\label{eq:kl-value-error}
(V_h^k(s)-V_h^\star(s))^2
\leq
\gamma_h e_h^k(s).
\end{equation}
Moreover, the output policy in Algorithm~\ref{alg:klq} satisfies
\begin{equation}
\label{eq:kl-policy-bound}
\operatorname{SubOpt}(\widehat\pi)
\leq
\frac{\eta}{2}
\sum_{h=1}^H
\Gamma_{1:h}
\mathbb E_{(s,a)\sim d_h^{\pi^\star}}
\big[(Q_h^{K+1}(s,a)-Q_h^\star(s,a))^2\big].
\end{equation}
\end{lemma}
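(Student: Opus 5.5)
The plan is to prove both parts with one tool: a likelihood-ratio comparison of Gibbs distributions. For $q,q'\in[0,L_h]^{\mathcal A}$ with $\Delta:=q-q'$, and any $t\in[0,1]$, set $p_t:=G_{h,s}(q'+t\Delta)$. From equation~(\ref{eq:gibbs-map}),
\[
\frac{p_t(a)}{G_{h,s}(q')(a)}=\frac{e^{\eta t\Delta(a)}}{\bigl\langle G_{h,s}(q'),e^{\eta t\Delta}\bigr\rangle_{\mathcal A}}.
\]
Since $|\Delta(a)|\le L_h$, the numerator is at most $e^{\eta L_h}$ and the denominator is at least $e^{-\eta L_h}$. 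Hence $p_t(a)\le \gamma_h\,G_{h,s}(q')(a)$ for every $a$. Both $G_{h,s}(q')$ and $p_t$ have the same support as $\pi^{\mathrm{ref}}_h(\cdot\mid s)$. By Lemma~\ref{lem:klq-range} and Proposition~\ref{prop:soft-optimality}, this applies with $q=Q_h^k(s,\cdot)$ and $q'=Q_h^\star(s,\cdot)$.

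\textbf{Value error~(\ref{eq:kl-value-error}).} The gradient of $F_{h,s}$ is $G_{h,s}$, so by the mean value theorem $V_h^k(s)-V_h^\star(s)=\langle p_t,\Delta_h^k(s,\cdot)\rangle$ for some $t\in[0,1]$. Jensen's inequality gives $\langle p_t,\Delta\rangle^2\le\langle p_t,\Delta^2\rangle$. The ratio bound then gives $\langle p_t,\Delta^2\rangle\le\gamma_h\langle\pi_h^\star(\cdot\mid s),\Delta^2\rangle=\gamma_h e_h^k(s)$.

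\textbf{Suboptimality~(\ref{eq:kl-policy-bound}).}
\begin{enumerate}
\item Use the regularized performance difference lemma: telescope $V_h^\star-V_h^{\widehat\pi}$ along trajectories of $\widehat\pi$. Combined with the Gibbs identity
\[
F_{h,s}(Q_h^\star)-\langle p,Q_h^\star\rangle+\tfrac1\eta\mathrm{KL}(p\|\pi_h^{\mathrm{ref}})=\tfrac1\eta\mathrm{KL}(p\|\pi_h^\star),
\]
this yields
\[
\operatorname{SubOpt}(\widehat\pi)=\frac1\eta\sum_{h=1}^H\mathbb E_{s\sim d_h^{\widehat\pi}}\bigl[\mathrm{KL}(\widehat\pi_h(\cdot\mid s)\|\pi_h^\star(\cdot\mid s))\bigr].
\]
\item Bound each KL term. $\mathrm{KL}(G_{h,s}(\widehat Q_h)\|G_{h,s}(Q_h^\star))$ is the Bregman divergence of $\eta F_{h,s}$ between $\widehat Q_h(s,\cdot)$ and $Q_h^\star(s,\cdot)$. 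The Hessian of $\eta F_{h,s}$ at $q'+t\Delta$ is $\eta^2$ times the covariance under $p_t$. The integral form of Taylor's remainder therefore gives
\[
\mathrm{KL}\le\frac{\eta^2}{2}\sup_t\mathrm{Var}_{p_t}(\Delta)\le\frac{\eta^2}{2}\sup_t\langle p_t,\Delta^2\rangle\le\frac{\eta^2}{2}\gamma_h\,\mathbb E_{a\sim\pi_h^\star(\cdot\mid s)}[(\widehat Q_h-Q_h^\star)^2(s,a)].
\]
\item Change measure from $d_h^{\widehat\pi}$ to $d_h^{\pi^\star}$. Writing $d_h^\pi(s)$ as a sum over length-$(h-1)$ histories, and using $\widehat\pi_j(a\mid s')\le\gamma_j\pi_j^\star(a\mid s')$ (the ratio bound at $t=1$ with $q=\widehat Q_j$), gives $d_h^{\widehat\pi}(s)\le\Gamma_{1:h-1}d_h^{\pi^\star}(s)$ pointwise.
\item Multiply by the $\gamma_h$ from step 2. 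This yields $\Gamma_{1:h}$ and the factor $\frac\eta2$, which is exactly~(\ref{eq:kl-policy-bound}).
\end{enumerate}

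\textbf{Main obstacle.} The delicate step is the KL bound in step 2. It must use the second-order, Bregman-divergence form so that the result is quadratic in $\Delta$; a first-order bound would give only a linear dependence. It must also handle the variance under the interpolated distribution $p_t$ rather than under $\pi^\star$, which is where the ratio bound is invoked a second time. Support issues need one check: $\widehat\pi$ and $\pi^\star$ share their support, so the KL is finite and the $0\cdot\infty$ conventions never bite.
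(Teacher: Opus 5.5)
Your proposal is correct and follows essentially the same route as the paper. Both use the Gibbs likelihood-ratio bound $G_{h,s}(Q_{h,t}^k(s,\cdot))\le\gamma_h\pi_h^\star(\cdot\mid s)$, a first-order expansion of $F_{h,s}$ plus Jensen for the value error, and the regularized performance-difference identity. Both then bound the KL by a second-order Taylor expansion, using the variance under the interpolated Gibbs policy, and finish with the occupancy comparison $d_h^{\widehat\pi}\le\Gamma_{1:h-1}d_h^{\pi^\star}$. The remaining differences are cosmetic: you use the mean value theorem where the paper uses the integral form, and you phrase the KL as a Bregman divergence of $\eta F_{h,s}$ rather than through the auxiliary function $\phi$.
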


\begin{proof}
Fix $k \in [K + 1]$, $h \in [H]$, and $s \in \mathcal S$. By Lemma~\ref{lem:klq-range} and Proposition~\ref{prop:soft-optimality}, for every $a \in \mathcal A$,
$$
Q_h^k(s,a),Q_h^\star(s,a) \in [0,L_h], \qquad - L_h \leq \Delta_h^k(s,a) \leq L_h.
$$
For $t \in [0,1]$, set $Q_{h,t}^k(s,\cdot) := Q_h^\star(s,\cdot) + t\Delta_h^k(s,\cdot)$. Using the Gibbs formula and $\pi_h^\star(\cdot \mid s) = G_{h,s}(Q_h^\star(s,\cdot))$, for every action in the support of $\pi_h^{\mathrm{ref}}(\cdot \mid s)$,
\begin{align}
G_{h,s}(Q_{h,t}^k(s,\cdot))(a)
& = \frac{\pi_h^\star(a \mid s)e^{\eta t\Delta_h^k(s,a)}}{\sum_{b \in \mathcal A}\pi_h^\star(b \mid s)e^{\eta t\Delta_h^k(s,b)}}\notag\\
& \leq \pi_h^\star(a \mid s)e^{2\eta tL_h} \leq \gamma_h\pi_h^\star(a \mid s).
\label{eq:klgeo-ratio-bound}
\end{align}
The same inequality is trivial outside the support, where both probabilities are zero. Since $\nabla F_{h,s} = G_{h,s}$, the fundamental theorem of calculus gives
\begin{align*}
(V_h^k(s) - V_h^\star(s))^2
& = \left(\int_0^1\left\langle G_{h,s}(Q_{h,t}^k(s,\cdot)),\Delta_h^k(s,\cdot)\right\rangle_{\mathcal A}\,dt\right)^2\\
& \leq \int_0^1\sum_{a \in \mathcal A}G_{h,s}(Q_{h,t}^k(s,\cdot))(a)(\Delta_h^k(s,a))^2\,dt\\
& \leq \gamma_h\sum_{a \in \mathcal A}\pi_h^\star(a \mid s)(\Delta_h^k(s,a))^2 = \gamma_he_h^k(s),
\end{align*}
where the first inequality follows from Jensen's inequality, applied first to the integral and then to the action distribution, and the second follows from~(\ref{eq:klgeo-ratio-bound}). This proves the first claim of Lemma~\ref{lem:kl-geo}. 

We next bound the KL divergence between the corresponding Gibbs policies. Define
$$
\phi(t) := \log\sum_{b \in \mathcal A}\pi_h^{\mathrm{ref}}(b \mid s)e^{\eta Q_{h,1 - t}^k(s,b)},\qquad t \in [0,1].
$$
The Gibbs formulas imply
$$
\mathrm{KL}\!\left(G_{h,s}(Q_h^k(s,\cdot))\middle\|\pi_h^\star(\cdot \mid s)\right) = \phi(1) - \phi(0) - \phi'(0),
$$
and differentiating $\phi(t)$ twice yields
$$
\phi''(t) = \eta^2\operatorname{Var}_{a \sim G_{h,s}(Q_{h,1 - t}^k(s,\cdot))}(\Delta_h^k(s,a)).
$$

Therefore, Taylor's theorem with integral remainder, together with $\operatorname{Var}(X) \leq \mathbb E[X^2]$ and~(\ref{eq:klgeo-ratio-bound}), yields
\begin{align}
\mathrm{KL}\!\left(G_{h,s}(Q_h^k(s,\cdot))\middle\|\pi_h^\star(\cdot \mid s)\right)
& = \eta^2\int_0^1(1 - t)\operatorname{Var}_{a \sim G_{h,s}(Q_{h,1 - t}^k(s,\cdot))}(\Delta_h^k(s,a))\,dt\notag\\
& \leq \eta^2\gamma_h\int_0^1(1 - t)\,dt\sum_{a \in \mathcal A}\pi_h^\star(a \mid s)(\Delta_h^k(s,a))^2 = \frac{\eta^2\gamma_h}{2}e_h^k(s).
\label{eq:klgeo-pointwise-kl}
\end{align}
We now turn to the output policy. Taking $k = K + 1$ and $t = 1$ in~(\ref{eq:klgeo-ratio-bound}) gives
$\widehat\pi_h(a \mid s) \leq \gamma_h\pi_h^\star(a \mid s)$. Since $d_1^{\widehat\pi} = d_1^{\pi^\star} = \rho$, induction over $h$ gives
\begin{align}
d_{h + 1}^{\widehat\pi}(s')
& = \sum_{s,a}d_h^{\widehat\pi}(s)\widehat\pi_h(a \mid s)\mathbb P_h(s' \mid s,a)\notag\\
& \leq \Gamma_{1:h - 1}\gamma_h\sum_{s,a}d_h^{\pi^\star}(s)\pi_h^\star(a \mid s)\mathbb P_h(s' \mid s,a) = \Gamma_{1:h}d_{h + 1}^{\pi^\star}(s').
\label{eq:klgeo-occupancy-comparison}
\end{align}
Thus $d_h^{\widehat\pi}(s) \leq \Gamma_{1:h - 1}d_h^{\pi^\star}(s)$ for every $h$ and $s$.

It remains to convert the pointwise KL bound into policy suboptimality. For any policy $\pi$ with finite regularized value, the Gibbs formula for $\pi^\star$ gives
$$
\log\frac{\pi_h^\star(a \mid s)}{\pi_h^{\mathrm{ref}}(a \mid s)} = \eta(Q_h^\star(s,a) - V_h^\star(s))
$$
on the common support. Combining this identity with Bellman consistency yields
$$
V_h^\star(s) - V_h^\pi(s) = \frac1\eta\mathrm{KL}\!\left(\pi_h(\cdot \mid s)\middle\|\pi_h^\star(\cdot \mid s)\right) + \sum_a\pi_h(a \mid s)(\mathbb P_h(V_{h + 1}^\star - V_{h + 1}^\pi))(s,a).
$$
Taking expectation with respect to $s\sim d_h^\pi$ and summing over $h$, the next-stage value terms telescope, yielding
$$
J_\eta(\pi^\star) - J_\eta(\pi) = \frac1\eta\sum_{h = 1}^H\mathbb E_{s \sim d_h^\pi} \left[\mathrm{KL}\!\left(\pi_h(\cdot \mid s)\middle\|\pi_h^\star(\cdot \mid s)\right)\right].
$$
Taking $\pi = \widehat\pi$, using~(\ref{eq:klgeo-pointwise-kl}) and~(\ref{eq:klgeo-occupancy-comparison}), and then expanding $e_h^{K + 1}$ gives
\begin{align*}
\operatorname{SubOpt}(\widehat\pi)
& \leq \frac\eta2\sum_{h = 1}^H\gamma_h\sum_s d_h^{\widehat\pi}(s)e_h^{K + 1}(s)\\
& \leq \frac\eta2\sum_{h = 1}^H\Gamma_{1:h}\sum_s d_h^{\pi^\star}(s)e_h^{K + 1}(s)\\
& = \frac\eta2\sum_{h = 1}^H\Gamma_{1:h}\mathbb E_{(s,a) \sim d_h^{\pi^\star}}
\big[(Q_h^{K + 1}(s,a) - Q_h^\star(s,a))^2\big],
\end{align*}
which proves the second claim of Lemma~\ref{lem:kl-geo}.
\end{proof}

We next combine Lemma~\ref{lem:weighted-transition} with the KL value-error
bound in Lemma~\ref{lem:kl-geo} to derive a pointwise recursion for the
squared Q-estimation error. This recursion will be used to control both the
cumulative squared error and the error of the final iterate.
\begin{lemma}\label{lem:pointwise-recursion}
For any $\delta \in (0,1)$, let $\iota := \log\frac{SAT}{\delta}$. With probability at least $1 - \delta$, simultaneously for every $k \in [K + 1]$, $h \in [H]$, and $(s,a) \in \mathcal S \times \mathcal A$,

\begin{equation}
\label{eq:pointwise-recursion}
\begin{aligned}
(\Delta_h^k(s,a))^2 \leq {}&L_h^2\mathbf1\{N_h^k(s,a) = 0\} + c_0H^2L_{h + 1}^2\iota^2\frac{\mathbf1\{N_h^k(s,a) \geq 1\}}{N_h^k(s,a) \vee 1}\\
& + \left(1 + \frac1H\right)\gamma_{h + 1}
\sum_{n = 1}^{N_h^k(s,a)}\alpha_n^{N_h^k(s,a)}
(\mathbb P_he_{h + 1}^{k_{h,n}(s,a)})(s,a),
\end{aligned}
\end{equation}

where $c_0 > 0$ is a universal constant.
\end{lemma}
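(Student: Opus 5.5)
We first dispose of the trivial case $N_h^k(s,a)=0$: then $Q_h^k(s,a)=0$ and $Q_h^\star(s,a)\in[0,L_h]$ by Proposition~\ref{prop:soft-optimality}, so $(\Delta_h^k(s,a))^2\le L_h^2$. For $N:=N_h^k(s,a)\ge1$ we unroll the KLQ update in the standard way of \citet{jin2018q}. Since $\alpha_1=1$, $\alpha_0^N=0$, and the weights sum to one (Lemma~\ref{lem:historical-weights}), writing $k_n:=k_{h,n}(s,a)$ we get
\[
Q_h^k(s,a)=\sum_{n=1}^N\alpha_n^N\bigl(r_h(s,a)+V_{h+1}^{k_n}(s_{h+1}^{k_n})\bigr).
\]
Subtracting $Q_h^\star(s,a)=r_h(s,a)+\mathbb P_{h,s,a}V_{h+1}^\star$ then gives the decomposition
\[
\Delta_h^k(s,a)=\sum_{n=1}^N\alpha_n^N\,\mathbb P_{h,s,a}\bigl(V_{h+1}^{k_n}-V_{h+1}^\star\bigr)+\sum_{n=1}^N\alpha_n^N(\widehat{\mathbb P}_h^{k_n}-\mathbb P_{h,s,a})V_{h+1}^{k_n}.
\]

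The martingale term is controlled by Lemma~\ref{lem:weighted-transition}. Its hypotheses hold because $V_{h+1}^k$ is $\mathcal D_{<k}$-measurable (it is frozen during episode $k$, as stage $h$ uses $V^k_{h+1}$) and takes values in $[0,L_{h+1}]$ by Lemma~\ref{lem:klq-range}. On the resulting event of probability at least $1-\delta$, this term is at most $cL_{h+1}\iota\sqrt{H/N}$.

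We then square using $(x+y)^2\le(1+\frac1H)x^2+(1+H)y^2$. This turns the martingale part into at most $(1+H)c^2L_{h+1}^2\iota^2H/N\le c_0H^2L_{h+1}^2\iota^2/N$, which is the second term in (\ref{eq:pointwise-recursion}). For the bias part, apply Jensen twice, first over the weights $\alpha_n^N$ (which sum to one) and then over $\mathbb P_{h,s,a}$:
\[
\Bigl(\sum_n\alpha_n^N\mathbb P_{h,s,a}(V_{h+1}^{k_n}-V_{h+1}^\star)\Bigr)^2\le\sum_n\alpha_n^N\,\mathbb P_{h,s,a}\bigl[(V_{h+1}^{k_n}-V_{h+1}^\star)^2\bigr].
\]
Invoking (\ref{eq:kl-value-error}) of Lemma~\ref{lem:kl-geo} pointwise in the next state, with $\gamma_{h+1}$, bounds this by $\gamma_{h+1}\sum_n\alpha_n^N(\mathbb P_he_{h+1}^{k_n})(s,a)$. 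For $h=H$ everything vanishes since $V_{H+1}\equiv0$, consistent with $\gamma_{H+1}=1$ and $e_{H+1}\equiv0$. Combining the pieces yields (\ref{eq:pointwise-recursion}) simultaneously for all $(k,h,s,a)$.

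The main obstacle is ensuring the concentration event is uniform over all $k$. Lemma~\ref{lem:weighted-transition} is stated for every $N\le N_h^{K+1}(s,a)$, and the visits before episode $k$ are exactly the first $N_h^k(s,a)$ visits. Hence a single event covers all $k$, and no additional union bound over $k$ is needed. The only remaining care is in the choice of the $(1+\frac1H)$ split, so that the $H$ factor is absorbed into $c_0H^2$.
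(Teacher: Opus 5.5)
Your proposal is correct and matches the paper's proof step for step. Both handle the zero-count case, unroll with $\alpha_0^N=0$ into the martingale-plus-bias decomposition, and control the martingale term by Lemma~\ref{lem:weighted-transition} with the split $(x+y)^2\le(1+H)x^2+(1+\frac1H)y^2$. Both then apply Jensen over the weights and then over $\mathbb P_{h,s,a}$, and finish with (\ref{eq:kl-value-error}) at stage $h+1$. Your remarks on the $\mathcal D_{<k}$-measurability of $V_{h+1}^k$, on uniformity in $k$ via $N_h^k(s,a)\le N_h^{K+1}(s,a)$, and on the $h=H$ boundary case are accurate details that the paper leaves implicit.
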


\begin{proof}
By Lemma~\ref{lem:klq-range}, $V_{h+1}^k$ satisfies
$0\leq V_{h+1}^k\leq L_{h+1}$, and by the update order of
Algorithm~\ref{alg:klq}, $V_{h+1}^k$ is
$\mathcal D_{<k}$-measurable.
Hence Lemma~\ref{lem:weighted-transition} applies.
Work on its event of probability at least $1 - \delta$.
Fix $(k,h,s,a)$, write $N := N_h^k(s,a)$, and suppress the dependence of $k_{h,n}$ on $(s,a)$. If $N = 0$, then $Q_h^k(s,a) = 0$ and $(\Delta_h^k(s,a))^2 \leq L_h^2$ by~(\ref{eq:soft-optimal-range}).

Suppose $N \geq 1$. Unrolling the Q-learning update and subtracting the optimal Bellman equation with the same coefficients $\alpha_n^N$ gives
$$
\Delta_h^k(s,a) = \sum_{n = 1}^N\alpha_n^N(\widehat{\mathbb P}_h^{k_{h,n}} - \mathbb P_{h,s,a})V_{h + 1}^{k_{h,n}} + \sum_{n = 1}^N\alpha_n^N\mathbb P_{h,s,a}(V_{h + 1}^{k_{h,n}} - V_{h + 1}^\star),
$$
since $\alpha_0^N = 0$. Applying $(x + y)^2 \leq (H + 1)x^2 + (1 + 1/H)y^2$ and Lemma~\ref{lem:weighted-transition} yields
\begin{align*}
(\Delta_h^k(s,a))^2
& \leq \frac{c_0H^2L_{h + 1}^2\iota^2}{N} + \left(1 + \frac1H\right)
\left(\sum_{n = 1}^N\alpha_n^N\mathbb P_{h,s,a}(V_{h + 1}^{k_{h,n}} - V_{h + 1}^\star)\right)^2\\
& \leq \frac{c_0H^2L_{h + 1}^2\iota^2}{N} + \left(1 + \frac1H\right)\sum_{n = 1}^N\alpha_n^N
\mathbb P_{h,s,a}\!\left[(V_{h + 1}^{k_{h,n}} - V_{h + 1}^\star)^2\right]\\
& \leq \frac{c_0H^2L_{h + 1}^2\iota^2}{N} + \left(1 + \frac1H\right)\gamma_{h + 1}
\sum_{n = 1}^N\alpha_n^N(\mathbb P_he_{h + 1}^{k_{h,n}})(s,a).
\end{align*}
The second inequality follows from Jensen's inequality, first with respect to the weights $\{\alpha_n^N\}$ and then with respect to $\mathbb P_h(\cdot \mid s,a)$, the last follows from~(\ref{eq:kl-value-error}) at stage $h + 1$. Together with the zero-count case, this proves Lemma~\ref{lem:pointwise-recursion}.
\end{proof}

\begin{lemma}\label{lem:klq-squared-error}
Suppose Assumption~\ref{ass1} holds and $SAT>1$. For any $\delta\in(0,1)$, let
$\iota:=\log\frac{SAT}{\delta}$. Then, with probability at least
$1-\delta$, simultaneously for every $h\in[H]$,
\begin{equation}
\label{eq:squared-error-recursion}
\begin{aligned}
\sum_{k=1}^K\sum_s d_h^{\pi^\star}(s)e_h^k(s)
\leq{}&
\left(1+\frac5H\right)\gamma_{h+1}
\sum_{k=1}^K\sum_s
d_{h+1}^{\pi^\star}(s)e_{h+1}^k(s)\\
&+
c_0SA C_\eta^\star
\gamma_{h+1}
H^2L_h^2\iota^3(1+\log K).
\end{aligned}
\end{equation}
Consequently,
\begin{equation}
\label{eq:squared-error-bound}
\sum_{k=1}^K
\mathbb E_{(s,a)\sim d_h^{\pi^\star}}
\big[(Q_h^k(s,a)-Q_h^\star(s,a))^2\big]
\leq
c_1SA C_\eta^\star
\Gamma_{h+1:H}
H^2L_h^3\iota^3(1+\log K),
\end{equation}
where $c_0,c_1>0$ are universal constants.
\end{lemma}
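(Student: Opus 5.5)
The plan is to start from the pointwise recursion of Lemma~\ref{lem:pointwise-recursion}, average it over $(s,a)\sim d_h^{\pi^\star}$, and sum over $k\in[K]$. This requires intersecting the events of Lemma~\ref{lem:pointwise-recursion}, Lemma~\ref{lem:count-control}, and Lemma~\ref{lem:occupancy-freedman}, each taken at level $\delta/3$; the change in $\iota$ is absorbed into constants. The first two terms of~(\ref{eq:pointwise-recursion}) are then handled directly by~(\ref{eq:count-zero-cumulative}) and~(\ref{eq:count-inverse-cumulative}). They contribute at most $L_h^2(1+8SAC_\eta^\star\iota)+c_0H^2L_{h+1}^2\iota^2\cdot 8SAC_\eta^\star\iota(1+\log K)$, which is of the order of the claimed additive term, since $\gamma_{h+1}\ge 1$.

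The main work is the propagation term
\[
\Big(1+\tfrac1H\Big)\gamma_{h+1}\sum_{k=1}^K\sum_{s,a}d_h^{\pi^\star}(s,a)\sum_{n=1}^{N_h^k(s,a)}\alpha_n^{N_h^k(s,a)}(\mathbb P_he_{h+1}^{k_{h,n}(s,a)})(s,a).
\]
I will exchange the order of summation, as in \citet{jin2018q}. Each visit episode $k'=k_{h,n}(s,a)$ receives total weight $\sum_{k}\alpha_n^{N_h^k(s,a)}$ over later episodes. Because $N_h^k$ is constant between visits, this total is not directly $\sum_{M\ge n}\alpha_n^M$: each $M$ is repeated for the number of episodes spent at count $M$. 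To handle this, I will rewrite the sum along the data rather than along episodes. The quantity $d_h^{\pi^\star}(s,a)\cdot\#\{k:N_h^k=M\}$ should be comparable to $\frac{d_h^{\pi^\star}}{d_h^\mu}(s,a)$ per visit, since the inter-visit gap is about $1/d_h^\mu(s,a)$.

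Concretely, I would express the propagation term as $\sum_{k'}\frac{d_h^{\pi^\star}}{d_h^\mu}(s_h^{k'},a_h^{k'})\,U_h^{k'}(s_h^{k'},a_h^{k'})$ with $U_h^{k'}=\mathbb P_he_{h+1}^{k'}$, up to a factor $(1+\tfrac1H)$ from Lemma~\ref{lem:historical-weights} and a factor controlled by the binomial gap concentration. I will then apply Lemma~\ref{lem:occupancy-freedman} with $B_h=L_{h+1}^2$ to convert the empirical visit sum back to $\sum_{k}\sum_{s,a}d_h^{\pi^\star}(s,a)(\mathbb P_he^k_{h+1})(s,a)=\sum_k\sum_{s'}d_{h+1}^{\pi^\star}(s')e_{h+1}^k(s')$, using the occupancy flow identity.

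The Freedman square-root term $\sqrt{C_\eta^\star L_{h+1}^2\iota X}$ is absorbed by AM-GM into $\frac1H X+HC_\eta^\star L_{h+1}^2\iota$. Collecting the factors $(1+\tfrac1H)^{O(1)}$ should yield $(1+\tfrac5H)$. I expect this reweighting step to be the main obstacle: relating episode counts between visits to $1/d_h^\mu$ uniformly, rather than in expectation. A cleaner alternative would be to index the $k$-sum by the most recent visit and use~(\ref{eq:count-inverse-bound}) to bound the dwell time at each count.

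For~(\ref{eq:squared-error-bound}), I will unroll~(\ref{eq:squared-error-recursion}) backward from $\mathcal E_{H+1}=0$. This gives $\mathcal E_h\le\sum_{j=h}^{H}(1+5/H)^{j-h}\Gamma_{h+1:j+1}\,c_0SAC_\eta^\star H^2L_j^2\iota^3(1+\log K)$. Using $(1+5/H)^H\le e^5$, $\Gamma_{h+1:j+1}\le\Gamma_{h+1:H}$ (each $\gamma\ge1$, and $\gamma_{H+1}=1$), $L_j\le L_h$, and the fact that there are at most $L_h$ terms, we obtain $c_1SAC_\eta^\star\Gamma_{h+1:H}H^2L_h^3\iota^3(1+\log K)$.
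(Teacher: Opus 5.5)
Your treatment of the count terms, your final application of Lemma~\ref{lem:occupancy-freedman} with $U_h^k=\mathbb P_he_{h+1}^k$ plus the occupancy flow identity, and your backward unrolling for~(\ref{eq:squared-error-bound}) all match the paper. The gap is the reweighting step you flag yourself, and neither mechanism you suggest closes it. The recursion~(\ref{eq:squared-error-recursion}) is only useful because the propagated term has coefficient $(1+O(1/H))\gamma_{h+1}$, so that $(1+5/H)^H\le e^5$ after unrolling. Suppose you compare the dwell times $\#\{k:N_h^k(s,a)=M\}$ with $1/d_h^\mu(s,a)$ uniformly and with high probability, either through geometric tail bounds or through~(\ref{eq:count-inverse-bound}), which already loses a factor $8\iota$. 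Either way you get a multiplicative factor of order $\iota$, and in any case not $1+O(1/H)$, in front of $\sum_k\sum_s d_{h+1}^{\pi^\star}(s)e_{h+1}^k(s)$. Iterated over $H$ stages this becomes $(c\iota)^H$, and neither claimed bound follows. So ``a factor controlled by the binomial gap concentration'' works only if that factor is itself $1+O(1/H)$, and the proposal gives no argument for this.

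The missing idea is that no dwell-time analysis is needed. Apply Lemma~\ref{lem:occupancy-freedman} a second time, to the historical test function $U_h^k(s,a)=\sum_{n=1}^{N_h^k(s,a)}\alpha_n^{N_h^k(s,a)}(\mathbb P_he_{h+1}^{k_{h,n}(s,a)})(s,a)$. This function takes values in $[0,L_{h+1}^2]$ and is $\mathcal D_{<k}$-measurable, since $k_{h,n}(s,a)<k$. The lemma bounds the population term $\mathrm{(I)}=\sum_k\sum_{s,a}d_h^{\pi^\star}(s,a)U_h^k(s,a)$ by $(1+\frac1{8H})\sum_k\frac{d_h^{\pi^\star}(s_h^k,a_h^k)}{d_h^\mu(s_h^k,a_h^k)}U_h^k(s_h^k,a_h^k)+cC_\eta^\star HL_{h+1}^2\iota$. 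Here the AM-GM weight must be small, such as $\frac1{16H}$, so that the square-root term involving $\mathrm{(I)}$ can be moved to the left-hand side. In this empirical sum each episode contributes only at its visited pair, and at the $m$-th visit $N_h^k=m-1$, so every count value occurs exactly once per $(s,a)$. The exchange $\sum_{m=n+1}^{N_h^{K+1}(s,a)}\alpha_n^{m-1}\le\sum_{M\ge n}\alpha_n^M=1+\frac1H$ then goes through exactly. It lands on the visit sum that your existing Freedman step converts back to stage $h+1$. The averaging over inter-visit gaps that you wanted to do ``in expectation'' is exactly what this martingale step achieves, at a cost of only $1+O(1/H)$.
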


\begin{proof}
Invoke Lemmas~\ref{lem:pointwise-recursion} and~\ref{lem:count-control}
with failure probability $\delta/4$ each, and invoke
Lemma~\ref{lem:occupancy-freedman} twice with failure probability
$\delta/4$, first with
\begin{equation}
\label{eq:proof-squared-error-historical-test}
U_h^k(s,a) := \sum_{n = 1}^{N_h^k(s,a)} \alpha_n^{N_h^k(s,a)} (\mathbb P_he_{h + 1}^{k_{h,n}(s,a)})(s,a),
\end{equation}
and then with
\begin{equation}
\label{eq:proof-squared-error-current-test}
U_h^k(s,a) := (\mathbb P_he_{h + 1}^k)(s,a).
\end{equation}
In both applications take $B_h = L_{h + 1}^2$.
By Lemma~\ref{lem:klq-range} and
Proposition~\ref{prop:soft-optimality},
$0\leq e_{h + 1}^k(s)\leq L_{h + 1}^2$.
Moreover, by the update order in Algorithm~\ref{alg:klq},
$Q^k$, and hence $e^k$, are $\mathcal D_{<k}$-measurable.
Since $k_{h,n}(s,a)<k$ for every
$1\leq n\leq N_h^k(s,a)$, the historical test function
in~(\ref{eq:proof-squared-error-historical-test})
is also $\mathcal D_{<k}$-measurable.
Thus both choices satisfy the hypotheses of
Lemma~\ref{lem:occupancy-freedman}.
By a union bound, all four events hold simultaneously with probability at least $1 - \delta$.

Summing~(\ref{eq:pointwise-recursion}) over $k \in [K]$ and averaging
with respect to $d_h^{\pi^\star}$ gives
\begin{align}
\sum_{k = 1}^K\sum_s d_h^{\pi^\star}(s)e_h^k(s)
& \leq L_h^2\sum_{k = 1}^K\sum_{s,a}d_h^{\pi^\star}(s,a)\mathbf1\{N_h^k(s,a) = 0\}\notag\\
& + c_0H^2L_{h + 1}^2\iota^2
\sum_{k = 1}^K\sum_{s,a}d_h^{\pi^\star}(s,a)
\frac{\mathbf1\{N_h^k(s,a) \geq 1\}}{N_h^k(s,a) \vee 1}\notag\\
& + \left(1 + \frac1H\right)\gamma_{h + 1}
\underbrace{\sum_{k = 1}^K\sum_{s,a}d_h^{\pi^\star}(s,a)
\sum_{n = 1}^{N_h^k(s,a)}\alpha_n^{N_h^k(s,a)}(\mathbb P_he_{h + 1}^{k_{h,n}(s,a)})(s,a)}_{\mathrm{(I)}}.
\label{eq:proof-squared-error-summed-recursion}
\end{align}
By~(\ref{eq:count-zero-cumulative}) and
~(\ref{eq:count-inverse-cumulative}), we have
\begin{align}
&L_h^2\sum_{k = 1}^K\sum_{s,a}d_h^{\pi^\star}(s,a)\mathbf1\{N_h^k(s,a) = 0\} + c_0H^2L_{h + 1}^2\iota^2\sum_{k = 1}^K\sum_{s,a}d_h^{\pi^\star}(s,a)\frac{\mathbf1\{N_h^k(s,a) \geq 1\}}{N_h^k(s,a) \vee 1}\notag\\
&\qquad \leq cSA C_\eta^\star H^2L_h^2\iota^3(1 + \log K).
\label{eq:proof-squared-error-count-bound}
\end{align}
It remains to bound $\mathrm{(I)}$. Applying Lemma~\ref{lem:occupancy-freedman} with the test function
in~(\ref{eq:proof-squared-error-historical-test}) gives
\begin{align}
\mathrm{(I)}
& \leq \underbrace{\sum_{k = 1}^K\frac{d_h^{\pi^\star}(s_h^k,a_h^k)}{d_h^\mu(s_h^k,a_h^k)} \! \sum_{n = 1}^{N_h^k(s_h^k,a_h^k)} \! \alpha_n^{N_h^k(s_h^k,a_h^k)}(\mathbb P_he_{h + 1}^{k_{h,n}(s_h^k,a_h^k)})(s_h^k,a_h^k)}_{\mathrm{(II)}} + c\sqrt{C_\eta^\star L_{h + 1}^2\iota\ \! \mathrm{(I)}}\! +\! cC_\eta^\star L_{h + 1}^2\iota\notag\\
&\overset{(i)}{\leq} \left(1 + \frac1{8H}\right)\mathrm{(II)} + cC_\eta^\star HL_{h + 1}^2\iota.
\label{eq:proof-squared-error-I-via-II}
\end{align}
Here (i) uses $c\sqrt{xy} \leq \frac{x}{16H} + 4c^2Hy$ with $x = \mathrm{(I)}$ and $y = C_\eta^\star L_{h + 1}^2\iota$.
After moving $x/(16H)$ to the left, we use
$(1 - 1/(16H))^{-1} \leq 1 + 1/(8H)$ and absorb numerical factors
into the universal constant.

We next reindex $\mathrm{(II)}$. For each state-action pair $(s,a)$, the $m$-th
visit occurs in episode $k_{h,m}(s,a)$ and satisfies
$N_h^{k_{h,m}(s,a)}(s,a) = m - 1$. Hence
\begin{align}
\mathrm{(II)}
& = \sum_{s,a}\frac{d_h^{\pi^\star}(s,a)}{d_h^\mu(s,a)}
\sum_{m = 1}^{N_h^{K + 1}(s,a)}
\sum_{n = 1}^{m - 1}
\alpha_n^{m - 1}
(\mathbb P_he_{h + 1}^{k_{h,n}(s,a)})(s,a)\notag\\
& = \sum_{s,a}\frac{d_h^{\pi^\star}(s,a)}{d_h^\mu(s,a)}
\sum_{n = 1}^{N_h^{K + 1}(s,a) - 1}
\left(
\sum_{m = n + 1}^{N_h^{K + 1}(s,a)}
\alpha_n^{m - 1}
\right)
(\mathbb P_he_{h + 1}^{k_{h,n}(s,a)})(s,a)\notag\\
& \leq \left(1 + \frac1H\right)
\sum_{s,a}\frac{d_h^{\pi^\star}(s,a)}{d_h^\mu(s,a)}
\sum_{n = 1}^{N_h^{K + 1}(s,a)}
(\mathbb P_he_{h + 1}^{k_{h,n}(s,a)})(s,a)\notag\\
& = \left(1 + \frac1H\right)
\underbrace{
\sum_{k = 1}^K
\frac{d_h^{\pi^\star}(s_h^k,a_h^k)}{d_h^\mu(s_h^k,a_h^k)}(\mathbb P_he_{h + 1}^k)(s_h^k,a_h^k)
}_{\mathrm{(III)}}.
\label{eq:proof-squared-error-II-via-III}
\end{align}
The inequality in~(\ref{eq:proof-squared-error-II-via-III}) uses
$$
\sum_{m = n + 1}^{N_h^{K + 1}(s,a)}\alpha_n^{m - 1} \leq \sum_{M = n}^{\infty}\alpha_n^M = 1 + \frac1H,
$$
and adds the nonnegative term corresponding to
$n = N_h^{K + 1}(s,a)$. It remains to control $\mathrm{(III)}$.
Applying Lemma~\ref{lem:occupancy-freedman} with~(\ref{eq:proof-squared-error-current-test}) and the occupancy recursion under $\pi^\star$ gives
\begin{align}
\mathrm{(III)}
& \leq\! \sum_{k = 1}^K\!\sum_{s,a}d_h^{\pi^\star}\!(s,a)(\mathbb P_he_{h + 1}^k)(s,a)\! +\! c\! \sqrt{C_\eta^\star L_{h + 1}^2\iota\!\sum_{k = 1}^K\!\sum_{s,a}d_h^{\pi^\star}\!(s,a)(\mathbb P_he_{h + 1}^k)(s,a)}\! +\! cC_\eta^\star L_{h + 1}^2\iota\notag\\
& = \sum_{k = 1}^K\sum_s d_{h + 1}^{\pi^\star}(s)e_{h + 1}^k(s) + c\sqrt{C_\eta^\star L_{h + 1}^2\iota\sum_{k = 1}^K\sum_s d_{h + 1}^{\pi^\star}(s)e_{h + 1}^k(s)}\! +\! cC_\eta^\star L_{h + 1}^2\iota\notag\\
& \overset{(ii)}{\leq} \left(1 + \frac1{16H}\right)\sum_{k = 1}^K\sum_s d_{h + 1}^{\pi^\star}(s)e_{h + 1}^k(s) + cC_\eta^\star HL_{h + 1}^2\iota.
\label{eq:proof-squared-error-III-bound}
\end{align}
Here (ii) uses
$$
c\sqrt{xy} \leq \frac{x}{16H} + 4c^2Hy,
$$
with
$$
x = \sum_{k = 1}^K\sum_s d_{h + 1}^{\pi^\star}(s)e_{h + 1}^k(s), \qquad y = C_\eta^\star L_{h + 1}^2\iota.
$$
Combining~(\ref{eq:proof-squared-error-I-via-II}),
~(\ref{eq:proof-squared-error-II-via-III}), and
~(\ref{eq:proof-squared-error-III-bound}) gives
\begin{align}
\mathrm{(I)}
& \leq \left(1 + \frac1{8H}\right)
\left(1 + \frac1H\right)
\left(1 + \frac1{16H}\right)
\sum_{k = 1}^K\sum_s
d_{h + 1}^{\pi^\star}(s)e_{h + 1}^k(s) + cC_\eta^\star HL_{h + 1}^2\iota\notag\\
& \leq \left(1 + \frac2H\right)
\sum_{k = 1}^K\sum_s
d_{h + 1}^{\pi^\star}(s)e_{h + 1}^k(s) + cC_\eta^\star HL_{h + 1}^2\iota.
\label{eq:proof-squared-error-I-bound}
\end{align}
Combining~(\ref{eq:proof-squared-error-summed-recursion}),
~(\ref{eq:proof-squared-error-count-bound}), and
~(\ref{eq:proof-squared-error-I-bound}) yields
\begin{align*}
\sum_{k = 1}^K\sum_s d_h^{\pi^\star}(s)e_h^k(s)
& \leq cSA C_\eta^\star H^2L_h^2\iota^3(1 + \log K) + \\
&\left(1 + \frac1H\right)\gamma_{h + 1}\Bigg[
\left(1 + \frac2H\right)\sum_{k = 1}^K\sum_s d_{h + 1}^{\pi^\star}(s)e_{h + 1}^k(s) + cC_\eta^\star HL_{h + 1}^2\iota\Bigg]\\
& \leq \left(1 + \frac5H\right)\gamma_{h + 1}\sum_{k = 1}^K\sum_s d_{h + 1}^{\pi^\star}(s)e_{h + 1}^k(s)\\
& + c_0SA C_\eta^\star\gamma_{h + 1}H^2L_h^2\iota^3(1 + \log K).
\end{align*}
Here we used
$$
\left(1 + \frac1H\right)\left(1 + \frac2H\right) = 1 + \frac3H + \frac2{H^2} \leq 1 + \frac5H,
$$
together with $\gamma_{h + 1} \geq 1$, $L_{h + 1} \leq L_h$,
$SA \geq 1$, $H \geq 1$, $\iota \geq \log 2$, and $1 + \log K \geq 1$,
absorbing lower-order terms into the universal constant.
This proves the first claim of Lemma~\ref{lem:klq-squared-error}.

Iterating~(\ref{eq:squared-error-recursion}) backward, with the
stage-$H + 1$ term equal to zero, gives
\begin{align*}
\sum_{k = 1}^K\sum_s d_h^{\pi^\star}(s)e_h^k(s)
& \leq c_0SA C_\eta^\star H^2\iota^3(1 + \log K)\sum_{t = h}^H \left(1 + \frac5H\right)^{t - h}\left(\prod_{j = h + 1}^{t + 1}\gamma_j\right)L_t^2\\
& \leq cSA C_\eta^\star\Gamma_{h + 1:H}H^2\iota^3(1 + \log K)\sum_{t = h}^HL_t^2 \\
&\leq c_1SA C_\eta^\star\Gamma_{h + 1:H}H^2L_h^3\iota^3(1 + \log K),
\end{align*}
where we used
$$
\left(1 + \frac5H\right)^{t - h} \leq \left(1 + \frac5H\right)^H \leq e^5, \qquad \prod_{j = h + 1}^{t + 1}\gamma_j \leq \prod_{j = h + 1}^{H}\gamma_j = \Gamma_{h + 1:H},
$$
and
$$
\sum_{t = h}^HL_t^2 = \sum_{\ell = 1}^{L_h}\ell^2 \leq L_h^3.
$$
By the definition of $e_h^k$, this proves the second claim of Lemma~\ref{lem:klq-squared-error}.
\end{proof}

\begin{lemma}\label{lem:klq-last-reduction}
Suppose Assumption~\ref{ass1} holds. For any $\delta\in(0,1)$, let
$\iota:=\log\frac{SAT}{\delta}$. Then, with probability at least
$1-\delta$, simultaneously for every $h\in[H]$,
\begin{equation}
\label{eq:last-iterate-reduction}
\begin{aligned}
\mathbb E_{(s,a)\sim d_h^{\pi^\star}}
\big[(Q_h^{K+1}(s,a)-Q_h^\star(s,a))^2\big]\leq
\frac{c_0SA C_\eta^\star\gamma_{h+1}H^2L_h^2\iota^3}{K}\\
\quad+
\frac{c_0H\iota\gamma_{h+1}}{K}
\sum_{k=1}^K
\mathbb E_{s\sim d_{h+1}^{\pi^\star}}
\big[e_{h+1}^k(s)\big],
\end{aligned}
\end{equation}
where $c_0>0$ is a universal constant, and the sum on the right is zero when
$h=H$.
\end{lemma}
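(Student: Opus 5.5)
We start from the pointwise recursion of Lemma~\ref{lem:pointwise-recursion} at $k=K+1$, write $N:=N_h^{K+1}(s,a)$, and average it over $(s,a)\sim d_h^{\pi^\star}$. This splits the final-iterate error into three terms, which we treat separately.

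The first two terms are routine. The zero-count term $L_h^2\sum_{s,a}d_h^{\pi^\star}(s,a)\mathbf 1\{N=0\}$ is bounded by~(\ref{eq:count-zero-final}). The inverse-count term $c_0H^2L_{h+1}^2\iota^2\sum_{s,a}d_h^{\pi^\star}(s,a)\mathbf1\{N\ge1\}/(N\vee1)$ is bounded by~(\ref{eq:count-inverse-final}). Together they contribute at most $cSAC_\eta^\star H^2L_h^2\iota^3/K$, which is at most the first term of~(\ref{eq:last-iterate-reduction}) since $\gamma_{h+1}\ge1$.

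The main work is the historical term
\[
\left(1+\tfrac1H\right)\gamma_{h+1}\sum_{s,a}d_h^{\pi^\star}(s,a)\sum_{n=1}^{N}\alpha_n^{N}(\mathbb P_he_{h+1}^{k_{h,n}(s,a)})(s,a).
\]
\begin{itemize}
\item \emph{Step 1 (flatten the weights).} By Lemma~\ref{lem:historical-weights}, $\alpha_n^N\le 2H/N$, so the inner sum is at most $\frac{2H}{N}\sum_{n=1}^N(\mathbb P_he_{h+1}^{k_{h,n}})(s,a)$.
\item \emph{Step 2 (trade $1/N$ for $1/K$).} By~(\ref{eq:count-inverse-bound}) with $m=K$, $d_h^{\pi^\star}(s,a)/N\le \frac{d_h^{\pi^\star}(s,a)}{d_h^\mu(s,a)}\cdot\frac{8\iota}{K}$. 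The historical term is then at most
\[
\frac{16H\iota}{K}\sum_{s,a}\frac{d_h^{\pi^\star}(s,a)}{d_h^\mu(s,a)}\sum_{n=1}^{N}(\mathbb P_he_{h+1}^{k_{h,n}})(s,a)
=\frac{16H\iota}{K}\,\mathrm{(III)},
\]
where $\mathrm{(III)}=\sum_{k}\frac{d_h^{\pi^\star}}{d_h^\mu}(s_h^k,a_h^k)(\mathbb P_he_{h+1}^k)(s_h^k,a_h^k)$ is the importance-weighted empirical sum from the proof of Lemma~\ref{lem:klq-squared-error}.
\item \emph{Step 3 (concentration).} Apply Lemma~\ref{lem:occupancy-freedman} with $U_h^k=\mathbb P_he_{h+1}^k$ and $B_h=L_{h+1}^2$. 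This is exactly as in~(\ref{eq:proof-squared-error-III-bound}), using the occupancy recursion $\sum_{s,a}d_h^{\pi^\star}(s,a)(\mathbb P_h f)(s,a)=\sum_s d_{h+1}^{\pi^\star}(s)f(s)$. It gives $\mathrm{(III)}\le 2\sum_{k=1}^K\mathbb E_{s\sim d_{h+1}^{\pi^\star}}[e_{h+1}^k(s)]+cC_\eta^\star HL_{h+1}^2\iota$.
\end{itemize}

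Multiplying through by $(1+1/H)\gamma_{h+1}\cdot 16H\iota/K$, the sum over $k$ gives the second term of~(\ref{eq:last-iterate-reduction}). The leftover $cC_\eta^\star H^2L_{h+1}^2\iota^2\gamma_{h+1}/K$ is absorbed into the first term. When $h=H$ we have $e_{H+1}\equiv0$, so the historical term vanishes.

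The step I expect to be delicate is Step 2. There the inverse-count bound must be applied at the final count $N$, jointly with the visit-indexed sum, and the reindexing $\sum_{s,a}\sum_{n\le N_h^{K+1}(s,a)}=\sum_{k\le K}$ must hold exactly. This works because $k_{h,n}(s,a)$ enumerates precisely the episodes visiting $(s,a)$. We also need all required events (Lemmas~\ref{lem:weighted-transition}, \ref{lem:count-control}, \ref{lem:occupancy-freedman}) to hold simultaneously. We secure this with a union bound at level $\delta/3$ each, absorbing the constant into $\iota$.
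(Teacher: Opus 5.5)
Your proposal is correct and follows the paper's proof essentially step for step. Like the paper, you average Lemma~\ref{lem:pointwise-recursion} at $k=K+1$, control the count terms by (\ref{eq:count-zero-final})--(\ref{eq:count-inverse-final}), flatten with $\alpha_n^N\le 2H/N$, convert $1/N$ into $8\iota/(Kd_h^\mu)$ via (\ref{eq:count-inverse-bound}) at $m=K$ (using Assumption~\ref{ass1} to ensure $d_h^\mu(s,a)>0$ wherever $d_h^{\pi^\star}(s,a)>0$), reindex by visit episodes, and finish with Lemma~\ref{lem:occupancy-freedman} and the occupancy recursion. The only cosmetic difference is that you reuse the $x/(16H)+4c^2Hy$ split from (\ref{eq:proof-squared-error-III-bound}) where the paper uses $\sqrt{xy}\le(x+y)/2$; your leftover term picks up an extra factor of $H$, but it is still absorbed into the first term of (\ref{eq:last-iterate-reduction}).
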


\begin{proof}
If $SAT = 1$, the single Q-update is exact, so the left-hand side is zero. Hence suppose $SAT > 1$. Invoke Lemmas~\ref{lem:pointwise-recursion},~\ref{lem:count-control}, and~\ref{lem:occupancy-freedman} with failure probability $\delta/3$ each, for Lemma~\ref{lem:occupancy-freedman}, take
$U_h^k(s,a) = (\mathbb P_he_{h + 1}^k)(s,a)$ and $B_h = L_{h + 1}^2$.
The range bounds ensure $0 \leq U_h^k(s,a) \leq L_{h + 1}^2$, and
$U_h^k$ is $\mathcal D_{< k}$-measurable. By a union bound, all three events hold simultaneously with probability
at least $1 - \delta$.

Fix $h \in [H]$. For every state-action pair with $d_h^{\pi^\star}(s,a) > 0$ and
$N_h^{K + 1}(s,a) \geq 1$, Assumption~\ref{ass1} gives
$d_h^\mu(s,a) > 0$, and Lemma~\ref{lem:historical-weights} together
with~(\ref{eq:count-inverse-bound}) at $m = K$ and $\log(3SAT/\delta) \leq 3\iota$ gives
\begin{equation}
\label{eq:proof-last-alpha-bound}
\max_n\alpha_n^{N_h^{K + 1}(s,a)} \leq \frac{2H}{N_h^{K + 1}(s,a)} \leq \frac{48H\iota}{Kd_h^\mu(s,a)}.
\end{equation}
Averaging~(\ref{eq:pointwise-recursion}) at $k = K + 1$ with respect to
$d_h^{\pi^\star}$ and using
(\ref{eq:count-zero-final})--(\ref{eq:count-inverse-final}), we obtain
\begin{align}
&\mathbb E_{(s,a) \sim d_h^{\pi^\star}}\big[(Q_h^{K + 1}(s,a) - Q_h^\star(s,a))^2\big]\notag\\
&\quad \leq \frac{cSA C_\eta^\star H^2L_h^2\iota^3}{K} + \left(1 + \frac1H\right)\gamma_{h + 1}\sum_{s,a}d_h^{\pi^\star}(s,a)\sum_{n = 1}^{N_h^{K + 1}(s,a)}\alpha_n^{N_h^{K + 1}(s,a)}(\mathbb P_he_{h + 1}^{k_{h,n}(s,a)})(s,a)\notag\\
&\quad \overset{(i)}{\leq} \frac{cSA C_\eta^\star H^2L_h^2\iota^3}{K} + \frac{48H\iota}{K}\left(1 + \frac1H\right)\gamma_{h + 1}\underbrace{\sum_{k = 1}^K\frac{d_h^{\pi^\star}(s_h^k,a_h^k)}{d_h^\mu(s_h^k,a_h^k)}(\mathbb P_he_{h + 1}^k)(s_h^k,a_h^k)}_{\mathrm{(I)}}.
\label{eq:proof-last-with-I}
\end{align}
Here (i) follows from~(\ref{eq:proof-last-alpha-bound}) after reindexing the pairs
$((s,a),n)$ by their visit episodes $k_{h,n}(s,a)$, the factor
$d_h^{\pi^\star}(s,a)/d_h^\mu(s,a)$ then yields $\mathrm{(I)}$.

We next control $\mathrm{(I)}$. Applying
Lemma~\ref{lem:occupancy-freedman} with
$U_h^k(s,a) = (\mathbb P_he_{h + 1}^k)(s,a)$ and
$B_h = L_{h + 1}^2$, and then using the occupancy recursion under
$\pi^\star$,
\begin{align}
\mathrm{(I)}
& \leq \sum_{k = 1}^K\sum_{s,a}d_h^{\pi^\star}\! (s,a)(\mathbb P_he_{h + 1}^k)(s,a)\! +\! c\sqrt{C_\eta^\star L_{h + 1}^2\iota \! \sum_{k = 1}^K\sum_{s,a}d_h^{\pi^\star}(s,a)(\mathbb P_he_{h + 1}^k)(s,a)}\! +\! cC_\eta^\star L_{h + 1}^2  \iota\\
& = \sum_{k = 1}^K\sum_s d_{h + 1}^{\pi^\star}(s)e_{h + 1}^k(s) + c\sqrt{C_\eta^\star L_{h + 1}^2\iota\sum_{k = 1}^K\sum_s d_{h + 1}^{\pi^\star}(s)e_{h + 1}^k(s)} + cC_\eta^\star L_{h + 1}^2\iota\notag\\
&\overset{(ii)}{\leq} c\sum_{k = 1}^K\sum_s d_{h + 1}^{\pi^\star}(s)e_{h + 1}^k(s) + cC_\eta^\star L_{h + 1}^2\iota.
\label{eq:proof-last-I-bound}
\end{align}
Here (ii) uses $\sqrt{xy} \leq (x + y)/2$ with
$$
x = \sum_{k = 1}^K\sum_s d_{h + 1}^{\pi^\star}(s)e_{h + 1}^k(s), \qquad y = C_\eta^\star L_{h + 1}^2\iota,
$$
after adjusting the universal constant $c$. Combining~(\ref{eq:proof-last-with-I}) and
~(\ref{eq:proof-last-I-bound}) gives
\begin{align*}
&\mathbb E_{(s,a) \sim d_h^{\pi^\star}}\big[(Q_h^{K + 1}(s,a) - Q_h^\star(s,a))^2\big]\\
&\quad \leq \frac{cSA C_\eta^\star H^2L_h^2\iota^3}{K} + \frac{cH\iota\gamma_{h + 1}}{K}\sum_{k = 1}^K\sum_s d_{h + 1}^{\pi^\star}(s)e_{h + 1}^k(s) + \frac{cC_\eta^\star\gamma_{h + 1}HL_{h + 1}^2\iota^2}{K}\\
&\quad \leq \frac{c_0SA C_\eta^\star\gamma_{h + 1}H^2L_h^2\iota^3}{K} + \frac{c_0H\iota\gamma_{h + 1}}{K}\sum_{k = 1}^K\mathbb E_{s \sim d_{h + 1}^{\pi^\star}}\big[e_{h + 1}^k(s)\big].
\end{align*}
Here we used $L_{h + 1} \leq L_h$, $SA \geq 1$, $\iota \geq \log 2$,
$\gamma_{h + 1} \geq 1$, and the definition of $e_{h + 1}^k$.
This proves Lemma~\ref{lem:klq-last-reduction}. When $h = H$, the
stage-$H + 1$ terms are zero.
\end{proof}

\subsection{Proof of Theorem~\ref{thm:klq}}

\begin{proof}
If $SAT = 1$, the output is the unique policy, so $\operatorname{SubOpt}(\widehat\pi) = 0$. Hence suppose $SAT > 1$.
Apply Lemmas~\ref{lem:klq-last-reduction} and~\ref{lem:klq-squared-error} with failure probability $\delta/2$ each. With probability at least $1 - \delta$, for every $h \in [H]$,
\begin{align*}
\mathbb E_{(s,a) \sim d_h^{\pi^\star}}\!\big[(Q_h^{K + 1}(s,a)\! -\! Q_h^\star(s,a))^2\big]
&  \leq\! \frac{cSA C_\eta^\star\gamma_{h + 1}H^2L_h^2\iota^3}{K}\! +\! \frac{cH\iota\gamma_{h + 1}}K\!\sum_{k = 1}^K\!\sum_s\! d_{h + 1}^{\pi^\star}(s)e_{h + 1}^k(s)\\
& \leq c\frac{SA C_\eta^\star}{K}\Gamma_{h + 1:H}H^3L_h^3\iota^4(1 + \log K),
\end{align*}
where the second inequality uses~(\ref{eq:squared-error-bound}) at stage $h + 1$, when $h = H$, $e_{H + 1}^k\equiv0$. Substituting this estimate into~(\ref{eq:kl-policy-bound}) gives
\begin{align*}
\operatorname{SubOpt}(\widehat\pi)
& \leq c\frac{\eta SA C_\eta^\star H^3\iota^4(1 + \log K)}K\sum_{h = 1}^H\Gamma_{1:h}\Gamma_{h + 1:H}L_h^3\\
& = c\frac{\eta e^{\eta H(H + 1)}SA C_\eta^\star H^3\iota^4(1 + \log K)}K\sum_{h = 1}^HL_h^3\\
& \leq c\frac{\eta e^{\eta H(H + 1)}H^7SA C_\eta^\star}{K}\iota^4(1 + \log K).
\end{align*}
Here $\Gamma_{1:h}\Gamma_{h + 1:H} = \Gamma_{1:H} = e^{\eta H(H + 1)}$ and $\sum_{h = 1}^HL_h^3 = (H(H + 1)/2)^2 \leq H^4$. This proves Theorem~\ref{thm:klq}.
\end{proof}

\section{Proofs for Algorithm~\ref{alg:klpq}}
\label{app:klpq}
Throughout this appendix, fix the confidence level $\delta \in (0,1)$ used in Algorithm~\ref{alg:klpq} and let $\iota := \log\frac{SAT}{\delta}$. All iterate-dependent quantities refer to this fixed run of KLPQ. We use the shorthand $\Delta_h^k(s,a) := Q_h^k(s,a) - Q_h^\star(s,a)$, where $Q_h^k$ now denotes the KLPQ iterates, and retain the definition of $e_h^k$ from Section~\ref{sec:analysis-klq}. We also retain $\mathcal D_{< k}$ and $\widehat{\mathbb P}_h^k$ from Appendix~\ref{app:support}. Constant-factor changes in concentration thresholds are absorbed into universal constants, throughout, the algorithm is run with the original confidence level.

\subsection{Lemmas for Theorem~\ref{thm:klpq}}
\label{app:support-klpq}

\begin{lemma}\label{lem:klpq-deterministic}
For every $k\in[K+1]$, $h\in[H]$, and $(s,a)\in\mathcal S\times\mathcal A$,
\begin{equation}
\label{eq:klpq-range}
\widetilde Q_h^k(s,a) \leq L_h, \qquad 0 \leq Q_h^k(s,a) \leq L_h, \qquad 0 \leq V_h^k(s) \leq L_h.
\end{equation}
Moreover, $Q_h^{k+1}\geq Q_h^k$ and $V_h^{k+1}\geq V_h^k$ coordinatewise for every $k\in[K]$ and $h\in[H]$, and
\[
Q_h^k(s,a)=\max_{1\leq j\leq k}\widetilde Q_h^j(s,a),
\qquad
V_h^k(s)=F_{h,s}(Q_h^k(s,\cdot)).
\]
\end{lemma}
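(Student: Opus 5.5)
This is a deterministic statement, so the plan is a single induction over the sequence of updates (episode $k$, and within it stage $h=1,\dots,H$), tracking the invariants in~(\ref{eq:klpq-range}) together with the identity $Q_h^k=\max_{j\le k}\widetilde Q_h^j$. At initialization $\widetilde Q_h^1=Q_h^1=0$ and $V_h^1=F_{h,s}(0)=0$. All three bounds therefore hold, and $Q_h^1=\widetilde Q_h^1$ is the trivial maximum. Throughout, $V_{H+1}^k\equiv0$.

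\textbf{Inductive step for the range bounds.} Suppose the claims hold for all quantities before a stage-$h$ update in episode $k$. In particular $0\le V_{h+1}^k\le L_{h+1}$; this is either a stage-$(h+1)$ iterate from before episode $k$, or it is $V_{H+1}\equiv0$. Since $r_h^k\in[0,1]$ and $b_{h,n}\ge0$, the target satisfies
\[
r_h^k+V_{h+1}^k(s_{h+1}^k)-b_{h,n}\le 1+L_{h+1}=L_h .
\]
Hence $\widetilde Q_h^{k+1}(s_h^k,a_h^k)$ is a convex combination (weights $1-\alpha_n$ and $\alpha_n$, with $\alpha_n\in(0,1]$) of two numbers that are both at most $L_h$, so it is at most $L_h$. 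The lower bound on $\widetilde Q$ may fail because of the bonus, but it is not claimed. For the lower bound on $Q$, use the max update: $Q_h^{k+1}\ge Q_h^k\ge0$. For the upper bound, $Q_h^{k+1}=\max\{Q_h^k,\widetilde Q_h^{k+1}\}\le L_h$. All unvisited entries are copied unchanged. For $V$, recall that $F_{h,s}(q)$ lies between $\min_a q(a)$ and $\max_a q(a)$, because it is a log-mean-exp under $\pi^{\mathrm{ref}}_h(\cdot\mid s)$. This gives $0\le V_h^{k+1}\le L_h$.

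\textbf{Monotonicity.} The max update never decreases the visited entry, and every other entry is copied, so $Q_h^{k+1}\ge Q_h^k$ coordinatewise. $F_{h,s}$ is nondecreasing in each coordinate, since its gradient $G_{h,s}$ is a probability vector. So at the visited state, $V_h^{k+1}(s_h^k)=F_{h,s_h^k}(Q_h^{k+1}(s_h^k,\cdot))\ge F_{h,s_h^k}(Q_h^k(s_h^k,\cdot))$. This uses the auxiliary identity $V_h^k(s)=F_{h,s}(Q_h^k(s,\cdot))$ for all $s$, carried as part of the induction. That identity holds at initialization, is re-established at the visited state by line~\ref{line:klpq-value}, and persists at other states because neither $Q_h(s,\cdot)$ nor $V_h(s)$ changes there. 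The one point to check is that within episode $k$ each stage $h$ is updated exactly once. Line~\ref{line:klpq-value} recomputes $V_h$ only at $s_h^k$, and $Q_h(s,\cdot)$ changes only at $s=s_h^k$, so the identity is preserved.

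\textbf{The max representation.} If $(s,a)$ is not visited at stage $h$ in episode $k$, then $\widetilde Q_h^{k+1}(s,a)=\widetilde Q_h^k(s,a)$. In that case $\max_{j\le k+1}\widetilde Q_h^j=\max_{j\le k}\widetilde Q_h^j=Q_h^k=Q_h^{k+1}$. If $(s,a)$ is visited, then $Q_h^{k+1}=\max\{Q_h^k,\widetilde Q_h^{k+1}\}=\max_{j\le k+1}\widetilde Q_h^j$ directly.

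There is no real obstacle here. The only care needed is bookkeeping: the induction must run over the within-episode update order so that $V_{h+1}^k$ is an already-controlled quantity when it enters the stage-$h$ target.
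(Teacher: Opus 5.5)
Your proposal is correct and follows essentially the same induction as the paper: bound the pessimistic target by $1+L_{h+1}=L_h$, use the convex-combination and max updates to get the range bounds, the monotonicity, and the representation $Q_h^k=\max_{j\le k}\widetilde Q_h^j$, and use that $F_{h,s}$ is coordinatewise nondecreasing and lies between the minimum and maximum coordinates, while carrying the invariant $V_h^k(s)=F_{h,s}(Q_h^k(s,\cdot))$. The within-episode ordering you stress is harmless but not needed: the stage-$h$ target uses $V_{h+1}^k$ from before episode $k$, so induction over episodes alone suffices, as in the paper.
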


\begin{proof}
We prove the stated properties by induction over episodes.

For the base case $k = 1$, all claims follow directly from the initialization, and
$V_h^1(s) = F_{h,s}(Q_h^1(s,\cdot))$ for every $(h,s) \in [H] \times \mathcal S$. For the induction hypothesis, suppose that the range bounds in~(\ref{eq:klpq-range}), the invariant
$V_h^k(s) = F_{h,s}(Q_h^k(s,\cdot))$, and the identity
$Q_h^k(s,a) = \max_{1 \leq j \leq k}\widetilde Q_h^j(s,a)$ hold at the beginning of episode $k$.

For the induction step from $k$ to $k + 1$, the algorithm carries $Q_h^k$, $V_h^k$, and $\widetilde Q_h^k$ forward except at the visited coordinates. At a stage-$h$ update,
$r_h^k + V_{h + 1}^k(s_{h + 1}^k) - b_{h,n} \leq 1 + L_{h + 1} = L_h$.
Hence the update of $\widetilde Q_h$ remains at most $L_h$. The max update in Line~\ref{line:klpq-max-update} therefore ensures that $Q_h^{k + 1}$ remains in $[0,L_h]$, preserves coordinatewise monotonicity, and yields
$Q_h^{k + 1}(s,a) = \max_{1 \leq j \leq k + 1}\widetilde Q_h^j(s,a)$.
Since $F_{h,s}$ is coordinatewise nondecreasing and lies between the minimum and maximum coordinates of its argument, the recomputed value also lies in $[0,L_h]$ and satisfies $V_h^{k + 1} \geq V_h^k$ coordinatewise. The invariant
$V_h^{k + 1}(s) = F_{h,s}(Q_h^{k + 1}(s,\cdot))$ is preserved because the algorithm carries $Q_h^k$ and $V_h^k$ forward together and recomputes the value at every updated state. This completes the induction over $k$.
\end{proof}

We now introduce a concentration event for later use. Notably, $Q^k$ and $V^k$ before episode $k$ depend only on $\tau^1,\ldots,\tau^{k - 1}$, so $Q_h^k$, $V_h^k$, and $e_h^k$ are $\mathcal D_{< k}$-measurable. The deterministic bounds~(\ref{eq:klpq-range}) and~(\ref{eq:soft-optimal-range}) give
$$
\begin{aligned}
0 \leq e_h^k(s) \leq L_h^2,\qquad
0 \leq (\mathbb P_he_{h + 1}^k)(s,a) \leq L_{h + 1}^2,\\
0 \leq \sum_{n = 1}^{N_h^k(s,a)}\alpha_n^{N_h^k(s,a)}(\mathbb P_he_{h + 1}^{k_{h,n}(s,a)})(s,a) \leq L_{h + 1}^2.
\end{aligned}
$$
Moreover, for every $k,h,s,a$, $\sum_{n = 1}^{N_h^k(s,a)}\alpha_n^{N_h^k(s,a)}(\mathbb P_he_{h + 1}^{k_{h,n}(s,a)})(s,a)$
is $\mathcal D_{< k}$-measurable because $k_{h,n}(s,a) < k$ for every $1 \leq n \leq N_h^k(s,a)$.

By Lemma~\ref{lem:klpq-deterministic},
$0\leq V_{h+1}^k\leq L_{h+1}$.
Moreover, $V_{h+1}^k$ is $\mathcal D_{<k}$-measurable,
since the iterate before episode $k$ depends only on
$\tau^1,\ldots,\tau^{k-1}$.
Therefore, Lemma~\ref{lem:weighted-transition} also holds for KLPQ.
Apply it with failure probability $\delta/4$ and denote the
resulting event by $\mathcal E_1$. Choose the fixed universal constant $c_b$ so that $c_b\geq 4\max\{c,1\}$, where $c$ is the universal constant in Lemma~\ref{lem:weighted-transition}. For $N \geq 1$, Lemma~\ref{lem:historical-weights} with exponent $1/2$ gives

\begin{equation}
\label{eq:klpq-accumulated-bonus-bounds}
c_bL_{h + 1}\iota\sqrt{\frac HN} \leq \sum_{n = 1}^N\alpha_n^Nb_{h,n} \leq 2c_bL_{h + 1}\iota\sqrt{\frac HN}.
\end{equation}

Moreover, on $\mathcal E_1$, for $1 \leq N \leq N_h^{K + 1}(s,a)$,

\begin{equation}
\label{eq:klpq-noise-domination}
\left|\sum_{n = 1}^N\alpha_n^N(\widehat{\mathbb P}_h^{k_{h,n}(s,a)} - \mathbb P_{h,s,a})V_{h + 1}^{k_{h,n}(s,a)}\right|
\leq 4cL_{h + 1}\iota\sqrt{\frac HN}\leq c_bL_{h + 1}\iota\sqrt{\frac HN} \leq \sum_{n = 1}^N\alpha_n^Nb_{h,n}.
\end{equation}

The first inequality is immediate when $L_{h + 1} = 0$. Otherwise, $H \geq 2$, so $\iota \geq \log2$ and $\log(4SAT/\delta) = \iota + \log4 \leq 4\iota$. The second follows from $c_b \geq 4c$, and the last is the lower bound in~(\ref{eq:klpq-accumulated-bonus-bounds}). For $N = 0$, both weighted sums are zero. 

\begin{lemma}\label{lem:klpq-monotonicity}
On the event $\mathcal E_1$, if $c_b$ is fixed as above,
\begin{equation}
\label{eq:klpq-ordering}
\widetilde Q_h^j(s,a)\leq Q_h^k(s,a)\leq Q_h^{\pi^k}(s,a)\leq Q_h^\star(s,a),
\qquad
V_h^k(s)\leq V_h^{\pi^k}(s)\leq V_h^\star(s)
\end{equation}
simultaneously for every $k\in[K+1]$, $j\in[k]$, $h\in[H]$, and $(s,a)\in\mathcal S\times\mathcal A$.
\end{lemma}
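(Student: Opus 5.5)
The plan is a double induction: an outer backward induction over the stage $h$, and an inner induction over the episode index $k$. The chain \eqref{eq:klpq-ordering} has three links.

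The first link, $\widetilde Q_h^j\le Q_h^k$ for $j\le k$, is deterministic. It follows at once from Lemma~\ref{lem:klpq-deterministic}, since $Q_h^k=\max_{j\le k}\widetilde Q_h^j$.

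The last links, $Q_h^{\pi^k}\le Q_h^\star$ and $V_h^{\pi^k}\le V_h^\star$, come directly from Proposition~\ref{prop:soft-optimality}.

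So the real content is $Q_h^k\le Q_h^{\pi^k}$ and $V_h^k\le V_h^{\pi^k}$. Here $\pi^k=G(Q^k)$ is the greedy Gibbs policy of the current iterate, and $Q^{\pi^k}$ is its true regularized Q-function.

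\textbf{Value inequality from the Q inequality.} Fix $k$ and suppose $Q_h^k(s,\cdot)\le Q_h^{\pi^k}(s,\cdot)$ pointwise. Because $\pi_h^k(\cdot\mid s)$ is the Gibbs maximizer for $Q_h^k(s,\cdot)$, Bellman consistency gives
\[
V_h^k(s)=F_{h,s}(Q_h^k(s,\cdot))=\langle\pi_h^k,Q_h^k\rangle-\tfrac1\eta\mathrm{KL}(\pi_h^k\|\pi_h^{\mathrm{ref}})\le\langle\pi_h^k,Q_h^{\pi^k}\rangle-\tfrac1\eta\mathrm{KL}(\pi_h^k\|\pi_h^{\mathrm{ref}})=V_h^{\pi^k}(s).
\]
The KL term is finite because $\pi^k$ has the same support as $\pi^{\mathrm{ref}}$. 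Hence it suffices to prove $Q_h^k\le Q_h^{\pi^k}$ for all $h$ and $k$, and this needs $V_{h+1}^k\le V_{h+1}^{\pi^k}$ one stage later.

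\textbf{Backward induction in $h$ at a fixed $k$.} At $h=H+1$ everything is zero. Assume $V_{h+1}^k\le V_{h+1}^{\pi^k}$.

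Since $Q_h^k=\max_{j\le k}\widetilde Q_h^j$, it is enough to show $\widetilde Q_h^j(s,a)\le Q_h^{\pi^k}(s,a)$ for every $j\le k$.

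If $(s,a)$ has not been visited before episode $j$, then $\widetilde Q_h^j=0\le Q_h^{\pi^k}$. This holds because $r\ge0$ and $V^{\pi^k}\ge V^k\ge0$ by the induction hypothesis and Lemma~\ref{lem:klpq-deterministic}; more precisely, one checks $Q_h^{\pi^k}\ge0$ through the chain at stage $h+1$.

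Otherwise, with $N=N_h^j(s,a)\ge1$, unroll the update (noting $\alpha_0^N=0$) and use \eqref{eq:klpq-noise-domination}:
\[
\widetilde Q_h^j(s,a)=r_h(s,a)+\sum_{n}\alpha_n^N\Big[(\widehat{\mathbb P}_h^{k_n}-\mathbb P_{h,s,a})V_{h+1}^{k_n}-b_{h,n}\Big]+\sum_n\alpha_n^N\mathbb P_{h,s,a}V_{h+1}^{k_n}\le r_h(s,a)+\sum_n\alpha_n^N\mathbb P_{h,s,a}V_{h+1}^{k_n}.
\]
Each visit episode satisfies $k_n<j\le k$. By the monotonicity in Lemma~\ref{lem:klpq-deterministic}, $V_{h+1}^{k_n}\le V_{h+1}^{k}$. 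The weights sum to one, so
\[
\widetilde Q_h^j(s,a)\le r_h+\mathbb P_hV_{h+1}^k\le r_h+\mathbb P_hV_{h+1}^{\pi^k}=Q_h^{\pi^k}(s,a).
\]
This closes the induction on $h$.

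\textbf{Main subtlety: the time index.} We compare against $V_{h+1}^k$ for the same $k$ that indexes the policy, and not against the iterate at time $k_n$. This is exactly where the monotone max update is needed: it lets historical targets be dominated by the current value. Because of this, no induction over $k$ is actually required; each $k$ is handled separately on the single event $\mathcal E_1$.

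A further detail is that $V_{h+1}^k$ is evaluated at the beginning of episode $k$. The stage-$h$ update uses $V_{h+1}^{k_n}$, which is also taken at the beginning of its episode. So the ordering $k_n\le k-1<k$ is consistent, and the case $j=k$ is fine because $\widetilde Q^k$ involves visits only from episodes before $k$.
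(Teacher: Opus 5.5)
Your proposal is correct and follows essentially the same route as the paper. For each fixed $k$ you run a backward induction on $h$: the martingale noise in the unrolled $\widetilde Q$ recursion is absorbed by the accumulated bonus via~(\ref{eq:klpq-noise-domination}), the historical targets $V_{h+1}^{k_{h,n}}$ are dominated first by $V_{h+1}^k$ (monotone max update) and then by $V_{h+1}^{\pi^k}$ (induction hypothesis), and the value inequality follows from Bellman consistency for the Gibbs policy $\pi^k$. As you note yourself, the inner induction over $k$ announced at the start is unnecessary; the paper likewise treats each $k$ separately.
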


\begin{proof}
We work on the event $\mathcal E_1$ defined above. Fix $k \in [K + 1]$ and proceed by backward induction on $h$. For the base case $h = H + 1$,
$V_{H + 1}^{\pi^k} = V_{H + 1}^k = V_{H + 1}^\star = 0$.

For the induction hypothesis, suppose that the desired value ordering holds at $h + 1$, namely,
$V_{h + 1}^k \leq V_{h + 1}^{\pi^k} \leq V_{h + 1}^\star$.
For the induction step from $h + 1$ to $h$, the induction hypothesis, monotonicity, and nonnegativity give
$V_{h + 1}^{\pi^k} \geq V_{h + 1}^k \geq V_{h + 1}^i \geq 0$ for every $i \leq k$, and therefore $Q_h^{\pi^k}(s,a) \geq 0$.

Fix $j \leq k$ and $(s,a)$, write $N := N_h^j(s,a)$, and suppress the dependence of $k_{h,n}$ on $(s,a)$. Unrolling Line~\ref{line:klpq-tilde-update} gives
\begin{equation}
\label{eq:klpq-tilde-unrolled}
\widetilde Q_h^j(s,a) = \alpha_0^N\widetilde Q_h^1(s,a) + \sum_{n = 1}^N\alpha_n^N\left(r_h(s,a) + \widehat{\mathbb P}_h^{k_{h,n}}V_{h + 1}^{k_{h,n}} - b_{h,n}\right).
\end{equation}
Writing the Bellman equation for $Q_h^{\pi^k}(s,a)$ with the same weights and subtracting~(\ref{eq:klpq-tilde-unrolled}) yields
\begin{align*}
Q_h^{\pi^k}(s,a) - \widetilde Q_h^j(s,a) = {}&\alpha_0^NQ_h^{\pi^k}(s,a) + \sum_{n = 1}^N\alpha_n^N\mathbb P_{h,s,a}(V_{h + 1}^{\pi^k} - V_{h + 1}^{k_{h,n}})\\
& + \sum_{n = 1}^N\alpha_n^Nb_{h,n} - \sum_{n = 1}^N\alpha_n^N(\widehat{\mathbb P}_h^{k_{h,n}} - \mathbb P_{h,s,a})V_{h + 1}^{k_{h,n}}\\
\geq {}&\alpha_0^NQ_h^{\pi^k}(s,a) + \sum_{n = 1}^N\alpha_n^N\mathbb P_{h,s,a}(V_{h + 1}^{\pi^k} - V_{h + 1}^{k_{h,n}}) \geq 0.
\end{align*}
The first inequality follows from~(\ref{eq:klpq-noise-domination}), the second uses $Q_h^{\pi^k} \geq 0$ and $V_{h + 1}^{\pi^k} \geq V_{h + 1}^{k_{h,n}}$ for every historical visit $k_{h,n} < j \leq k$. This also covers $N = 0$, since every sum $\sum_{n = 1}^{N}(\cdots)$ above is then zero. Thus $\widetilde Q_h^j(s,a) \leq Q_h^{\pi^k}(s,a)$ for every $j \leq k$, and hence
$$
Q_h^k(s,a) = \max_{1 \leq j \leq k}\widetilde Q_h^j(s,a) \leq Q_h^{\pi^k}(s,a) \leq Q_h^\star(s,a),
$$
where the last inequality follows from Proposition~\ref{prop:soft-optimality}. Since $\pi_h^k(\cdot \mid s) = G_{h,s}(Q_h^k(s,\cdot))$,
\begin{align*}
V_h^k(s)
& = \left\langle\pi_h^k(\cdot \mid s),Q_h^k(s,\cdot)\right\rangle_{\mathcal A} - \frac1\eta\mathrm{KL}\!\left(\pi_h^k(\cdot \mid s)\middle\|\pi_h^{\mathrm{ref}}(\cdot \mid s)\right)\\
& \leq \left\langle\pi_h^k(\cdot \mid s),Q_h^{\pi^k}(s,\cdot)\right\rangle_{\mathcal A} - \frac1\eta\mathrm{KL}\!\left(\pi_h^k(\cdot \mid s)\middle\|\pi_h^{\mathrm{ref}}(\cdot \mid s)\right)\\
& = V_h^{\pi^k}(s) \leq V_h^\star(s).
\end{align*}
This proves the induction step at stage $h$ and hence establishes the inequalities in~(\ref{eq:klpq-ordering}) by backward induction. Since all arguments beyond the deterministic range and monotonicity properties use only $\mathcal E_1$, this completes the proof.
\end{proof}

\begin{lemma}\label{lem:klpq-geometry}
On $\mathcal E_1$, for every $k\in[K+1]$, $h\in[H]$, and $s\in\mathcal S$,
\begin{equation}
\label{eq:klpq-value-error}
(V_h^\star(s)-V_h^k(s))^2\leq e_h^k(s).
\end{equation}
Moreover, the output policy in Algorithm~\ref{alg:klpq} satisfies
\begin{equation}
\label{eq:klpq-policy-bound}
\operatorname{SubOpt}(\widehat\pi)
\leq
\eta\sum_{h=1}^H
\mathbb E_{(s,a)\sim d_h^{\pi^\star}}
\big[(Q_h^\star(s,a)-Q_h^{K+1}(s,a))^2\big].
\end{equation}
\end{lemma}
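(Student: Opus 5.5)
For the pointwise value bound~(\ref{eq:klpq-value-error}), I will work on $\mathcal E_1$ and use Lemma~\ref{lem:klpq-monotonicity}, which gives $Q_h^k\le Q_h^\star$ coordinatewise and hence $0\le V_h^\star(s)-V_h^k(s)$. Since $F_{h,s}$ is convex with gradient $G_{h,s}$, the first-order condition at $Q_h^\star(s,\cdot)$ gives
\[
F_{h,s}(Q_h^k(s,\cdot))\ \ge\ F_{h,s}(Q_h^\star(s,\cdot))+\bigl\langle \pi_h^\star(\cdot\mid s),\,Q_h^k(s,\cdot)-Q_h^\star(s,\cdot)\bigr\rangle .
\]
Rearranging, $0\le V_h^\star(s)-V_h^k(s)\le \mathbb E_{a\sim\pi_h^\star}[Q_h^\star(s,a)-Q_h^k(s,a)]$. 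The right-hand side is an average of nonnegative quantities, so squaring both sides and applying Jensen yields $(V_h^\star-V_h^k)^2\le e_h^k(s)$. This avoids the likelihood ratio $\gamma_h$ entirely, because the expectation is taken under $\pi^\star$ rather than under an intermediate Gibbs policy.

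For the suboptimality bound~(\ref{eq:klpq-policy-bound}), I will not use the performance-difference identity under $d^{\widehat\pi}$. That identity is what introduced $\Gamma_{1:h}$ in Lemma~\ref{lem:kl-geo}. Instead, I will use pessimism at the output iterate: by Lemma~\ref{lem:klpq-monotonicity} with $k=K+1$, we have $V_1^{K+1}\le V_1^{\widehat\pi}$. Hence
\[
\operatorname{SubOpt}(\widehat\pi)\le \mathbb E_{\rho}\bigl[V_1^\star-V_1^{K+1}\bigr].
\]
It remains to telescope $V_h^\star-V_h^{K+1}$ along $\pi^\star$. Write $\widehat Q=Q^{K+1}$ and $\widehat V=V^{K+1}$. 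Using $V_h^\star=\langle\pi_h^\star,Q_h^\star\rangle-\frac1\eta\mathrm{KL}(\pi_h^\star\|\pi_h^{\mathrm{ref}})$ and the Gibbs variational identity for $\widehat V_h$ evaluated at $\pi_h^\star$, I get
\[
V_h^\star(s)-\widehat V_h(s)=\bigl\langle \pi_h^\star,Q_h^\star-\widehat Q_h\bigr\rangle(s)-\Bigl[F_{h,s}(\widehat Q_h)-\langle\pi^\star_h,\widehat Q_h\rangle+\tfrac1\eta\mathrm{KL}(\pi^\star_h\|\pi_h^{\mathrm{ref}})\Bigr].
\]
The bracket equals $\frac1\eta\mathrm{KL}(\pi_h^\star\|\widehat\pi_h)$ by the Gibbs formula. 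Then I split $Q_h^\star-\widehat Q_h=\mathbb P_h(V_{h+1}^\star-\widehat V_{h+1})+\bigl(r_h+\mathbb P_h\widehat V_{h+1}-\widehat Q_h\bigr)$. The second piece is a Bellman residual, which is awkward to control directly.

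I therefore plan a cleaner route. The Gibbs identity gives $V_h^\star(s)-\widehat V_h(s)=\langle\pi_h^\star,Q_h^\star-\widehat Q_h\rangle-\frac1\eta\mathrm{KL}(\pi_h^\star\|\widehat\pi_h)$ exactly. Combining this with the exact identity $J_\eta(\pi^\star)-J_\eta(\widehat\pi)=\frac1\eta\sum_h\mathbb E_{d_h^{\widehat\pi}}[\mathrm{KL}(\widehat\pi_h\|\pi^\star_h)]$ still leaves an occupancy issue. So the preferred approach is a second-order argument under $d^{\pi^\star}$. Namely, I write $J(\pi^\star)-J(\widehat\pi)=\sum_h\mathbb E_{d_h^{\pi^\star}}\bigl[V_h^\star-\langle\pi_h^\star,Q_h^{\widehat\pi}\rangle+\tfrac1\eta\mathrm{KL}(\pi_h^\star\|\pi_h^{\mathrm{ref}})\bigr]$, i.e.\ the performance difference with the roles reversed:
\[
J(\pi^\star)-J(\widehat\pi)=\sum_h\mathbb E_{s\sim d_h^{\pi^\star}}\Bigl[\langle\pi^\star_h,Q_h^{\widehat\pi}\rangle-\tfrac1\eta\mathrm{KL}(\pi^\star_h\|\pi^{\mathrm{ref}}_h)-V_h^{\widehat\pi}\Bigr].
\]
Each summand is $F_{h,s}$-type gap $\le$ [Gibbs optimum of $Q^{\widehat\pi}$ minus value of $\widehat\pi$]. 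Using $\widehat Q\le Q^{\widehat\pi}$ and the log-sum-exp smoothness bound $F(q')-F(q)-\langle G(q),q'-q\rangle\le\frac\eta2\max\text{-var}$, I would bound the summand by $\eta\,\mathbb E_{\pi^\star}[(Q^{\widehat\pi}_h-\widehat Q_h)^2]\le\eta\, e_h^{K+1}(s)$, using $0\le Q^{\widehat\pi}-\widehat Q\le Q^\star-\widehat Q$.

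The main obstacle is exactly this last step: the smoothness remainder of $F_{h,s}$ is a variance under an intermediate Gibbs distribution, not under $\pi^\star$. I would try to exploit pessimism, namely that all intermediate points lie below $Q^\star$ coordinatewise. I would also use the exact KL representation $\frac1\eta\mathrm{KL}(\widehat\pi_h\|\cdot)$ together with the inequality $\mathrm{KL}(p\|q)\le\chi^2$ in the direction that places $\pi^\star$ as the averaging measure. This is the step I expect to absorb the factor $2$ (hence $\eta$ rather than $\eta/2$).
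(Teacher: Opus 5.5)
Your proof of the first claim is correct and is the paper's argument. Your reversed performance-difference identity under $d_h^{\pi^\star}$ is also exactly the paper's starting point for the second claim. The gap is the step you flag yourself, and it cannot be closed the way you propose. Bounding the summand by the Gibbs optimum of $Q_h^{\widehat\pi}$ discards $\frac1\eta\mathrm{KL}(\pi_h^\star\|G_{h,s}(Q_h^{\widehat\pi}))$. What remains is the Bregman remainder $F_{h,s}(Q_h^{\widehat\pi})-F_{h,s}(\widehat Q_h)-\langle\widehat\pi_h,Q_h^{\widehat\pi}-\widehat Q_h\rangle=\frac1\eta\mathrm{KL}(\widehat\pi_h\|G_{h,s}(Q_h^{\widehat\pi}))$ (state $s$ suppressed). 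Its integral form is $\eta\int_0^1(1-t)\operatorname{Var}_{G_{h,s}(q_t)}(Q_h^{\widehat\pi}-\widehat Q_h)\,dt$ with $q_t:=\widehat Q_h+t(Q_h^{\widehat\pi}-\widehat Q_h)$. The relevant likelihood ratio is $G_{h,s}(q_t)(a)/\pi_h^\star(a\mid s)=e^{-\eta(Q_h^\star-q_t)(s,a)}/Z_t$, where $Z_t:=\sum_b\pi_h^\star(b\mid s)e^{-\eta(Q_h^\star-q_t)(s,b)}$. Pessimism gives only $Z_t\le1$, which is the wrong direction. The ratio is then controlled only by $1/Z_t\le1/Z_0$, which can be of order $e^{\eta L_h}$; this is the same likelihood-ratio blow-up that produces $\gamma_h$ in Lemma~\ref{lem:kl-geo}.

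In fact, the bound you aim for on this remainder, $\eta\,\mathbb E_{\pi_h^\star}[(Q_h^{\widehat\pi}-\widehat Q_h)^2]$, is false under the constraints $\widehat Q_h\le Q_h^{\widehat\pi}\le Q_h^\star$. Take two actions with $\pi_h^\star(\cdot\mid s)=(0.99,0.01)$, $\eta(Q_h^\star-Q_h^{\widehat\pi})(s,\cdot)=(U,0)$ and $\eta(Q_h^\star-\widehat Q_h)(s,\cdot)=(U,20)$, where $U=\log(0.99\cdot10^{8})\approx18.4$; this is admissible once $\eta L_h\ge20$. Then $\mathrm{KL}(\widehat\pi_h\|G_{h,s}(Q_h^{\widehat\pi}))\approx13.8$, whereas $\eta^2\,\mathbb E_{\pi_h^\star}[(Q_h^{\widehat\pi}-\widehat Q_h)^2]=4$. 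The $\chi^2$ idea does not help either, since $\chi^2(\widehat\pi_h\|G_{h,s}(Q_h^{\widehat\pi}))$ averages under $G_{h,s}(Q_h^{\widehat\pi})$, not under $\pi_h^\star$.

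The missing idea is to avoid any second-order expansion. From the same identity, the paper uses the Gibbs formula for $\widehat\pi_h$ to rewrite the summand exactly as $\langle\pi_h^\star-\widehat\pi_h,Q_h^{\widehat\pi}-\widehat Q_h\rangle-\frac1\eta\mathrm{KL}(\pi_h^\star\|\widehat\pi_h)$. It then drops this (different) KL term and applies the one-sided first-order bound (\ref{eq:klpq-probability-difference-bound}): $\pi_h^\star(a\mid s)-\widehat\pi_h(a\mid s)\le\pi_h^\star(a\mid s)\bigl(1-e^{-\eta(Q_h^\star-\widehat Q_h)(s,a)}\bigr)\le\eta\,\pi_h^\star(a\mid s)(Q_h^\star-\widehat Q_h)(s,a)$. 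Here pessimism is used in the direction it actually provides: $Z_0\le1$ lower-bounds $\widehat\pi_h$ relative to $\pi_h^\star$. Since $Q_h^{\widehat\pi}-\widehat Q_h\ge0$, only this upper bound on $\pi_h^\star-\widehat\pi_h$ is needed. The inequality $Q_h^{\widehat\pi}-\widehat Q_h\le Q_h^\star-\widehat Q_h$ then finishes with constant exactly $\eta$. No intermediate Gibbs distribution appears, and the $\eta$ is not a doubled Taylor remainder.

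Your route can be rescued by splitting on $Z_0$. If $Z_0\ge1/2$, all intermediate ratios are at most $2$, and the remainder is at most $\eta\,\mathbb E_{\pi_h^\star}[(Q_h^{\widehat\pi}-\widehat Q_h)^2]$. If $Z_0<1/2$, then $\eta^2\,e_h^{K+1}(s)>1/4$, while the remainder is at most $\frac1\eta\log(1/Z_0)\le\sqrt{e_h^{K+1}(s)}<2\eta\,e_h^{K+1}(s)$. This yields the constant $2\eta$. That would suffice for Theorem~\ref{thm:klpq} up to the universal constant, but not for the constant stated in the lemma.
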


\begin{proof}
Work on $\mathcal E_1$. For the value error, convexity of $F_{h,s}$ at $Q_h^\star(s,\cdot)$ and the ordering $Q_h^k\leq Q_h^\star$ give
\[
0\leq V_h^\star(s)-V_h^k(s)
\leq \left\langle\pi_h^\star(\cdot\mid s),Q_h^\star(s,\cdot)-Q_h^k(s,\cdot)\right\rangle_{\mathcal A}.
\]
Squaring and applying Jensen's inequality yields
\[
(V_h^\star(s)-V_h^k(s))^2
\leq \sum_a\pi_h^\star(a\mid s)[Q_h^\star(s,a)-Q_h^k(s,a)]^2
=e_h^k(s),
\]
which proves the first claim.

It remains to prove the output-policy bound. Let $\widehat\pi=\pi^{K+1}$. By~(\ref{eq:klpq-ordering}),
\[
0\leq Q_h^{\widehat\pi}(s,a)-Q_h^{K+1}(s,a)
\leq Q_h^\star(s,a)-Q_h^{K+1}(s,a).
\]
By Bellman consistency,
\[
\begin{aligned}
V_h^\star(s)
=\left\langle\pi_h^\star(\cdot\mid s),Q_h^\star(s,\cdot)\right\rangle_{\mathcal A}
-\frac1\eta\mathrm{KL}\!\left(\pi_h^\star(\cdot\mid s)\middle\|\pi_h^{\mathrm{ref}}(\cdot\mid s)\right),\\
V_h^{\widehat\pi}(s)
=\left\langle\widehat\pi_h(\cdot\mid s),Q_h^{\widehat\pi}(s,\cdot)\right\rangle_{\mathcal A}
-\frac1\eta\mathrm{KL}\!\left(\widehat\pi_h(\cdot\mid s)\middle\|\pi_h^{\mathrm{ref}}(\cdot\mid s)\right).
\end{aligned}
\]
Subtracting the two identities and adding and subtracting
$\langle\pi_h^\star(\cdot\mid s),Q_h^{\widehat\pi}(s,\cdot)\rangle_{\mathcal A}$ yields
\begin{align*}
V_h^\star(s)-V_h^{\widehat\pi}(s)
={}&\left\langle\pi_h^\star(\cdot\mid s)-\widehat\pi_h(\cdot\mid s),Q_h^{\widehat\pi}(s,\cdot)\right\rangle_{\mathcal A}
+\left\langle\pi_h^\star(\cdot\mid s),Q_h^\star(s,\cdot)-Q_h^{\widehat\pi}(s,\cdot)\right\rangle_{\mathcal A}\\
&-\frac1\eta\mathrm{KL}\!\left(\pi_h^\star(\cdot\mid s)\middle\|\pi_h^{\mathrm{ref}}(\cdot\mid s)\right)
+\frac1\eta\mathrm{KL}\!\left(\widehat\pi_h(\cdot\mid s)\middle\|\pi_h^{\mathrm{ref}}(\cdot\mid s)\right)\\
={}&\left\langle\pi_h^\star(\cdot\mid s)-\widehat\pi_h(\cdot\mid s),Q_h^{\widehat\pi}(s,\cdot)\right\rangle_{\mathcal A}
+\left\langle\pi_h^\star(\cdot\mid s),\mathbb P_h(V_{h+1}^\star-V_{h+1}^{\widehat\pi})(s,\cdot)\right\rangle_{\mathcal A}\\
&-\frac1\eta\mathrm{KL}\!\left(\pi_h^\star(\cdot\mid s)\middle\|\pi_h^{\mathrm{ref}}(\cdot\mid s)\right)
+\frac1\eta\mathrm{KL}\!\left(\widehat\pi_h(\cdot\mid s)\middle\|\pi_h^{\mathrm{ref}}(\cdot\mid s)\right).
\end{align*}

Taking expectation of this identity with respect to $s\sim d_h^{\pi^\star}$ and summing over $h$, using
$V_{H+1}^\star=V_{H+1}^{\widehat\pi}=0$, 
gives
\begin{align*}
\operatorname{SubOpt}(\widehat\pi)
=\sum_{h=1}^H\mathbb E_{s\sim d_h^{\pi^\star}}\Bigg[
&\left\langle\pi_h^\star(\cdot\mid s)-\widehat\pi_h(\cdot\mid s),Q_h^{\widehat\pi}(s,\cdot)\right\rangle_{\mathcal A}\\
&-\frac1\eta\mathrm{KL}\!\left(\pi_h^\star(\cdot\mid s)\middle\|\pi_h^{\mathrm{ref}}(\cdot\mid s)\right)
+\frac1\eta\mathrm{KL}\!\left(\widehat\pi_h(\cdot\mid s)\middle\|\pi_h^{\mathrm{ref}}(\cdot\mid s)\right)\Bigg].
\end{align*}

The Gibbs formula for $\widehat\pi_h$ gives
\[
\log\frac{\widehat\pi_h(a\mid s)}{\pi_h^{\mathrm{ref}}(a\mid s)}
=\eta Q_h^{K+1}(s,a)
-\log\sum_b\pi_h^{\mathrm{ref}}(b\mid s)e^{\eta Q_h^{K+1}(s,b)}.
\]
Hence, we have
\begin{align*}
\frac1\eta\mathrm{KL}\!\left(\pi_h^\star(\cdot\mid s)\middle\|\widehat\pi_h(\cdot\mid s)\right)
={}&\frac1\eta\mathrm{KL}\!\left(\pi_h^\star(\cdot\mid s)\middle\|\pi_h^{\mathrm{ref}}(\cdot\mid s)\right)
-\left\langle\pi_h^\star(\cdot\mid s),Q_h^{K+1}(s,\cdot)\right\rangle_{\mathcal A}\\
&+\frac1\eta\log\sum_b\pi_h^{\mathrm{ref}}(b\mid s)e^{\eta Q_h^{K+1}(s,b)}\\
\overset{(i)}{={}}&
\frac1\eta\mathrm{KL}\!\left(\pi_h^\star(\cdot\mid s)\middle\|\pi_h^{\mathrm{ref}}(\cdot\mid s)\right)
-\frac1\eta\mathrm{KL}\!\left(\widehat\pi_h(\cdot\mid s)\middle\|\pi_h^{\mathrm{ref}}(\cdot\mid s)\right)\\
&-\left\langle\pi_h^\star(\cdot\mid s)-\widehat\pi_h(\cdot\mid s),Q_h^{K+1}(s,\cdot)\right\rangle_{\mathcal A},
\end{align*}
where (i) uses the Gibbs variational identity. Substituting this identity into the preceding expression for $\operatorname{SubOpt}(\widehat\pi)$ gives
\begin{align*}
\operatorname{SubOpt}(\widehat\pi)
&=\sum_{h=1}^H\mathbb E_{s\sim d_h^{\pi^\star}}\Bigg[
\left\langle\pi_h^\star(\cdot\mid s)-\widehat\pi_h(\cdot\mid s),
Q_h^{\widehat\pi}(s,\cdot)-Q_h^{K+1}(s,\cdot)\right\rangle_{\mathcal A}\\
&\hspace{42mm}
-\frac1\eta\mathrm{KL}\!\left(\pi_h^\star(\cdot\mid s)\middle\|\widehat\pi_h(\cdot\mid s)\right)\Bigg]\\
&\leq \sum_{h=1}^H\mathbb E_{s\sim d_h^{\pi^\star}}
\left[\left\langle\pi_h^\star(\cdot\mid s)-\widehat\pi_h(\cdot\mid s),
Q_h^{\widehat\pi}(s,\cdot)-Q_h^{K+1}(s,\cdot)\right\rangle_{\mathcal A}\right],
\end{align*}
where the inequality uses the nonnegativity of KL divergence.

For every action with $\pi_h^\star(a\mid s)>0$, the two Gibbs formulas give
\begin{align}
\pi_h^\star(a\mid s)-\widehat\pi_h(a\mid s)
&=\pi_h^\star(a\mid s)\left(1-
\frac{e^{-\eta[Q_h^\star(s,a)-Q_h^{K+1}(s,a)]}}
{\sum_b\pi_h^\star(b\mid s)e^{-\eta[Q_h^\star(s,b)-Q_h^{K+1}(s,b)]}}\right)\notag\\
&\overset{(i)}{\leq}\pi_h^\star(a\mid s)
\left(1-e^{-\eta[Q_h^\star(s,a)-Q_h^{K+1}(s,a)]}\right)\notag\\
&\overset{(ii)}{\leq}\eta\pi_h^\star(a\mid s)
[Q_h^\star(s,a)-Q_h^{K+1}(s,a)].
\label{eq:klpq-probability-difference-bound}
\end{align}
Here (i) uses $Q_h^{K+1}\leq Q_h^\star$, which implies $\sum_b\pi_h^\star(b\mid s)e^{-\eta[Q_h^\star(s,b)-Q_h^{K+1}(s,b)]}\leq 1$, and (ii) uses $1-e^{-v}\leq v$ for $v\geq0$. For actions outside the common support, both probabilities are zero, so~(\ref{eq:klpq-probability-difference-bound}) holds for every action. Since $Q_h^{\widehat\pi}(s,a)-Q_h^{K+1}(s,a)\geq0$, applying~(\ref{eq:klpq-probability-difference-bound}) yields
\begin{align*}
\operatorname{SubOpt}(\widehat\pi)
&\leq \eta\sum_{h=1}^H\mathbb E_{s\sim d_h^{\pi^\star}}
\sum_a\pi_h^\star(a\mid s)[Q_h^\star(s,a)-Q_h^{K+1}(s,a)]
[Q_h^{\widehat\pi}(s,a)-Q_h^{K+1}(s,a)]\\
&\leq \eta\sum_{h=1}^H\mathbb E_{s\sim d_h^{\pi^\star}}
\sum_a\pi_h^\star(a\mid s)[Q_h^\star(s,a)-Q_h^{K+1}(s,a)]^2\\
&=\eta\sum_{h=1}^H\mathbb E_{(s,a)\sim d_h^{\pi^\star}}
\big[(Q_h^\star(s,a)-Q_h^{K+1}(s,a))^2\big],
\end{align*}
where the second inequality uses
$Q_h^{\widehat\pi}(s,a)-Q_h^{K+1}(s,a)
\leq Q_h^\star(s,a)-Q_h^{K+1}(s,a)$ from~(\ref{eq:klpq-ordering}),
and the last equality follows from
$d_h^{\pi^\star}(s,a)=d_h^{\pi^\star}(s)\pi_h^\star(a\mid s)$.
This proves the second claim.
\end{proof}

\begin{lemma}\label{lem:klpq-pointwise}
On $\mathcal E_1$, simultaneously for every $k \in [K + 1]$, $h \in [H]$, and $(s,a) \in \mathcal S \times \mathcal A$, writing $N := N_h^k(s,a)$,
$$
\begin{aligned}
(\Delta_h^k(s,a))^2 \leq {}&L_h^2\mathbf1\{N = 0\} + cH^2L_{h + 1}^2\iota^2\frac{\mathbf1\{N \geq 1\}}{N \vee 1}\\
& + \left(1 + \frac1H\right)\sum_{n = 1}^N\alpha_n^N(\mathbb P_he_{h + 1}^{k_{h,n}(s,a)})(s,a),
\end{aligned}
$$
where $c > 0$ is a universal constant.
\end{lemma}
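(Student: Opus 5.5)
The argument mirrors Lemma~\ref{lem:pointwise-recursion}, with two changes: the bonus now sits inside the unrolled update, and the max update means $Q_h^k$ is no longer a single unrolled iterate. We work on $\mathcal E_1$. Fix $(k,h,s,a)$ and write $N:=N_h^k(s,a)$.

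\textbf{Case $N=0$.} The coordinate has never been updated, so $Q_h^k(s,a)=0$. Then $(\Delta_h^k(s,a))^2\le L_h^2$ by~(\ref{eq:soft-optimal-range}).

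\textbf{Case $N\ge1$: reduce to $\widetilde Q_h^k$.} By Lemma~\ref{lem:klpq-deterministic}, $Q_h^k(s,a)=\max_{j\le k}\widetilde Q_h^j(s,a)\ge \widetilde Q_h^k(s,a)$. By Lemma~\ref{lem:klpq-monotonicity}, $Q_h^k(s,a)\le Q_h^\star(s,a)$. Together these give
\[
0\le Q_h^\star(s,a)-Q_h^k(s,a)\le Q_h^\star(s,a)-\widetilde Q_h^k(s,a),
\]
so it suffices to bound $Q_h^\star(s,a)-\widetilde Q_h^k(s,a)$, which is nonnegative.

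\textbf{Unrolling.} Unroll $\widetilde Q_h^k(s,a)$ as in~(\ref{eq:klpq-tilde-unrolled}), using $\alpha_0^N=0$. Subtract the Bellman equation for $Q_h^\star$ written with the same weights $\alpha_n^N$. This gives
\[
Q_h^\star-\widetilde Q_h^k=\sum_n\alpha_n^N b_{h,n}-\sum_n\alpha_n^N(\widehat{\mathbb P}_h^{k_{h,n}}-\mathbb P_{h,s,a})V_{h+1}^{k_{h,n}}+\sum_n\alpha_n^N\mathbb P_{h,s,a}(V_{h+1}^\star-V_{h+1}^{k_{h,n}}).
\]
Call the first two terms $x$ and the last term $y$.

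\textbf{Bounding $x$.} By~(\ref{eq:klpq-accumulated-bonus-bounds}) and~(\ref{eq:klpq-noise-domination}),
\[
|x|\le 3c_bL_{h+1}\iota\sqrt{H/N}.
\]

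\textbf{Bounding $y$.} Lemma~\ref{lem:klpq-monotonicity} gives $y\ge0$. Apply Jensen's inequality, first over the weights $\alpha_n^N$ (which sum to one by Lemma~\ref{lem:historical-weights}) and then over $\mathbb P_{h,s,a}$. Combined with~(\ref{eq:klpq-value-error}) at stage $h+1$, this yields
\[
y^2\le\sum_n\alpha_n^N(\mathbb P_he_{h+1}^{k_{h,n}})(s,a).
\]
This step requires~(\ref{eq:klpq-value-error}) to hold at every historical index $k_{h,n}<k$. That is fine, because Lemma~\ref{lem:klpq-geometry} holds simultaneously for all $k\in[K+1]$ on $\mathcal E_1$. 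At $h=H$ both sides vanish.

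\textbf{Combining.} Since $Q_h^\star-\widetilde Q_h^k\ge0$, we can square it. Using $(x+y)^2\le(H+1)x^2+(1+1/H)y^2$,
\[
(\Delta_h^k(s,a))^2\le(H+1)\cdot 9c_b^2L_{h+1}^2\iota^2H/N+(1+1/H)\,y^2.
\]
The first term is at most $cH^2L_{h+1}^2\iota^2/N$, which gives the claimed bound.

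\textbf{Main obstacle.} The only delicate point is the reduction from $Q_h^k$ to $\widetilde Q_h^k$. It relies on pessimism, $Q_h^k\le Q_h^\star$, rather than on an unrolling of $Q_h^k$ itself, because the max update has no closed-form unrolling. One must also check that $Q_h^\star-\widetilde Q_h^k$ is nonnegative before squaring, so that its square dominates $(\Delta_h^k)^2$. This holds because $\widetilde Q_h^k\le Q_h^k\le Q_h^\star$.
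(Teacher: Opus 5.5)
Your proof is correct and follows essentially the same route as the paper: you reduce to $Q_h^\star-\widetilde Q_h^k$ via the max identity and pessimism, unroll with the same weights, control the bonus-plus-noise part using~(\ref{eq:klpq-accumulated-bonus-bounds}) and~(\ref{eq:klpq-noise-domination}), and close with $(x+y)^2\le(H+1)x^2+(1+1/H)y^2$, Jensen, and~(\ref{eq:klpq-value-error}). The only difference is cosmetic: the paper bounds the bonus-plus-noise term one-sidedly by $2\sum_n\alpha_n^Nb_{h,n}$ rather than in absolute value, which affects only the universal constant.
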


\begin{proof}
Fix $k,h,s,a$, let $N := N_h^k(s,a)$, and suppress the dependence of $k_{h,n}$ on $(s,a)$. If $N = 0$, then $Q_h^k(s,a) = 0$ and $(\Delta_h^k(s,a))^2 \leq L_h^2$ by~(\ref{eq:soft-optimal-range}).

Suppose $N \geq 1$. Subtracting~(\ref{eq:klpq-tilde-unrolled}) with $j = k$ from the optimal Bellman equation gives
\begin{align*}
Q_h^\star(s,a) - \widetilde Q_h^k(s,a) = {}&\sum_{n = 1}^N\alpha_n^N\mathbb P_{h,s,a}(V_{h + 1}^\star - V_{h + 1}^{k_{h,n}}) + \sum_{n = 1}^N\alpha_n^Nb_{h,n}\\
&-\sum_{n = 1}^N\alpha_n^N(\widehat{\mathbb P}_h^{k_{h,n}} - \mathbb P_{h,s,a})V_{h + 1}^{k_{h,n}},
\end{align*}
since $\alpha_0^N = 0$. By~(\ref{eq:klpq-ordering}) and~(\ref{eq:klpq-noise-domination}),
\begin{align*}
0 \leq Q_h^\star(s,a) - Q_h^k(s,a)
& \leq Q_h^\star(s,a) - \widetilde Q_h^k(s,a)\\
& \leq \sum_{n = 1}^N\alpha_n^N\mathbb P_{h,s,a}(V_{h + 1}^\star - V_{h + 1}^{k_{h,n}}) + 2\sum_{n = 1}^N\alpha_n^Nb_{h,n}.
\end{align*}
Consequently,
\begin{align*}
(\Delta_h^k(s,a))^2
& \leq \left(1 + \frac1H\right)\left(\sum_{n = 1}^N\alpha_n^N\mathbb P_{h,s,a}(V_{h + 1}^\star - V_{h + 1}^{k_{h,n}})\right)^2 + 4(H + 1)\left(\sum_{n = 1}^N\alpha_n^Nb_{h,n}\right)^2\\
& \leq \left(1 + \frac1H\right)\sum_{n = 1}^N\alpha_n^N\mathbb P_{h,s,a}\!\left[(V_{h + 1}^\star - V_{h + 1}^{k_{h,n}})^2\right] + \frac{32c_b^2H^2L_{h + 1}^2\iota^2}{N}\\
& \leq \left(1 + \frac1H\right)\sum_{n = 1}^N\alpha_n^N(\mathbb P_he_{h + 1}^{k_{h,n}})(s,a) + \frac{cH^2L_{h + 1}^2\iota^2}{N}.
\end{align*}
The first inequality uses $(x + y)^2 \leq (1 + 1/H)x^2 + (H + 1)y^2$. The second follows from Jensen's inequality, first with respect to the weights $\{\alpha_n^N\}$ and then with respect to $\mathbb P_h(\cdot \mid s,a)$, together with the upper bound in~(\ref{eq:klpq-accumulated-bonus-bounds}). The last inequality follows from~(\ref{eq:klpq-value-error}), after enlarging the universal constant $c$ to dominate $32c_b^2$. Together with the zero-count case, this proves the lemma.
\end{proof}

We now introduce a common event for the proof of Theorem~\ref{thm:klpq}: If $SAT > 1$, apply Lemma~\ref{lem:count-control} with failure probability $\delta/4$ and denote its event by $\mathcal E_2$, if $SAT = 1$, let $\mathcal E_2$ be the sure event. Apply Lemma~\ref{lem:occupancy-freedman} twice with failure probability $\delta/4$, using
$$
U_h^k(s,a) = \sum_{n = 1}^{N_h^k(s,a)}\alpha_n^{N_h^k(s,a)}(\mathbb P_he_{h + 1}^{k_{h,n}(s,a)})(s,a) \quad\text{and}\quad U_h^k(s,a) = (\mathbb P_he_{h + 1}^k)(s,a),
$$
respectively, with $e_h^k$ defined as in Section~\ref{sec:analysis-klq} for Algorithm~\ref{alg:klpq}. The deterministic bounds and predictability above ensure that both choices of $U_h^k(s,a)$ with $B_h = L_{h + 1}^2$ satisfy the conditions of Lemma~\ref{lem:occupancy-freedman}. Denote the two events by $\mathcal E_3$ and $\mathcal E_4$, and set
$$
\mathcal E := \mathcal E_1 \cap \mathcal E_2 \cap \mathcal E_3 \cap \mathcal E_4, \qquad \Pr(\mathcal E) \geq 1 - \delta.
$$
All four events are defined for the same iterates generated using the original bonus in~(\ref{eq:klpq-bonus}).

\subsection{Proof of Theorem~\ref{thm:klpq}}

\begin{proof}
If $SAT = 1$, the output is the unique policy, so $\operatorname{SubOpt}(\widehat\pi) = 0$. Hence suppose $SAT > 1$.
Work on the common event $\mathcal E$ defined above, which has probability at least $1-\delta$. Starting from Lemma~\ref{lem:klpq-pointwise}, we repeat the
argument used in the proof of Lemma~\ref{lem:klq-squared-error}.
The count-control step, the two applications of
Lemma~\ref{lem:occupancy-freedman}, and the reindexing argument are unchanged, up to universal constants, because the required boundedness and
predictability conditions are verified in Appendix~\ref{app:support-klpq}.
The only difference is that the propagated error term in
Lemma~\ref{lem:klpq-pointwise} has no factor $\gamma_{h+1}$.
Therefore, the same calculation yields, for every $h\in[H]$,
\[
\sum_{k=1}^K\sum_s d_h^{\pi^\star}(s)e_h^k(s)
\leq
\left(1+\frac5H\right)
\sum_{k=1}^K\sum_s d_{h+1}^{\pi^\star}(s)e_{h+1}^k(s)
+
c_0SA C_\eta^\star H^2L_h^2\iota^3(1+\log K).
\]
Applying the same backward iteration argument as in the proof
of Lemma~\ref{lem:klq-squared-error}, now without the products of
$\gamma_j$, and using $e_{H+1}^k\equiv0$ together with
\[
\left(1+\frac5H\right)^H\leq e^5,
\qquad
\sum_{t=h}^H L_t^2\leq L_h^3,
\]
gives
\begin{equation*}
\sum_{k=1}^K
\mathbb E_{(s,a)\sim d_h^{\pi^\star}}
\big[(Q_h^k(s,a)-Q_h^\star(s,a))^2\big]
\leq
c_1SA C_\eta^\star
H^2L_h^3\iota^3(1+\log K),
\end{equation*}
which gives the desired accumulated squared Q-estimation error bound.

By Line~\ref{line:klpq-max-update}, $Q_h^{k + 1} \geq Q_h^k$ coordinatewise, while the ordering~(\ref{eq:klpq-ordering}) gives $Q_h^k \leq Q_h^\star$. Therefore, for every $k \in [K]$ and $(s,a) \in \mathcal S \times \mathcal A$,
$$
0 \leq Q_h^\star(s,a) - Q_h^{K + 1}(s,a) \leq Q_h^\star(s,a) - Q_h^k(s,a).
$$
Squaring, averaging first with respect to $d_h^{\pi^\star}$, and then averaging over $k \in [K]$ gives
\begin{align*}
\mathbb E_{(s,a) \sim d_h^{\pi^\star}}\big[(Q_h^{K + 1}(s,a) - Q_h^\star(s,a))^2\big]
& \leq \frac1K\sum_{k = 1}^K\mathbb E_{(s,a) \sim d_h^{\pi^\star}}\big[(Q_h^k(s,a) - Q_h^\star(s,a))^2\big]\\
& \leq c_1\frac{SA C_\eta^\star}{K}H^2L_h^3\iota^3(1 + \log K).
\end{align*}
Substituting this into~(\ref{eq:klpq-policy-bound}) gives
\begin{align*}
\operatorname{SubOpt}(\widehat\pi)
& \leq c\frac{\eta SA C_\eta^\star H^2\iota^3(1 + \log K)}K\sum_{h = 1}^HL_h^3\\
& \leq c\frac{\eta H^6SA C_\eta^\star}{K}\iota^3(1 + \log K).
\end{align*}
The second inequality uses $\sum_{h = 1}^HL_h^3 \leq H^4$. This proves Theorem~\ref{thm:klpq}.
\end{proof}

\stopcontents[appendices]

\newpage
\bibliography{iclr2027_conference}
\bibliographystyle{iclr2027_conference}

\end{document}

%% file: math_commands.tex
\usepackage{amsmath,amsfonts,bm}

\def\eqref#1{equation~\ref{#1}}

\def\1{\bm{1}}

\DeclareMathAlphabet{\mathsfit}{\encodingdefault}{\sfdefault}{m}{sl}
\SetMathAlphabet{\mathsfit}{bold}{\encodingdefault}{\sfdefault}{bx}{n}



%% file: iclr2027_conference.bib
@article{ye2023corruption,
  title={Corruption-robust offline reinforcement learning with general function approximation},
  author={Ye, Chenlu and Yang, Rui and Gu, Quanquan and Zhang, Tong},
  journal={Advances in Neural Information Processing Systems},
  volume={36},
  pages={36208--36221},
  year={2023}
}

@article{xiong2022nearly,
  title={Nearly minimax optimal offline reinforcement learning with linear function approximation: Single-agent mdp and markov game},
  author={Xiong, Wei and Zhong, Han and Shi, Chengshuai and Shen, Cong and Wang, Liwei and Zhang, Tong},
  journal={arXiv preprint arXiv:2205.15512},
  year={2022}
}

@article{qiao2026model,
  title={Model-based offline rl via robust value-aware model learning with implicitly differentiable adaptive weighting},
  author={Qiao, Zhongjian and Lyu, Jiafei and Lyu, Boxiang and Shu, Yao and Gao, Siyang and Qiu, Shuang},
  journal={arXiv preprint arXiv:2603.08118},
  year={2026}
}

@inproceedings{shi2022pessimistic,
  title={Pessimistic q-learning for offline reinforcement learning: Towards optimal sample complexity},
  author={Shi, Laixi and Li, Gen and Wei, Yuting and Chen, Yuxin and Chi, Yuejie},
  booktitle={International conference on machine learning},
  pages={19967--20025},
  year={2022},
  organization={PMLR}
}

@article{xie2021bellman,
  title={Bellman-consistent pessimism for offline reinforcement learning},
  author={Xie, Tengyang and Cheng, Ching-An and Jiang, Nan and Mineiro, Paul and Agarwal, Alekh},
  journal={Advances in neural information processing systems},
  volume={34},
  pages={6683--6694},
  year={2021}
}

@article{jin2018q,
  title={Is Q-learning provably efficient?},
  author={Jin, Chi and Allen-Zhu, Zeyuan and Bubeck, Sebastien and Jordan, Michael},
  journal={Advances in neural information processing systems},
  volume={31},
  year={2018}
}

@inproceedings{zhang24aq,
  title =  {Pessimism Meets Risk: Risk-Sensitive Offline Reinforcement Learning},
  author = {Zhang, Dake and Lyu, Boxiang and Qiu, Shuang and Kolar, Mladen and Zhang, Tong},
  booktitle = {Proceedings of the 41st International Conference on Machine Learning},
  pages =  {59459--59489},
  year =  {2024}
}

@article{xie2021policy,
  title={Policy finetuning: Bridging sample-efficient offline and online reinforcement learning},
  author={Xie, Tengyang and Jiang, Nan and Wang, Huan and Xiong, Caiming and Bai, Yu},
  journal={Advances in neural information processing systems},
  volume={34},
  pages={27395--27407},
  year={2021}
}

@article{li2021breaking,
  title={Breaking the sample complexity barrier to regret-optimal model-free reinforcement learning},
  author={Li, Gen and Shi, Laixi and Chen, Yuxin and Gu, Yuantao and Chi, Yuejie},
  journal={Advances in Neural Information Processing Systems},
  volume={34},
  pages={17762--17776},
  year={2021}
}

@article{ji2026optimal,
  title={On the Optimal Sample Complexity of Offline Multi-Armed Bandits with KL Regularization},
  author={Ji, Kaixuan and Di, Qiwei and Zhao, Heyang and Zhao, Qingyue and Gu, Quanquan},
  journal={arXiv preprint arXiv:2605.02141},
  year={2026}
}

@inproceedings{
zhao2026towards,
title={Towards a Sharp Analysis of Offline Policy Learning for $f$-Divergence-Regularized Contextual Bandits},
author={Qingyue Zhao and Kaixuan Ji and Heyang Zhao and Tong Zhang and Quanquan Gu},
booktitle={The Fourteenth International Conference on Learning Representations},
year={2026},
}

@inproceedings{chen2026offline,
title={Offline Two-Player Zero-Sum Markov Games with {KL} Regularization},
author={Claire Chen and Yuheng Zhang and Xinyu Liu and Zixuan Xie and Shuze Daniel Liu and Nan Jiang},
booktitle={Forty-third International Conference on Machine Learning},
year={2026},
}

@inproceedings{
zhao2025sharp,
title={Sharp Analysis for {KL}-Regularized Contextual Bandits and {RLHF}},
author={Heyang Zhao and Chenlu Ye and Quanquan Gu and Tong Zhang},
booktitle={The Thirty-ninth Annual Conference on Neural Information Processing Systems},
year={2025},
}

@article{zhao2025logarithmic,
  title={Logarithmic regret for online KL-regularized reinforcement learning},
  author={Zhao, Heyang and Ye, Chenlu and Xiong, Wei and Gu, Quanquan and Zhang, Tong},
  journal={arXiv preprint arXiv:2502.07460},
  year={2025}
}

@article{zhang2026beyond,
  title={Beyond pessimism: Offline learning in kl-regularized games},
  author={Zhang, Yuheng and Chen, Claire and Jiang, Nan},
  journal={arXiv preprint arXiv:2604.06738},
  year={2026}
}

@article{freedman1975tail,
  title={On tail probabilities for martingales},
  author={Freedman, David A},
  journal={the Annals of Probability},
  pages={100--118},
  year={1975},
  publisher={JSTOR}
}

@inproceedings{geist2019theory,
  title={A theory of regularized markov decision processes},
  author={Geist, Matthieu and Scherrer, Bruno and Pietquin, Olivier},
  booktitle={International conference on machine learning},
  pages={2160--2169},
  year={2019},
  organization={PMLR}
}

@article{rashidinejad2021bridging,
  title={Bridging offline reinforcement learning and imitation learning: A tale of pessimism},
  author={Rashidinejad, Paria and Zhu, Banghua and Ma, Cong and Jiao, Jiantao and Russell, Stuart},
  journal={Advances in Neural Information Processing Systems},
  volume={34},
  pages={11702--11716},
  year={2021}
}

@article{watkins1992q,
  title={Q-learning},
  author={Watkins, Christopher JCH and Dayan, Peter},
  journal={Machine learning},
  volume={8},
  number={3},
  pages={279--292},
  year={1992},
  publisher={Springer}
}

@article{levine2020offline,
  title={Offline reinforcement learning: Tutorial, review, and perspectives on open problems},
  author={Levine, Sergey and Kumar, Aviral and Tucker, George and Fu, Justin},
  journal={arXiv preprint arXiv:2005.01643},
  year={2020}
}

@article{yan2023efficacy,
  title={The efficacy of pessimism in asynchronous Q-learning},
  author={Yan, Yuling and Li, Gen and Chen, Yuxin and Fan, Jianqing},
  journal={IEEE Transactions on Information Theory},
  volume={69},
  number={11},
  pages={7185--7219},
  year={2023},
  publisher={IEEE}
}

@inproceedings{tiapkin2023fast,
  title={Fast rates for maximum entropy exploration},
  author={Tiapkin, Daniil and Belomestny, Denis and Calandriello, Daniele and Moulines, Eric and Munos, Remi and Naumov, Alexey and Perrault, Pierre and Tang, Yunhao and Valko, Michal and Menard, Pierre},
  booktitle={International Conference on Machine Learning},
  pages={34161--34221},
  year={2023},
  organization={PMLR}
}

@article{ji2026near,
  title={Near-optimal regret for kl-regularized multi-armed bandits},
  author={Ji, Kaixuan and Zhao, Qingyue and Zhao, Heyang and Di, Qiwei and Gu, Quanquan},
  journal={arXiv preprint arXiv:2603.02155},
  year={2026}
}

@article{chen2026pessimism,
  title={Pessimism-free offline learning in general-sum games via kl regularization},
  author={Chen, Claire and Zhang, Yuheng},
  journal={arXiv preprint arXiv:2605.00264},
  year={2026}
}

@article{chen2026fast,
  title={Fast Rates in {$\alpha$}-Potential Games via Regularized Mirror Descent},
  author={Chen, Claire and Zhang, Yuheng},
  journal={arXiv preprint arXiv:2605.00268},
  year={2026}
}

@article{zhao2026fast,
  title={Fast Rates for Offline Contextual Bandits with Forward-KL Regularization under Single-Policy Concentrability},
  author={Zhao, Qingyue and Ji, Kaixuan and Zhao, Heyang and Gu, Quanquan},
  journal={arXiv preprint arXiv:2605.09214},
  year={2026}
}

@article{uehara2023offline,
  title={Offline minimax soft-q-learning under realizability and partial coverage},
  author={Uehara, Masatoshi and Kallus, Nathan and Lee, Jason D and Sun, Wen},
  journal={Advances in Neural Information Processing Systems},
  volume={36},
  pages={12797--12809},
  year={2023}
}

@inproceedings{
tiapkin2024demonstrationregularized,
title={Demonstration-Regularized {RL}},
author={Daniil Tiapkin and Denis Belomestny and Daniele Calandriello and Eric Moulines and Alexey Naumov and Pierre Perrault and Michal Valko and Pierre Menard},
booktitle={The Twelfth International Conference on Learning Representations},
year={2024}
}

@article{xiong2023iterative,
  title={Iterative preference learning from human feedback: Bridging theory and practice for rlhf under kl-constraint},
  author={Xiong, Wei and Dong, Hanze and Ye, Chenlu and Wang, Ziqi and Zhong, Han and Ji, Heng and Jiang, Nan and Zhang, Tong},
  journal={arXiv preprint arXiv:2312.11456},
  year={2023}
}

@article{nayak2025achieving,
  title={Achieving logarithmic regret in kl-regularized zero-sum markov games},
  author={Nayak, Anupam and Yang, Tong and Yagan, Osman and Joshi, Gauri and Chi, Yuejie},
  journal={arXiv preprint arXiv:2510.13060},
  year={2025}
}

@inproceedings{
wu2025greedy,
title={Greedy Sampling Is Provably Efficient For {RLHF}},
author={Di Wu and Chengshuai Shi and Jing Yang and Cong Shen},
booktitle={The Thirty-ninth Annual Conference on Neural Information Processing Systems},
year={2025},
}

@article{hong2026online,
  title={Online KL-Regularized Reinforcement Learning with Function Approximation under Misspecification},
  author={Hong, Haoyang and Wang, Zichen and Gu, Quanquan and Wang, Huazheng},
  journal={arXiv preprint arXiv:2606.06053},
  year={2026}
}

@inproceedings{jin2021pessimism,
  title={Is pessimism provably efficient for offline rl?},
  author={Jin, Ying and Yang, Zhuoran and Wang, Zhaoran},
  booktitle={International conference on machine learning},
  pages={5084--5096},
  year={2021},
  organization={PMLR}
}

@article{uehara2021pessimistic,
  title={Pessimistic model-based offline reinforcement learning under partial coverage},
  author={Uehara, Masatoshi and Sun, Wen},
  journal={arXiv preprint arXiv:2107.06226},
  year={2021}
}

@inproceedings{zhan2022offline,
  title={Offline reinforcement learning with realizability and single-policy concentrability},
  author={Zhan, Wenhao and Huang, Baihe and Huang, Audrey and Jiang, Nan and Lee, Jason},
  booktitle={Conference on Learning Theory},
  pages={2730--2775},
  year={2022},
  organization={PMLR}
}

@article{even2003learning,
  title={Learning rates for Q-learning},
  author={Even-Dar, Eyal and Mansour, Yishay},
  journal={Journal of machine learning Research},
  volume={5},
  number={Dec},
  pages={1--25},
  year={2003}
}

@inproceedings{qu2020finite,
  title={Finite-time analysis of asynchronous stochastic approximation and $ Q $-learning},
  author={Qu, Guannan and Wierman, Adam},
  booktitle={Conference on learning theory},
  pages={3185--3205},
  year={2020},
  organization={PMLR}
}

@article{li2020sample,
  title={Sample complexity of asynchronous Q-learning: Sharper analysis and variance reduction},
  author={Li, Gen and Wei, Yuting and Chi, Yuejie and Gu, Yuantao and Chen, Yuxin},
  journal={Advances in neural information processing systems},
  volume={33},
  pages={7031--7043},
  year={2020}
}

@inproceedings{fujimoto2019off,
  title={Off-policy deep reinforcement learning without exploration},
  author={Fujimoto, Scott and Meger, David and Precup, Doina},
  booktitle={International conference on machine learning},
  pages={2052--2062},
  year={2019},
  organization={PMLR}
}

@article{kumar2020conservative,
  title={Conservative q-learning for offline reinforcement learning},
  author={Kumar, Aviral and Zhou, Aurick and Tucker, George and Levine, Sergey},
  journal={Advances in neural information processing systems},
  volume={33},
  pages={1179--1191},
  year={2020}
}

@article{wu2019behavior,
  title={Behavior regularized offline reinforcement learning},
  author={Wu, Yifan and Tucker, George and Nachum, Ofir},
  journal={arXiv preprint arXiv:1911.11361},
  year={2019}
}

@article{ouyang2022training,
  title={Training language models to follow instructions with human feedback},
  author={Ouyang, Long and Wu, Jeffrey and Jiang, Xu and Almeida, Diogo and Wainwright, Carroll and Mishkin, Pamela and Zhang, Chong and Agarwal, Sandhini and Slama, Katarina and Ray, Alex and others},
  journal={Advances in neural information processing systems},
  volume={35},
  pages={27730--27744},
  year={2022}
}

@article{rafailov2023direct,
  title={Direct preference optimization: Your language model is secretly a reward model},
  author={Rafailov, Rafael and Sharma, Archit and Mitchell, Eric and Manning, Christopher D and Ermon, Stefano and Finn, Chelsea},
  journal={Advances in neural information processing systems},
  volume={36},
  pages={53728--53741},
  year={2023}
}

@inproceedings{
liu2025linear,
title={Linear {$Q$}-Learning Does Not Diverge in {$L^2$}: Convergence Rates to a Bounded Set},
author={Xinyu Liu and Zixuan Xie and Shangtong Zhang},
booktitle={Forty-second International Conference on Machine Learning},
year={2025},
}

@article{li2024q,
  title={Is Q-learning minimax optimal? a tight sample complexity analysis},
  author={Li, Gen and Cai, Changxiao and Chen, Yuxin and Wei, Yuting and Chi, Yuejie},
  journal={Operations Research},
  volume={72},
  number={1},
  pages={222--236},
  year={2024},
  publisher={INFORMS}
}

@inproceedings{li2021tightening,
  title={Tightening the dependence on horizon in the sample complexity of Q-learning},
  author={Li, Gen and Cai, Changxiao and Chen, Yuxin and Gu, Yuantao and Wei, Yuting and Chi, Yuejie},
  booktitle={International Conference on Machine Learning},
  pages={6296--6306},
  year={2021},
  organization={PMLR}
}

@article{zhang2021finite,
  title={Finite Sample Analysis of Average-Reward TD Learning and $ Q $-Learning},
  author={Zhang, Sheng and Zhang, Zhe and Maguluri, Siva Theja},
  journal={Advances in Neural Information Processing Systems},
  volume={34},
  pages={1230--1242},
  year={2021}
}

@article{jiao2026sample,
  title={Sample Complexity of Average-Reward Q-Learning: From Single-agent to Federated Reinforcement Learning},
  author={Jiao, Yuchen and Woo, Jiin and Li, Gen and Joshi, Gauri and Chi, Yuejie},
  journal={arXiv preprint arXiv:2601.13642},
  year={2026}
}

@inproceedings{jeong2024finite,
  title={Finite-time error analysis of soft q-learning: Switching system approach},
  author={Jeong, Narim and Lee, Donghwan},
  booktitle={2024 IEEE 63rd Conference on Decision and Control (CDC)},
  pages={3897--3904},
  year={2024},
  organization={IEEE}
}

@article{jeong2025unified,
  title={Unified finite-time error analysis of soft Q-learning},
  author={Jeong, Narim and Lee, Donghwan},
  journal={Neurocomputing},
  volume={626},
  pages={129582},
  year={2025},
  publisher={Elsevier}
}

@article{xi2025regularized,
  title={Regularized Q-learning with linear function approximation},
  author={Xi, Jiachen and Garcia, Alfredo and Mom{\v{c}}ilovi{\'c}, Petar},
  journal={IEEE Transactions on Automatic Control},
  volume={71},
  number={1},
  pages={504--511},
  year={2025},
  publisher={IEEE}
}
